\documentclass[11pt]{article}

\usepackage[letterpaper,margin=1in]{geometry}
\usepackage{lmodern}
\PassOptionsToPackage{numbers, sort&compress}{natbib}

\makeatletter
\renewcommand{\@maketitle}{%
  \newpage\null
  \vskip 1em%
  \begin{center}%
    \let\footnote\thanks%
    {\LARGE\bfseries \@title \par}%
    \vskip 1.5em%
    {\large
      \def\And{%
        \end{tabular}\hfil\linebreak[0]\hfil%
        \begin{tabular}[t]{c}\ignorespaces%
      }%
      \def\AND{%
        \end{tabular}\hfil\linebreak[4]\hfil%
        \begin{tabular}[t]{c}\ignorespaces%
      }%
      \begin{tabular}[t]{c}\@author\end{tabular}\par}%
  \end{center}%
  \par\vskip 1.5em}
\makeatother

\makeatletter
\newcommand{\thanksnomark}[1]{%
  \g@addto@macro\@thanks{%
    \begingroup
      \renewcommand{\@makefnmark}{}%
      \long\def\@makefntext##1{\noindent ##1}%
      \footnotetext{#1}%
    \endgroup
  }%
}
\makeatother
\providecommand{\And}{\quad}
\providecommand{\AND}{\quad}

\usepackage[utf8]{inputenc} 
\usepackage[T1]{fontenc}    
\usepackage{hyperref}       
\usepackage{url}            
\usepackage{booktabs}       
\usepackage{amsfonts}       
\usepackage{nicefrac}       
\usepackage{microtype}      

\usepackage{graphicx}
\usepackage[utf8]{inputenc} 
\usepackage[T1]{fontenc}    
\usepackage{hyperref}
\usepackage{url}
\usepackage{array}
\hypersetup{
  colorlinks   = true, 
  urlcolor     = [rgb]{0,0,1}, 
  linkcolor    = [rgb]{0,0,0.5}, 
  citecolor   = [rgb]{0,0,0.5} 
}

\usepackage{color}
\usepackage{xcolor}
\definecolor{lightgrayish}{gray}{0.8}
  {\endgroup}

\usepackage{url}            
\usepackage{booktabs}       
\usepackage{amsfonts}       
\usepackage{nicefrac}       
\usepackage{microtype}      
\usepackage{xcolor}         
\usepackage{multicol,multirow}
\usepackage{here}
\usepackage{subcaption}
\usepackage{amsmath,amssymb,amsthm,mathtools}
\usepackage{bm}
\usepackage{wrapfig} 
\restylefloat{figure}
\usepackage{multicol} 
\usepackage{cleveref}
\usepackage{natbib}

\makeatother
\theoremstyle{plain}
\newtheorem{theorem}{Theorem}[section]
\newtheorem{proposition}[theorem]{Proposition}
\newtheorem{lemma}[theorem]{Lemma}

\theoremstyle{definition}

\newtheorem{assumption}[theorem]{Assumption}
\theoremstyle{remark}
\newtheorem{remark}[theorem]{Remark}
\usepackage{tikz}
\usepackage{scalerel}

\title{Continual Learning in Modern Hopfield Networks with an Application to Diffusion Models}

\author{%
  Ken Takeda\textsuperscript{1,2}%
  \thanksnomark{\textsuperscript{1}Graduate School of Arts and Science, The University of Tokyo, Tokyo, Japan}%
  \thanksnomark{\textsuperscript{2}Artificial Intelligence Research Center, AIST, Tokyo, Japan}%
  \hspace{2em}%
  Masafumi Oizumi\textsuperscript{1}%
  \hspace{2em}%
  Ryo Karakida\textsuperscript{2, *}%
  \thanksnomark{\textsuperscript{*}Corresponding author, Email: \texttt{karakida.ryo@aist.go.jp}}%
}

\begin{document}

\maketitle

\begin{abstract}
Generative models, including diffusion models, are increasingly used as foundation models and adapted through sequential fine-tuning, making continual learning an essential problem setting. However, continual learning in such generative models remains poorly understood: after a task change, what aspects of the learned distribution are most easily lost, and what replay samples should be prioritized?  We address these questions through the modern Hopfield energy. Recent links between modern Hopfield networks (MHNs) and diffusion models allow analyses in MHNs to be transferred to diffusion models. We introduce intrinsic forgetting as an increase in Hopfield energy after the task change. In tractable  settings in an MHN, we prove that high-energy, outlier-like samples undergo a larger energy increase than cluster-like samples, implying that samples located in sharp, isolated basins are more forgettable. We further analyze memory replay and show that replay is particularly effective for high-energy samples, enabling an energy-based  selection of replay samples. We validate these predictions in experiments on MHNs and two diffusion models under continual-learning settings: Stable Diffusion and a pixel-space DDPM. In these diffusion models, Hopfield energy tracks reconstruction-based forgetting, and replay experiments reveal energy-dependent mitigation of forgetting that is consistent with the MHN analysis.
\end{abstract}

\section{Introduction}
\label{sec:introduction}
Generative models form a foundation of modern AI, and in particular, diffusion models have become a default backbone for image generation \citep{ho2020ddpm,Song2020-kt,Dhariwal2021-un, Rombach2021-yj}.
In recent years,  generative models have increasingly been used not only after training from scratch, but also undergoing continual knowledge transfer, such as continual pretraining and fine-tuning in different domains. 
This brings continual learning (CL) of diffusion models to the foreground as a practical engineering problem and a fast-growing empirical literature~\citep{Zajac2023-cldiffusion,Smith2024-continualdiffusion,GaoLiu2023-dr,Masip2024-gd}, Continual learning for diffusion models remains largely empirical: the mechanism of catastrophic forgetting is still poorly understood, and even basic questions that have been extensively studied in supervised models remain open-- ``\emph{which} information of Task-1 will this model forget after training on Task~2?'' and ``\emph{which} Task-1 samples should we replay to prevent it?''

Our approach to such fundamental questions is through 
Hopfield networks, which have played a central role in advancing the understanding and development of generative models~\citep{hopfield1982neural,krotov2021large}. 
A distinctive feature of Hopfield networks is that they admit an explicitly computable energy function. This enables systematic architecture design with the energy function serving as a guiding principle~\citep{Hoover2023-mp,dehmamy2025nrgpt}, as well as the development of machine learning methods~\citep{Zhang2022-vg,Hofmann2024-hb,scellier2017equilibrium}. In particular, modern Hopfield networks (MHNs) form a family of models that has been actively studied since the rise of deep learning and has been extended to deep learning architectures~\citep{krotov2021large,Ramsauer2021Hopfield}.
Crucially, a growing literature identifies an underlying connection between MHNs and diffusion models: the score function of a Gaussian kernel-density estimator is exactly a Hopfield update~\citep{Ambrogioni2024Associative,Pham2025-kx}, and backward diffusion dynamics can be read as gradient flows on an implicit log-sum-exp energy~\citep{RayaAmbrogioni2023SSB,Ambrogioni2025Thermo}. 

In this work, we reveal that the Hopfield energy serves as 
 a theoretically grounded measure for quantitatively characterizing forgetting and prioritizing which samples to replay. 
Figure~\ref{fig:concept} illustrates  the core idea of quantifying forgetting in generative models admitting the Hopfield energy, whose details we explain in Section \ref{sec:energy-indicator}. 
For such models, changes in generation probability can be tracked by observing the energy. By characterizing the forgetting of knowledge as a decrease in its generation probability, we can reduce the problem of quantifying forgetting to measuring an increase in the corresponding energy.
Regions with densely clustered memory patterns form low energy, less sharp landscapes. Therefore, even if memory vectors shift from Task 1 to Task 2, the  energy increase remains relatively small, so the corresponding states are robust to forgetting. By contrast, a Task-1 memory lying in a region with sparse or isolated memories undergoes a much larger energy increase and forgetting. 
This picture is analogous to the view that associates flat minima in parameter space with better generalization to test samples and out-of-distribution (OOD) samples
\cite{keskar2017large,cha2021swad}.



\begin{figure}[t]
\centering
\includegraphics[width=0.8\linewidth]{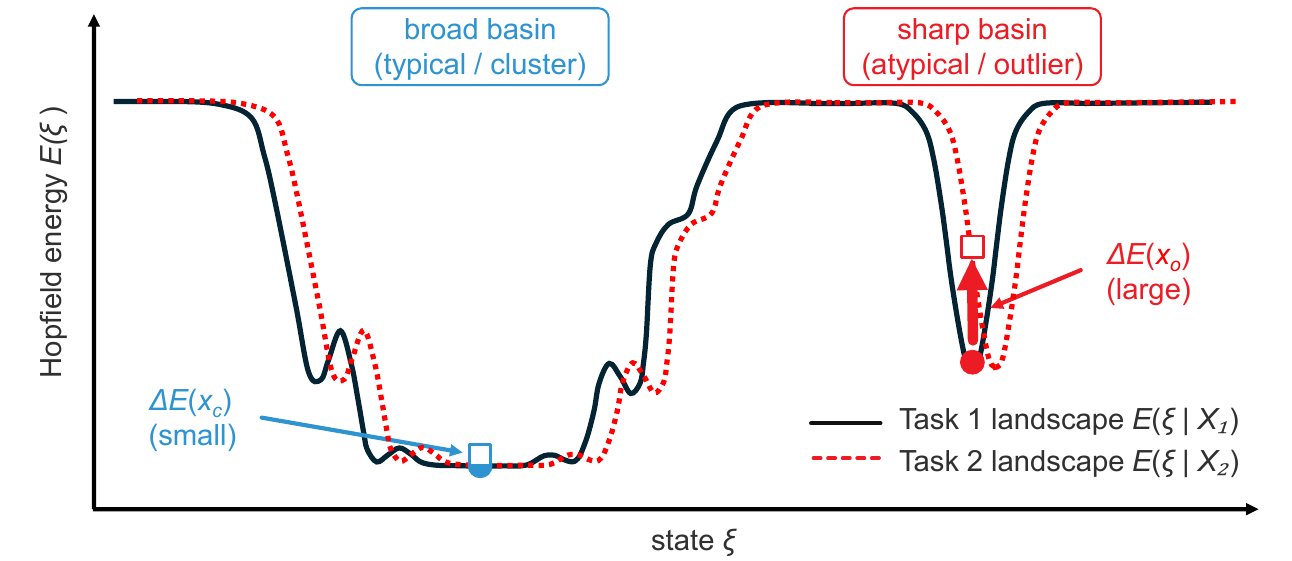}
\caption{\textbf{Conceptual picture.} The  Hopfield energy $E(\xi\mid X)$ of associative memory / diffusion models carves two qualitatively different basins: \emph{broad and low-energy basins} around typical samples, and \emph{sharp and  high-energy basins} around outliers. A task change shifts the landscape (red dashed). The resulting energy rise at the Task-1 memory, $\Delta E(x)=E(x\mid X_2)-E(x\mid X_1)$, referred to as intrinsic forgetting. We prove $\Delta E(x_o)\gg\Delta E(x_c)$: Memories in high-energy basins are more susceptible to forgetting.}
\label{fig:concept}
\end{figure}

Main contributions are summarized as follows: 
\begin{itemize}
    \item \textbf{Hopfield energy view of forgetting supported by theoretical analysis (Theorem ~\ref{thm:forgetting}).}
    We introduce intrinsic forgetting as the increase in Hopfield energy caused by a task transition. 
    This provides an internal measure to quantify how the learned generative landscape changes, without relying solely on external classifiers or distributional metrics such as FID. To formalize the intrinsic forgetting shown in Figure~\ref{fig:concept},  
    we analyze a modern Hopfield network with structured memories. 
    The analysis shows that outlier-like memories, corresponding to the high-energy state and sharp basin, undergo a larger energy increase under a task perturbation than typical cluster memories.

    \item \textbf{Analysis of energy-dependent replay (Theorem ~\ref{thm:replay}).}
    We analyze how Hopfield energy controls the effect of memory replay. 
    The analysis shows that high-energy samples can produce larger replay gains than low-energy samples, revealing an energy dependence in the effect of replay. 
    This suggests that Hopfield energy may provide a useful signal for replay-buffer design. 
    Controlled experiments with an MHN and its Boltzmann-machine counterpart support this energy-dependent replay picture.

    \item \textbf{Empirical examination in diffusion models (Section ~\ref{sec:exp-forgetting}, ~\ref{sec:exp-replay})}.
    We empirically confirm these theoretical predictions using two diffusion models in continual-learning settings: Stable Diffusion~v1.5 fine-tuned on split CIFAR-10 and a pixel-space DDPM trained on the same one. 
    In both settings, Hopfield energy tracks reconstruction-based forgetting, and replay experiments show energy-dependent changes in reconstruction error consistent with the MHN analysis. 
\end{itemize}

Thus, we expect that this work will serve as 
 a theoretical and empirical foundation for exploring forgetting and continual learning in modern generative models.
 

\section{Related Work}
\label{sec:related_works}

\paragraph{Modern Hopfield networks.}

Recent work on MHNs has developed various extensions of the classical Hopfield network~\cite{hopfield1982neural}, including energy functions beyond pairwise correlations~\cite{Krotov2016-sr,Demircigil2017-eh} and architectural generalizations~\citep{krotov2021large}. In particular, \citet{Ramsauer2021Hopfield} showed that a Hopfield network with a log-sum-exp energy function corresponds to softmax attention in modern deep learning architectures. The term MHN is also commonly used to refer specifically to the model of \cite{Ramsauer2021Hopfield}, and we follow this convention.
A distinctive feature of MHNs is that they admit an explicit energy function of states. Recent studies have increasingly exploited this energy as an intrinsic model quantity, for example, to systematically design architectures from its gradient \citep{krotov2021large,Hoover2023-mp}, to detect out-of-distribution (OOD) samples~\cite{Zhang2022-vg,Hofmann2024-hb}, and to derive biologically plausible algorithms~\cite{scellier2017equilibrium}. In this work, we reveal that the energy also works as a useful indicator of forgetting in generative models.

\paragraph{Connection between MHNs and diffusion models.}
\citet{RayaAmbrogioni2023SSB} showed that diffusion trajectories can be viewed as dynamics on an implicit potential landscape, with attractor-like behavior toward the data manifold. 
\citet{Ambrogioni2024Associative} further made the connection to MHNs explicit by showing that, for finite normalized patterns, the low-noise diffusion energy is asymptotically identical to the MHN energy. 
Recent work has more clarified this view~\citep{Ambrogioni2025Thermo} and extended it to memorization/generalization behavior in diffusion models~\citep{Ambrogioni2025Thermo,Pham2025-kx}. 
Our work uses this established MHN--diffusion correspondence in a new setting: continual learning of generative models.

\paragraph{Continual learning in generative models.}
Continual learning in generative models has mainly been studied from a practical perspective, with the aim of mitigating catastrophic forgetting, rather than from the viewpoint of theoretically understanding its mechanism. Early studies considered replay algorithms in Hopfield networks, while much of the recent work has focused on VAEs and GANs, including teacher--student generative modeling, memory-replay GANs, latent space consolidation, and distillation for conditional generation~\citep{Ramapuram2020-lifelong,Wu2018-mergan,Zhai2019-lifelonggan,Deja2021-qj}. More recently, continual learning has also been studied for diffusion models, including experimental demonstrations of forgetting and its mitigation by replay~\citep{Zajac2023-cldiffusion}, a regularization method tailored to diffusion models~\citep{wang2025avoid}, and parameter-efficient adaptation and distillation in text-to-image diffusion models~\citep{Smith2024-continualdiffusion,Masip2024-gd}. In contrast to these application-oriented studies, our work uses MHNs as a theoretical bridge to understand the mechanism of forgetting and to provide a basis for developing principled methods to mitigate forgetting.




\section{Preliminaries}
\label{sec:preliminaries}
\subsection{Modern Hopfield Energy}
\label{sec:mhn-energy}

Let $X=\{x_i\}_{i=1}^{N}$ be a set of stored memories and let $\xi\in\mathbb{R}^d$ denote a state at which the memory landscape is evaluated. 
The continuous modern Hopfield energy is defined as
\begin{equation}
E(\xi\mid X;\beta)
=
-\frac{1}{\beta}\log\sum_{i=1}^{N}\exp(\beta x_i^\top \xi)
+\frac{1}{2}\|\xi\|^2 ,
\label{eq:mhn-energy}
\end{equation}
where $\beta>0$ is an inverse-temperature parameter controlling the sharpness of the energy landscape \cite{Ramsauer2021Hopfield,krotov2021large,Lucibello2024-lo}. 
For $\nabla_\xi E(\xi\mid X;\beta)
=0$, its fixed points satisfy 
\begin{equation}
\xi
=
\sum_{i=1}^{N}p_i(\xi\mid X)x_i, \quad p_i(\xi\mid X)
=
\frac{\exp(\beta x_i^\top \xi)}
{\sum_{j=1}^{N}\exp(\beta x_j^\top \xi)}. 
\label{eq:mhn-fixed-point}
\end{equation}
Throughout this paper, $E(\xi\mid X)$ denotes the Hopfield energy evaluated with respect to the memory set $X$, omitting $\beta$ when it is clear from context. 
The energy is low when the state $\xi$ lies near a cluster of memories, and high when it lies near an isolated memory.
It is the scalar quantity we use below to characterize how the generative landscape changes across tasks.

\subsection{Boltzmann Machine Formulation}
\label{sec:attnbm}

Define the Boltzmann distribution of the MHN by
\begin{equation}
p_\beta(\xi \mid X)
=
\exp(-\beta E(\xi \mid X;\beta))/Z,
\label{eq:attnbm-distribution}
\end{equation}
where \(Z\) is the partition function. One can easily see that, for \(\|x_i\|=1\), \(Z\) is explicitly given by
\(Z = N (2\pi/\beta)^{d/2}\), and this Boltzmann distribution is a Gaussian mixture
$
p_\beta(\xi \mid X)=
\sum_i
\exp\left(
-\frac{\beta}{2}\|\xi - x_i\|^2
\right)/Z.$ In the following section, as one of our synthetic experiments, we perform unsupervised learning of an MHN by maximum-likelihood estimation of this Boltzmann distribution. Specifically, we regard this distribution as a Boltzmann machine (BM) with a trainable weight matrix \(X\), and refer to it as MHN-BM. Since $Z$ is analytically tractable, the model can be trained easily by gradient descent for maximum-likelihood learning. Note that a similar Boltzmann machine counterpart of MHNs was investigated by \cite{OtaKarakida2023-attnbm}, but their treatment of the inverse temperature differs from our focus.

\subsection{Diffusion Models and Implicit Energy}
\label{sec:diffusion-implicit-energy}

Diffusion models are not usually written as explicit energy-based models, but their score can be viewed as the negative gradient of an implicit, time-dependent energy. 
Let \(p_s(x)\) be the noisy data marginal at diffusion time \(s\), and write
\[
E_{\mathrm{DM},s}(x):=-\log p_s(x),
\qquad
\nabla_x\log p_s(x)=-\nabla_x E_{\mathrm{DM},s}(x).
\]
Thus, roughly speaking, reverse sampling moves a noisy sample \(x_s\) by noisy descent on \(E_{\mathrm{DM},s}\); under conservative-drift assumptions, this energy also includes the drift potential~\citep{RayaAmbrogioni2023SSB}. 
Its stable equilibria therefore act as attractors for generation and memory recall~\citep{Ambrogioni2024Associative, Pham2025-kx}. For a finite set of patterns corrupted by additive Gaussian noise,
\[
p_s(x)
\propto
\sum_i
\exp\!\left(
-\|x-y_i\|^2/2\tau_s^2
\right).
\]
Here \(\tau_s>0\) denotes the standard deviation of the Gaussian corruption at diffusion time \(s\). Expanding the square shows that, for normalized memories, \(E_{\mathrm{DM},s}\) reduces up to constants and positive rescaling to
\[
E_{DM, s} (x) \propto
-\frac{1}{\beta_s}
\log\sum_{i=1}^{N}\exp(\beta_s y_i^\top x)
+
\frac{1}{2}\|x\|^2,
\qquad
\beta_s=\tau_s^{-2},
\]
which is the modern Hopfield energy in Eq.~(\ref{eq:mhn-energy}).
We therefore use Hopfield energy as a tractable proxy for the implicit diffusion energy landscape.
We note that this perspective, which involves analysing the reverse (generative) process in terms of attractor dynamics on a Hopfield-like energy landscape, is not unique to this work and has been adopted in prior work~\citep{RayaAmbrogioni2023SSB,Ambrogioni2024Associative,Ambrogioni2025Thermo,Pham2025-kx}; our contribution is to leverage this established viewpoint in the new setting of continual learning of generative models.
A more explicit derivation, including the variance-preserving case, is given in Appendix~\ref{app:diffusion-hopfield-correspondence}.

\section{Energy as an Indicator of Forgetting}
\label{sec:energy-indicator}

We first introduce forgetting with the internal Hopfield energy. Defining ``forgetting'' in an unsupervised generative model is subtle: label-based metrics via external classifiers are heuristic measures~\citep{Zajac2023-cldiffusion,Smith2024-continualdiffusion}, but the generative objective of our interest is essentially unsupervised and class information is not necessarily relevant to the input data.
We need an internal, model-native measure applicable to pure unsupervised settings.

\paragraph{Intrinsic forgetting via Hopfield energy.}
Let $\mathcal{X}_{\mathrm{old}}$ and $\mathcal{X}_{\mathrm{new}}$ denote the stored memory set at task 1 and task 2.
For any state $x$, we define the \emph{intrinsic forgetting} at $x$ as
\begin{equation}
\Delta E(x)
=
E(x\mid\mathcal{X}_{\mathrm{new}})
-
E(x\mid\mathcal{X}_{\mathrm{old}}).
\label{eq:internal-forgetting}
\end{equation}
Large positive $\Delta E(x)$ means that the energy landscape at $x$ has risen, so the model assigns a smaller probability to that state after the task transition. 
Under the assumption that the score function has sufficient expressive capacity to approximate the data distribution of each task nearly perfectly, this quantity captures forgetting induced by the intrinsic and unavoidable distributional differences between tasks.  
It is noteworthy that, for a fixed number of memories, $p_\beta(\xi\mid\mathcal{X}_{\mathrm{new}})/
p_\beta(\xi\mid\mathcal{X}_{\mathrm{old}})
=\exp[-\beta\Delta E(\xi)]$. 
Thus, intrinsic forgetting naturally arises when distributional changes between  tasks are measured in terms of probability density ratios.


\subsection{Theoretical analysis: equal-angular cluster plus outlier under small rotation}
\label{sec:theory1}

We derived an analytical formulation of intrinsic forgetting within a toy setting. 
\paragraph{Setup (Fig.~\ref{fig:setting}).}
Consider memory vectors (corresponding to training samples) $x_1,\dots,x_N,x_o\in\mathbb{S}^{d-1}$ satisfying
\begin{equation}
\begin{split}
x_i^\top x_j=c\ (i\neq j;\ i,j\le N),\qquad x_i^\top x_o=0\ (i\le N),\qquad 0<c<1,
\label{eq:eqang-setup}
\end{split}
\end{equation}
so $\{x_i\}_{i\le N}$ is an \emph{equal-angular cluster} and $x_o$ is the \emph{orthogonal outlier}. A canonical realization in $d\gg 1$ is the latent-Gaussian construction $x_i=\sqrt{c}\,\mu+\sqrt{1-c}\,z_i$ with a fixed unit centroid $\mu$ and i.i.d.\ unit-variance Gaussian residuals $z_i\perp\mu$, for which $\|x_i\|^2=1$ and $x_i^\top x_j=c+O(d^{-1/2})$ concentrate at the desired geometry; the results below depend only on~(\ref{eq:eqang-setup}). Task~1 stores $X=\{x_1,\dots,x_N,x_o\}$; Task~2 
is provided by its rational domain shift: 
\begin{equation}
X_2=VX, \quad \text{with} \ V=\exp(\varepsilon\Omega), \quad \Omega=(W-W^\top)/2.
\label{eq:rotation}
\end{equation}

\begin{wrapfigure}[13]{r}{0.35\linewidth}
\centering
\vspace{-1.5\baselineskip}
\includegraphics[width=0.95\linewidth]{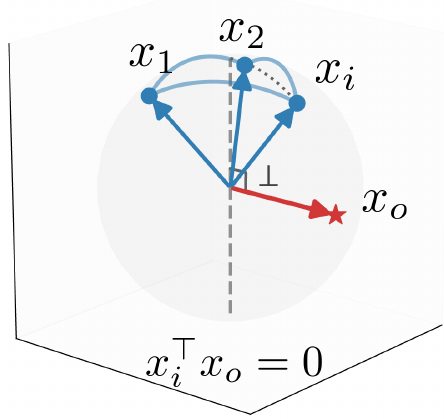}
\caption{\textbf{Memory configuration.} 
}
\label{fig:setting}
\end{wrapfigure}

Note that the rotation identity $E(\xi\mid VX)=E(V^\top\xi\mid X)$ converts a domain shift into a change of argument at the original landscape. This transformation facilitates our analysis. 
In the analysis, we assume a sufficiently small random rotation with
$W_{ij}\stackrel{\mathrm{iid}}{\sim}\mathcal{N}(0,1)$ and $\varepsilon\ll 1$.  In addition, we work at finite $N$, large $\beta$, where individual memories are still fixed points~\citep{Lucibello2024-lo,Achilli2025-cm}.

\paragraph{Fixed points nearby each memory vector.}
 Let $\xi_{cl}^*,\xi_o^*$ denote the cluster-side and outlier-side Task-1 fixed points. 
By the symmetry of~(\ref{eq:eqang-setup}), the softmax weights at each Task-1 fixed point reduce to scalar self-consistency conditions whose large-$\beta$ asymptotics (proved in Appendix~\ref{app:forgetting-proof}) give the closed-form expansions
\begin{equation}
\xi_o^*\sim x_o+e^{-\beta}(s-Nx_o),\quad \xi_{cl}^* \sim x_{cl}+e^{-\beta(1-c)}\!\big(\textstyle\sum_{j\ge 2}x_j-(N-1)x_{cl}\big),
\label{eq:fp-asym}
\end{equation}
with $s:=\sum_{i\le N}x_i$. 
For clarity, here and below we use the informal shorthand ``$\sim$'' to show only the leading-order term; the precise lower-order terms and error bounds are given in Appendix~\ref{app:forgetting-proof} (Eqs.~(\ref{eq:app-xi-out-asym}),~(\ref{eq:app-xi-cl-asym}) for the fixed points and Eqs.~(\ref{eq:app-DE-out})--(\ref{eq:app-DE-cl}) for the energy rises).

The dominant off-memory weights are therefore $e^{-\beta(1-c)}$ for the cluster fixed point and $e^{-\beta}$ for the outlier fixed point, with $e^{-\beta(1-c)}\gg e^{-\beta}$ for $0<c<1$.
Geometrically, the cluster fixed point is supported by $N-1$ near-neighbors at cosine $c$, while the outlier sees only orthogonal competitors.
Appendix~\ref{app:energy-sharpness} shows that these same off-memory weights also control the local energy level and the local sharpness of the basin, as measured by the Hessian trace of the energy landscape.
This justifies the shorthand used below: the cluster fixed point is low-energy and broad, whereas the outlier fixed point is high-energy and sharp.


\paragraph{Intrinsic forgetting at the fixed points.}
For a Task-1 fixed point $\xi_*$, write $\langle\,\cdot\,\rangle_V:=\mathbb{E}_{\Omega}[\,\cdot\,]$ for the expectation over the small-rotation ensemble of~(\ref{eq:rotation}). We absorb this average into the definition and express the (rotation-averaged) intrinsic forgetting at $\xi_*$ as
\begin{equation}
\Delta E_*^{\mathrm{fp}}\;:=\;\big\langle E(\xi_*\mid VX)-E(\xi_*\mid X)\big\rangle_V.
\label{eq:DEfp-def}
\end{equation}
Since $\xi_*$ is stationary, Taylor expansion combined with the rotation identity gives, at leading order,
\begin{equation}
\Delta E_*^{\mathrm{fp}}\;\sim\;\tfrac{\varepsilon^2}{4}\big(\|\xi_*\|^2\,\mathrm{tr}\,H_*-\xi_*^\top H_*\xi_*\big),\qquad H_*=I-\beta\Big(\textstyle\sum_i p_i^* x_ix_i^\top-\xi_*\xi_*^\top\Big).
\label{eq:delta-hessian-body}
\end{equation}
Writing the off-memory weights as $p_k^*=1-\sum_{j\neq k}\eta_j$, $p_j^*=\eta_j\ll 1$ (with $\alpha_j:=x_j^\top x_k$) gives the following:
\begin{equation}
\Delta E_*^{\mathrm{fp}}\;\sim\;\tfrac{\varepsilon^2}{4}\Big[(d-1)-\sum_{j\neq k}\eta_j\,\Lambda(\alpha_j)\Big],\qquad \Lambda(\alpha):=2(d-1)(1-\alpha)+\beta(1-\alpha^2).
\label{eq:delta-eta}
\end{equation}
Specializing \eqref{eq:delta-eta} to the two representative Task-1 fixed points, namely
the outlier ($N$ competitors at $\alpha=0$, $\eta=e^{-\beta}$) and the cluster
($N-1$ competitors at $\alpha=c$, $\eta=e^{-\beta(1-c)}$ plus one competitor at
$\alpha=0$, $\eta=e^{-\beta}$), we obtain the following.

\begin{theorem}[High-energy patterns are forgotten more]
\label{thm:forgetting}
Assume $0<c<1$ and $\beta\gg\log N$. For sufficiently small $\varepsilon$ and large $\beta$,
\begin{equation}
\Delta E_o^{\mathrm{fp}}-\Delta E_{\mathrm{cl}}^{\mathrm{fp}}\;\sim\;\tfrac{\varepsilon^2}{4}(N-1)\big[e^{-\beta(1-c)}\,\Lambda(c)-e^{-\beta}\,\Lambda(0)\big]\;>\;0.
\label{eq:main-ineq}
\end{equation}
\end{theorem}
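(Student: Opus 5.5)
The plan is to first derive the general fixed-point formula \eqref{eq:delta-eta} carefully, and then specialize it to the two configurations and subtract.

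\textbf{Step 1 (second-order expansion).} Since $V=\exp(\varepsilon\Omega)$ is orthogonal, the rotation identity gives $E(\xi_*\mid VX)=E(V^\top\xi_*\mid X)$. I would expand $V^\top\xi_*=\xi_*-\varepsilon\Omega\xi_*+\tfrac{\varepsilon^2}{2}\Omega^2\xi_*+O(\varepsilon^3)$ and Taylor expand $E(\cdot\mid X)$ around $\xi_*$.

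\textbf{Step 2 (rotation average).} Because $\nabla E(\xi_*)=0$, the first-order and $\Omega^2$ terms vanish. This leaves $\tfrac{\varepsilon^2}{2}\,\xi_*^\top\Omega^\top H_*\Omega\xi_*$. The Hessian of the log-sum-exp energy is $H_*=I-\beta\,\mathrm{Cov}_{p^*}(x)$, and since $\xi_*$ is the $p^*$-mean, this equals the stated form of $H_*$. For the average, $\Omega_{ab}=(W_{ab}-W_{ba})/2$ has $\mathbb{E}[\Omega_{ab}\Omega_{cd}]=\tfrac12(\delta_{ac}\delta_{bd}-\delta_{ad}\delta_{bc})$. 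This gives $\mathbb{E}[\Omega^\top A\Omega]=\tfrac12(\mathrm{tr}A\,I-A^\top)$, and hence \eqref{eq:delta-hessian-body}.

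\textbf{Step 3 (small softmax weights).} Next I would insert $p_k^*=1-\sum\eta_j$ and $p_j^*=\eta_j$. At leading order in $\eta$:
\begin{itemize}
\item $\xi_*=x_k+\sum_j\eta_j(x_j-x_k)$, so $\|\xi_*\|^2=1-2\sum\eta_j(1-\alpha_j)+O(\eta^2)$.
\item The covariance is $\sum_j\eta_j(x_j-x_k)(x_j-x_k)^\top+O(\eta^2)$.
\end{itemize}
Hence $\mathrm{tr}H_*=d-2\beta\sum\eta_j(1-\alpha_j)$, while $\xi_*^\top H_*\xi_*=\|\xi_*\|^2-\beta\sum\eta_j\big(x_k^\top(x_j-x_k)\big)^2=\|\xi_*\|^2-\beta\sum\eta_j(1-\alpha_j)^2$. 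Combining these,
\[
\|\xi\|^2\mathrm{tr}H-\xi^\top H\xi=(d-1)\|\xi\|^2-2\beta\sum\eta_j(1-\alpha_j)+\beta\sum\eta_j(1-\alpha_j)^2 .
\]
Expanding $\|\xi\|^2$ gives $(d-1)-\sum\eta_j\big[2(d-1)(1-\alpha_j)+\beta(1-\alpha_j)(2-(1-\alpha_j))\big]$, and $(1-\alpha)(1+\alpha)=1-\alpha^2$ recovers $\Lambda$.

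\textbf{Step 4 (specialize and subtract).} I would take the fixed-point asymptotics \eqref{eq:fp-asym} as given (Appendix). These show that the relevant $\eta$'s are $e^{-\beta}$ for the outlier ($N$ competitors at $\alpha=0$), and $e^{-\beta(1-c)}$ ($N-1$ competitors at $\alpha=c$) plus $e^{-\beta}$ (one competitor at $\alpha=0$) for the cluster, up to $1+O(Ne^{-\beta(1-c)})$ normalization corrections. The assumption $\beta\gg\log N$ makes these corrections and the $O(\eta^2)$ terms negligible. Subtracting, the $(d-1)$ terms cancel, and the outlier has $N$ copies of $e^{-\beta}\Lambda(0)$ against one on the cluster side. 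The difference is therefore $\tfrac{\varepsilon^2}{4}(N-1)\big[e^{-\beta(1-c)}\Lambda(c)-e^{-\beta}\Lambda(0)\big]$.

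\textbf{Step 5 (positivity).} $\Lambda(c)>0$ for $c<1$. The ratio of the two terms is $e^{\beta c}\Lambda(c)/\Lambda(0)$, and $\Lambda(c)/\Lambda(0)\ge(1-c)\min(1,1+c)=1-c$ since each summand shrinks by at most a factor $1-c$. So positivity holds once $e^{\beta c}(1-c)>1$, which is true for large $\beta$.

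\textbf{Main obstacle.} The delicate point is controlling the error terms. The neglected $O(\eta^2)$ contributions, the $O(\varepsilon^3)$ remainder, and the deviation of the exact softmax weights from the pure exponentials must all be smaller than the leading difference. The cluster term is of size $Ne^{-\beta(1-c)}(d+\beta)$, and its square is of order $N^2e^{-2\beta(1-c)}(d+\beta)^2$. This square must be dominated by the difference itself, which requires $Ne^{-\beta(1-c)}(d+\beta)\ll1$; I would read this as the content of ``large $\beta$''. I would make this explicit by bounding the fixed-point self-consistency equations through a contraction argument on the scalar weights, and take $\varepsilon\to0$ before $\beta\to\infty$.
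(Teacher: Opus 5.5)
Your proposal is correct and follows essentially the same route as the paper's appendix proof: the rotation identity with a second-order Taylor expansion at the stationary point, the Gaussian average $\mathbb{E}[\Omega^\top A\Omega]=\tfrac12(\mathrm{tr}A\,I-A^\top)$, the first-order leakage expansion producing $\Lambda(\alpha)$, and specialization via the fixed-point asymptotics (which the paper derives from the symmetry-reduced scalar self-consistency equations rather than taking as given). Your explicit bound $\Lambda(c)/\Lambda(0)\ge 1-c$ and your order of limits ($\varepsilon\to0$ first) make precise steps the paper leaves implicit, but your remainder estimate is more pessimistic than needed: the exact identity $\|\xi_*\|^2\,\mathrm{tr}\,H_*-\xi_*^\top H_*\xi_*=(d-1)\|\xi_*\|^2-\beta\bigl(\|\xi_*\|^2-\sum_i p_i^*(x_i^\top\xi_*)^2\bigr)$ shows the second-order error is only $O\bigl((d+\beta)\eta_{\mathrm{tot}}^2\bigr)$ with $\eta_{\mathrm{tot}}:=\sum_{j\neq k}\eta_j$, so for fixed $c$ the localization condition $\beta(1-c)Ne^{-\beta(1-c)}\ll 1$ suffices and no $d$-dependent requirement arises.
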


\noindent\emph{Proof sketch.} $e^{-\beta(1-c)}\gg e^{-\beta}$ at large $\beta$ and $\Lambda(c)>0$ for $0<c<1$, so the first term dominates. The genuine-cluster regime $0<c<1$ is essential: at $c=0$ both fixed points coincide at leading order, while $c<0$ can flip the sign of~(\ref{eq:main-ineq}). The precise lower-order terms and the full derivation are in Appendix~\ref{app:forgetting-proof}.\hfill$\square$

\subsection{Synthetic verification in MHN and Boltzmann-machine learning}
\label{sec:synthetic-forgetting}

We verify Theorem~\ref{thm:forgetting} in synthetic cluster-plus-outlier systems. 
The closed-form MHN uses an equal-angular cluster and an orthogonal outlier with
\(N=12,d=50,c=0.35,\beta=8,\varepsilon=0.05\), averaged over 2000 random rotations. 
The outlier fixed point has a larger rotation-induced energy rise than the cluster fixed points, as predicted by Theorem~\ref{thm:forgetting}; details are in Appendix~\ref{app:toy-forgetting}. 
The same ordering also appears after likelihood training in the MHN-BM with
\(N=12,d=50,c=0.35,\beta=16,\varepsilon_{\mathrm{rot}}=0.02\), shown in Fig.~\ref{fig:fg-attnbm}.

\subsection{Application to Diffusion models}
\label{sec:exp-forgetting}

We next examine whether the prediction of Theorem~\ref{thm:forgetting} extends beyond the tractable MHN setting to continual learning in \emph{diffusion models}. 
The main text focuses on \emph{incremental class learning with split CIFAR-10}, which is a standard benchmark for continual learning. 
We also report results for \emph{domain-incremental learning with rotated MNIST} in Appendix~\ref{app:rotated-mnist}. 
This setting most closely matches the rotation-based analysis in Sections~\ref{sec:theory1} and~\ref{sec:theory2}, and yields the same qualitative conclusions.

We evaluate two practically relevant diffusion-model systems on split CIFAR-10: latent-space \textbf{Stable Diffusion v1.5} fine-tuned on the task split (Fig.~\ref{fig:fg-sd}), and a \textbf{pixel-space DDPM} trained from scratch on the same split (Fig.~\ref{fig:fg-pix}). 
Following prior work~\citep{RayaAmbrogioni2023SSB,Ambrogioni2024Associative,Pham2025-kx}, we define the per-sample Hopfield energy of a diffusion model using Eq.~(\ref{eq:mhn-energy}), with the stored memories $\{x_i\}$ given by the Task-1 training images.

\paragraph{Reconstruction metric for forgetting.}
In the diffusion-model experiments, we additionally use a reconstruction-based measure of forgetting, which evaluates how well a Task-1 sample can still be recovered by the model after Task-2 training.
Specifically, we corrupt a Task-1 sample $x$ to a fixed diffusion timestep $t^\star$, run the post-Task-2 reverse process from $t^\star$ back to $0$, and measure
\begin{equation}
\mathcal{F}_{t^\star}(x)
=
\|x-\hat{x}_{0,t^\star}\|_2^2 .
\label{eq:reconstruction-forgetting}
\end{equation}
A larger $\mathcal{F}_{t^\star}(x)$ indicates that the post-Task-2 reverse dynamics is less able to reconstruct the original Task-1 sample.
We use this heuristic metric to examine whether the energy-based tendency predicted by intrinsic forgetting is also reflected in the behavior of trained diffusion models.
Details of this metric and robustness checks across other timesteps are given in Appendix~\ref{app:diffusion-reconstruction-metric}. We also report a perturbation-based sharpness analysis on CIFAR-10 in Appendix~\ref{app:energy-sharpness-experiment}, showing how local sharpness relates to both Hopfield energy and reconstruction-based forgetting.

\begin{figure}[t]
\centering
\begin{subfigure}[b]{0.33\linewidth}
\centering
\includegraphics[width=\linewidth]{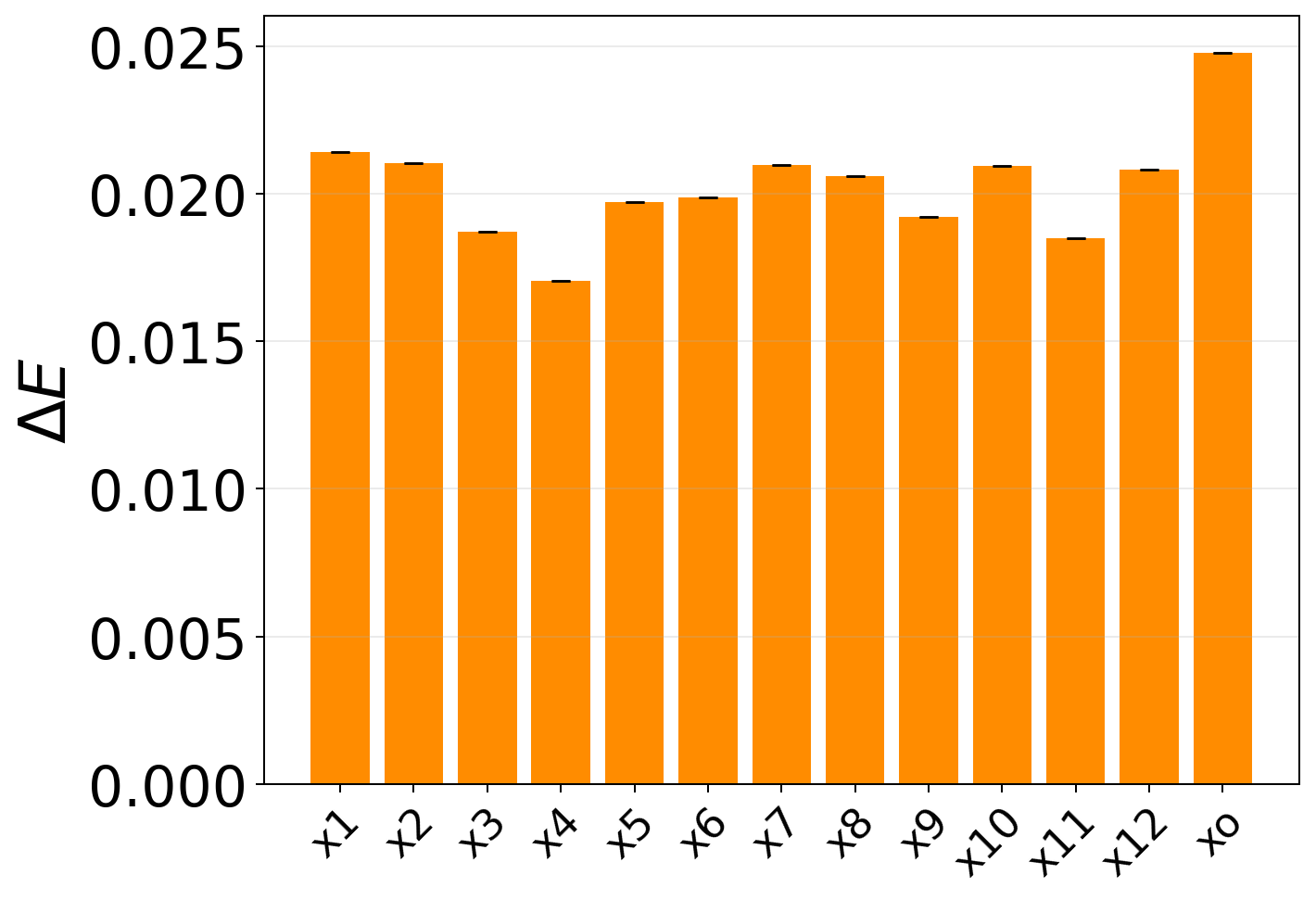}
\caption{MHN-BM.}
\label{fig:fg-attnbm}
\end{subfigure}\hfill
\begin{subfigure}[b]{0.33\linewidth}
\centering
\includegraphics[width=\linewidth]{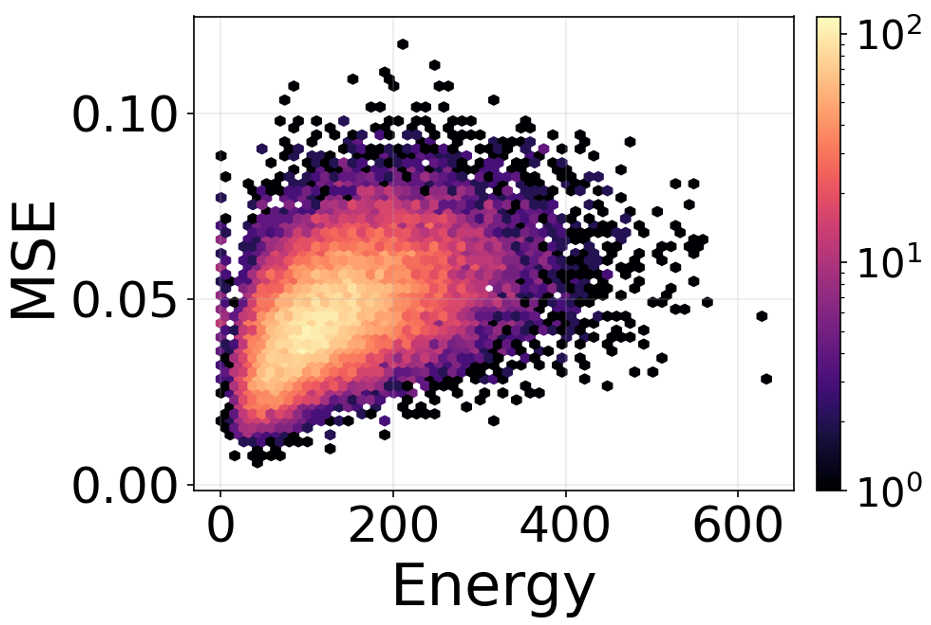}
\caption{Stable Diffusion.}
\label{fig:fg-sd}
\end{subfigure}\hfill
\begin{subfigure}[b]{0.33\linewidth}
\centering
\includegraphics[width=\linewidth]{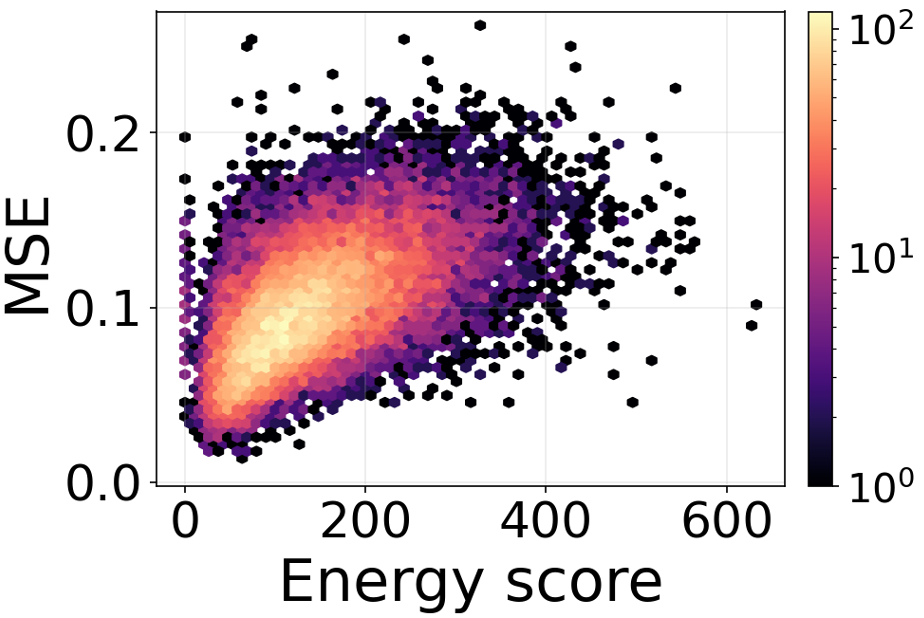}
\caption{Pixel diffusion.}
\label{fig:fg-pix}
\end{subfigure}
\caption{Energy predicts per-sample forgetting across diffusion models and a trainable synthetic model.
(a) In the trainable MHN-BM, the same ordering emerges after likelihood training on the rotated two-task setup. The MHN with synthetic clustering memory is reported in Appendix~\ref{app:toy-forgetting}.
(b) Stable Diffusion v1.5 on split CIFAR-10: per-sample reconstruction MSE $\mathcal{F}_{t^\star}(x)$ after Task~2 training vs.\ internal Hopfield energy measured on Task~1. Higher-energy images suffer larger reconstruction forgetting.
(c) Pixel-space DDPM on the same split reproduces the same monotone relation.
}
\label{fig:forgetting}
\end{figure}

\paragraph{Stable Diffusion on split CIFAR-10 (Fig.~\ref{fig:fg-sd}).}
We fine-tune Stable Diffusion v1.5 (\texttt{runwayml/\ stable-diffusion-v1-5}, VAE frozen, only the UNet trained) on a split CIFAR-10 protocol where each task is composed of different 5 classes. 
After Task~1 we assign every Task-1 image a scalar Hopfield energy $E_\theta(x)$ computed via the log-sum-exp expression~(\ref{eq:mhn-energy}) over all training set of Task-1 images. This is our operationalization of the diffusion model's intrinsic energy at $x$.
After Task~2 training, we measure the forgetting by the per-sample reconstruction MSE at noise level $t=0.8$. This is the main diffusion-model instantiation of Theorem~\ref{thm:forgetting}.

\paragraph{Pixel-space diffusion (Fig.~\ref{fig:fg-pix}).}
To check that the relation is not specific to the latent space learned by Stable Diffusion, we repeat the same protocol with a DDPM trained directly in pixel space on $32\times 32$ split CIFAR-10 from scratch.
The same energy-vs-forgetting ordering is reproduced: per-sample reconstruction MSE after Task~2 is a monotone-increasing function of the Task-1 Hopfield energy. Together, Fig.~\ref{fig:fg-sd} and Fig.~\ref{fig:fg-pix} show that Hopfield energy is a model-agnostic, sample-level forgetting indicator for diffusion CL.

\section{Energy-guided replay prioritization}

Given that high-energy memories are forgotten stronger, the natural question is whether they should be replayed more. We answer this affirmatively with a second theorem, then turn it into a selection rule of replay samples.

\subsection{Theoretical analysis: baseline-subtracted replay gain}
\label{sec:theory2}

We add one Task-1 replay sample $x_r$ to the rotated Task-2 dataset, $X_2^{+r}(V):=VX\cup\{x_r\}$, with the replayed sample counted as an additional term in the log-sum-exp. Define the raw replay gain at the Task-1 fixed point $\xi_k^*$ as
\begin{equation}
\mathcal{R}_{r\to k}(V)\;:=\;E(\xi_k^*\mid X_2(V))-E(\xi_k^*\mid X_2^{+r}(V)).
\label{eq:raw-gain-def}
\end{equation}
Writing $\Delta E_k^{\mathrm{fp}}(V):=E(\xi_k^*\mid X_2(V))-E(\xi_k^*\mid X)$ for the intrinsic forgetting~(\ref{eq:DEfp-def}), the gain decomposes as
\[
\mathcal{R}_{r\to k}(V)
=\underbrace{\Delta E_k^{\mathrm{fp}}(V)}_{\text{forgetting w/o replay}}
\;-\;\underbrace{\bigl[E(\xi_k^*\mid X_2^{+r}(V))-E(\xi_k^*\mid X)\bigr]}_{\text{forgetting w/ replay}},
\]
so $\mathcal{R}_{r\to k}(V)$ measures the reduction of intrinsic forgetting due to replay.
With the softmax weight $p_{r\mid k}=e^{\beta x_r^\top\xi_k^*}/\sum_{\mu\in\mathcal{I}}e^{\beta x_\mu^\top\xi_k^*}$, the log-sum-exp form of $E$ gives $\mathcal{R}_{r\to k}(V)=\tfrac{1}{\beta}\log(1+p_{r\mid k}\,e^{\beta\Delta E_k^{\mathrm{fp}}(V)})$, of which $\mathcal{R}_{r\to k}(I)=\tfrac{1}{\beta}\log(1+p_{r\mid k})$ is the trivial duplicate-memory baseline. Subtracting this baseline and Taylor-expanding in $\Delta E_k^{\mathrm{fp}}(V)=O(\varepsilon^2)$ gives, with $\rho_{r\mid k}:=p_{r\mid k}/(1+p_{r\mid k})$ (full derivation in Appendix~\ref{app:replay-proof}),
\begin{equation}
\Delta_{r\to k}\;:=\;\big\langle\mathcal{R}_{r\to k}(V)-\mathcal{R}_{r\to k}(I)\big\rangle_V\;\sim\;\rho_{r\mid k}\,\Delta E_k^{\mathrm{fp}}.
\label{eq:replay-master}
\end{equation}
We call $\rho_{r\mid k}$ a \emph{replay susceptibility} because it is the linear response of the baseline-subtracted replay gain to the energy rise $\Delta E_k^{\mathrm{fp}}$.
\emph{Replay gain $=$ replay susceptibility $\times$ rotation-induced energy rise.} The first factor asks how much softmax weight the replay point already has at the target fixed point (geometry); the second is exactly the rotation-averaged rise $\Delta E_k^{\mathrm{fp}}$ produced by Theorem~\ref{thm:forgetting} (drift). Let $x_{\mathrm{cl}_1}$ be a cluster memory and let $x_{\mathrm{cl}_2}\neq x_{\mathrm{cl}_1}$ be another member of the same cluster. The replay susceptibilities satisfy
\[
\rho_{\mathrm{cl}_1\mid \mathrm{cl}_1},\,\rho_{o\mid o}\sim \tfrac12,\qquad
\rho_{\mathrm{cl}_1\mid \mathrm{cl}_2}\sim e^{-\beta(1-c)},\qquad
\rho_{\mathrm{cl}_1\mid o},\,\rho_{o\mid \mathrm{cl}_1}\sim e^{-\beta}.
\]
The precise lower-order corrections are in Appendix~\ref{app:replay-proof} (Eqs.~(\ref{eq:app-master}),~(\ref{eq:app-rho-asym})). Combining these with the inequality of $\Delta E_k^{\mathrm{fp}}$ in Theorem \ref{eq:main-ineq}, we obtain the following.

\begin{theorem}[Replay ordering]
\label{thm:replay}
Under the assumptions of Theorem~\ref{thm:forgetting}, the baseline-subtracted replay gain factorises as~(\ref{eq:replay-master}). The five canonical gains obey
\begin{equation}
\Delta_{o \to o}>\Delta_{\mathrm{cl}_1 \to \mathrm{cl}_1}
\gg
\Delta_{\mathrm{cl}_1\to \mathrm{cl}_2}
\gg
\{\Delta_{\mathrm{cl}_1\to o},\Delta_{o\to \mathrm{cl}_1}\},
\qquad
\Delta_{\mathrm{cl}_1\to o}\asymp\Delta_{o\to \mathrm{cl}_1}.\label{eq:replay-order}
\end{equation}
\end{theorem}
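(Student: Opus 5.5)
The plan is to combine the factorisation (\ref{eq:replay-master}) with the asymptotics of the two factors separately.

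\textbf{Step 1: the factorisation.} Starting from $\mathcal{R}_{r\to k}(V)=\tfrac1\beta\log(1+p_{r\mid k}e^{\beta\Delta E_k^{\mathrm{fp}}(V)})$, I will expand in $\delta:=\Delta E_k^{\mathrm{fp}}(V)=O(\varepsilon^2)$:
\[
\tfrac1\beta\log\!\Big(1+p_{r\mid k}(1+\beta\delta+O(\beta^2\delta^2))\Big)-\tfrac1\beta\log(1+p_{r\mid k})=\rho_{r\mid k}\,\delta+O(\beta\delta^2).
\]
Averaging over $\Omega$ and using $\langle\delta\rangle_V=\Delta E_k^{\mathrm{fp}}$ gives (\ref{eq:replay-master}). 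The remainder is $O(\beta\varepsilon^4)$, so "sufficiently small $\varepsilon$" should be read as $\beta\varepsilon^2\ll$ the smallest susceptibility-weighted term being compared.

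\textbf{Step 2: the susceptibilities.} These are read off from the fixed-point weights (\ref{eq:fp-asym}).

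For a self-replay $r=k$, the replayed point duplicates the dominant memory, so $p_{k\mid k}\to1$ and $\rho\sim\tfrac12$. The correction is $O(Ne^{-\beta})$ for the outlier and $O((N-1)e^{-\beta(1-c)})$ for the cluster.

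For the cross-cluster case, $x_{\mathrm{cl}_1}^\top\xi^*_{\mathrm{cl}_2}\approx c$, so $p\sim e^{-\beta(1-c)}$.

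For the cross outlier/cluster cases, the inner product is $\approx0$, giving $p\sim e^{-\beta}$.

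\textbf{Step 3: the energy rises.} From (\ref{eq:delta-eta}), both $\Delta E_o^{\mathrm{fp}}$ and $\Delta E_{\mathrm{cl}}^{\mathrm{fp}}$ equal $\tfrac{\varepsilon^2}{4}(d-1)(1+o(1))$. Both are of the same order, and both are positive.

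\textbf{Step 4: the ordering.}

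(i) $\Delta_{o\to o}$ versus $\Delta_{\mathrm{cl}_1\to\mathrm{cl}_1}$. Both have $\rho\approx\tfrac12$, so this comparison is the delicate one. I will write $\rho_{k\mid k}=\tfrac12-\tfrac14\sum_{j\ne k}\eta_j+\dots$. The cluster's susceptibility is then smaller by about $\tfrac14(N-1)e^{-\beta(1-c)}$. By Theorem~\ref{thm:forgetting}, its energy rise is also smaller by $\tfrac{\varepsilon^2}{4}(N-1)e^{-\beta(1-c)}\Lambda(c)$. The two corrections have the same sign, so they reinforce each other and give $\Delta_{o\to o}>\Delta_{\mathrm{cl}_1\to\mathrm{cl}_1}$. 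I expect this to be the main obstacle. The gap is only of order $\varepsilon^2(N-1)e^{-\beta(1-c)}\Lambda(c)$, so I must check two things: that the Step 1 remainder $O(\beta\varepsilon^4)$ and the $e^{-\beta}$ terms are dominated by it, and that the $\varepsilon^2$-corrections to $p_{k\mid k}$ cannot reverse the sign. This imposes the regime $\varepsilon^2\beta\ll e^{-\beta(1-c)}$. I would state that regime explicitly and use the explicit lower-order terms of the appendix expansions (\ref{eq:app-xi-out-asym})--(\ref{eq:app-DE-cl}).

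(ii) The $\gg$ relations. These follow because every $\Delta E^{\mathrm{fp}}$ is $\Theta(\varepsilon^2 d)$, so the ratios of the $\Delta$'s reduce to ratios of the $\rho$'s: $\tfrac12\gg e^{-\beta(1-c)}\gg e^{-\beta}$ when $\beta(1-c)\gg1$ and $\beta c\gg1$, which hold under $\beta\gg\log N$ with $c$ fixed.

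(iii) The relation $\Delta_{\mathrm{cl}_1\to o}\asymp\Delta_{o\to\mathrm{cl}_1}$. Both $\rho$'s are $e^{-\beta}(1+o(1))$ and both energy rises are $\tfrac{\varepsilon^2}{4}(d-1)(1+o(1))$, so their ratio tends to $1$.
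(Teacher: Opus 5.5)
Your proposal is correct and follows essentially the paper's own route: the exact log identity expanded to $\Delta_{r\to k}\sim\rho_{r\mid k}\,\Delta E_k^{\mathrm{fp}}$, susceptibilities read off the fixed-point weights, and the split $\{\rho_{o\mid o}-\rho_{\mathrm{cl}_1\mid\mathrm{cl}_1}\}\Delta E_{\mathrm{cl}}^{\mathrm{fp}}+\rho_{o\mid o}\{\Delta E_o^{\mathrm{fp}}-\Delta E_{\mathrm{cl}}^{\mathrm{fp}}\}$ into two positive terms for the self-replay comparison, with the $\gg$ and $\asymp$ relations reduced to ratios of susceptibilities because both energy rises are $\tfrac{\varepsilon^2}{4}(d-1)(1+o(1))$. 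One sharpening: the second $\delta$-derivative of $\tfrac1\beta\log(1+p\,e^{\beta\delta})$ is $\beta\sigma(1-\sigma)$ with $\sigma=p\,e^{\beta\delta}/(1+p\,e^{\beta\delta})\approx p/(1+p)$, so your Step~1 remainder is really $O(\beta\rho_{r\mid k}\varepsilon^4)$, as in the paper; this gives the $\gg$ and $\asymp$ relations with relative error $O(\beta\varepsilon^2)$ and removes any need for $\beta\varepsilon^2\ll e^{-\beta}$ (only the first strict inequality needs $\varepsilon$ small against $e^{-\beta(1-c)}$, as you note, and $p_{k\mid k}$ is a Task-1 quantity with no $\varepsilon$-corrections to control).
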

\noindent\emph{Intuition.}
Equation~(\ref{eq:replay-master}) says that a replay sample reduces forgetting in proportion to its softmax weight at the target fixed point. 
The first inequality, $\Delta_{o\to o}>\Delta_{\mathrm{cl}_1\to\mathrm{cl}_1}$, compares self-replay: both have susceptibility $\rho\simeq 1/2$, but the outlier target has the larger energy rise by Theorem~\ref{thm:forgetting}, so high-energy self-replay gives the largest per-sample rescue.
The gap $\Delta_{\mathrm{cl}_1\to\mathrm{cl}_1}\gg\Delta_{\mathrm{cl}_1\to\mathrm{cl}_2}$ says that replacing self-replay by another cluster member loses the small overlap factor $e^{-\beta(1-c)}$, although it can still help nearby low-energy states.
Finally, $\Delta_{\mathrm{cl}_1\to\mathrm{cl}_2}\gg\{\Delta_{\mathrm{cl}_1\to o},\Delta_{o\to\mathrm{cl}_1}\}$ and $\Delta_{\mathrm{cl}_1\to o}\asymp\Delta_{o\to\mathrm{cl}_1}$ mean that cross-type replay between the cluster and the outlier is exponentially weaker, because their softmax overlap is only $e^{-\beta}$. 
Thus high-energy replay gives a tall-but-narrow rescue, while low-energy cluster replay gives a shallower-but-broader rescue over neighboring low-energy states.

\begin{wrapfigure}[10]{r}{0.34\linewidth}
\vspace{-3\baselineskip}
\centering
\includegraphics[width=\linewidth]{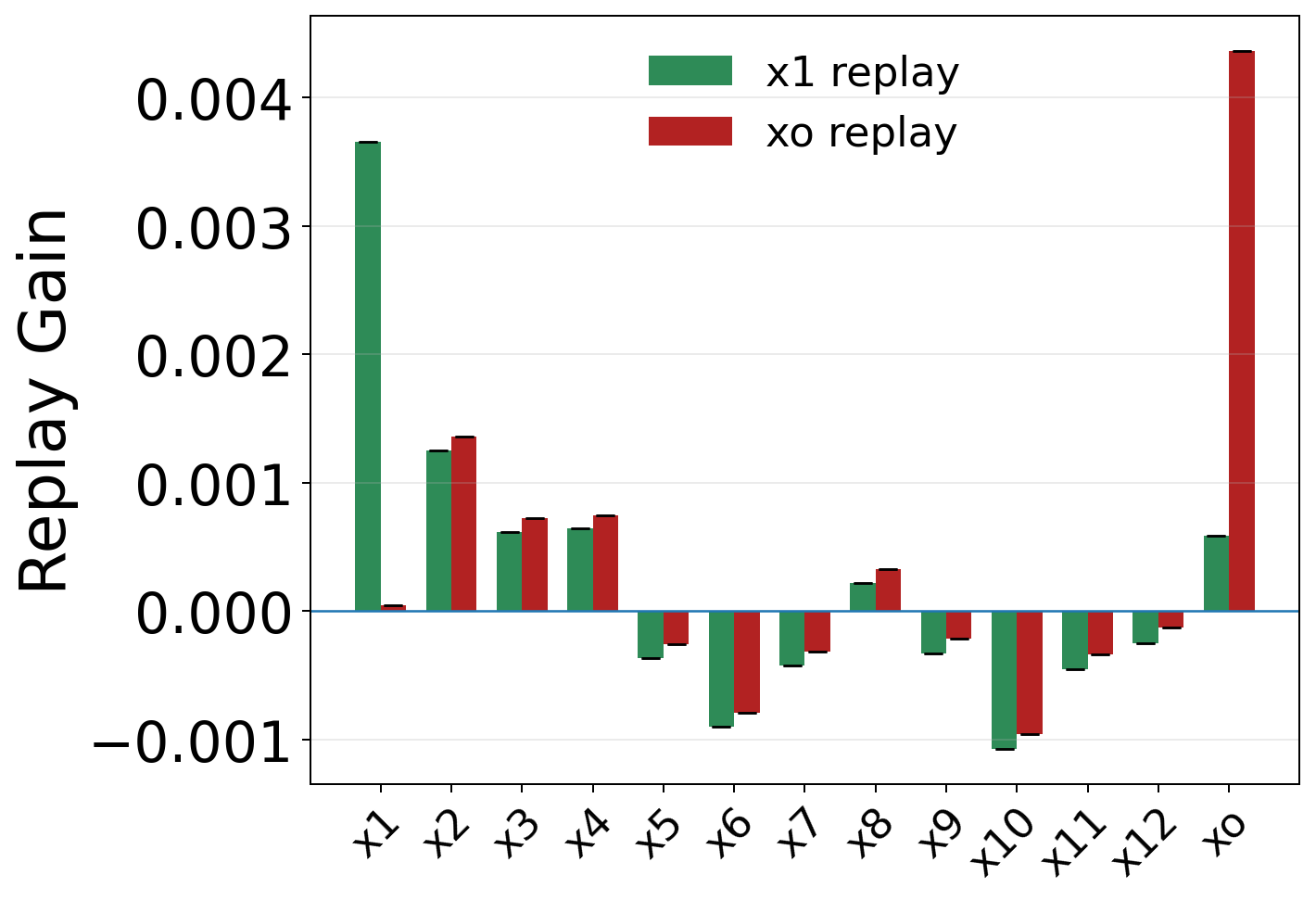}
\caption{MHN-BM corroboration of the replay ordering.}
\label{fig:rp-attnbm}
\end{wrapfigure}
\label{sec:replay}
\subsection{Synthetic verification of replay ordering}
\label{sec:synthetic-replay}

We verify Theorem~\ref{thm:replay} on the same synthetic systems. 
In the closed-form MHN with
\(N=12,d=50,c=0.35,\beta=8,\varepsilon=0.05\), the replay-gain shows the predicted hierarchy:
outlier self-replay is largest, cluster self-replay is next, within-cluster cross replay is smaller, and cross-type replay between the cluster and the outlier is weakest. 
The full result is reported in Appendix~\ref{app:toy-replay}. 
The trained MHN-BM with
\(N=12,d=50,c=0.35,\beta=16,\varepsilon_{\mathrm{rot}}=0.02\)
shows the almost same qualitative ordering in Fig.~\ref{fig:rp-attnbm}.

\subsection{Application to Diffusion models}
\label{sec:exp-replay}

Theorem~\ref{thm:replay} delivers a concrete selection rule of replay samples: among Task-1 samples, replay those that maximize their own energy (thereby inheriting a large $\Delta E_k^{\mathrm{fp}}$) and that self-replay (so that $\rho_{k\mid k}\approx 1/2$). The simplest realization is \emph{energy-guided top-$K$ replay}.

\paragraph{Replay buffer.}
We pick the top-$K$ highest-energy samples of Task 1 as the buffer, $\mathcal{B}=\mathrm{TopK}_{x\in\mathcal{D}_1}E_\theta(x)$, with $K=5000$ in all experiments.
During Task-2 fine-tuning we augment each mini-batch with replay samples drawn from $\mathcal{B}$, so that the effective training set is $\mathcal{D}_2\cup\mathcal{B}$ (25{,}000 new + 5{,}000 replay images per epoch).
We compare against \emph{random} replay (uniform top-$K$) and \emph{no replay} baselines.

We again focus on the two diffusion-model systems of Section~\ref{sec:exp-forgetting} as the main empirical test.


\paragraph{Stable Diffusion (Fig.~\ref{fig:replay-sd}).}
On split CIFAR-10, we compare three replay policies with identical buffer size $K=5000$: 
\emph{energy-guided top-$K$} (the $K$ highest-energy Task-1 images), 
\emph{random}, and \emph{energy-guided bottom-$K$} (the $K$ lowest-energy images). 
After Task-2 fine-tuning, per-sample reconstruction MSE at $t=0.8$ is measured for every Task-1 image and binned by energy.

Eq.~(\ref{eq:replay-master}) predicts a mean--range tradeoff in the replay effect:
$\Delta_{r\to k}\sim \rho_{r\mid k}\,\Delta E_k^{\mathrm{fp}} .$
The energy-rise term $\Delta E_k^{\mathrm{fp}}$ makes the per-sample replay gain larger for high-energy targets, whereas the relevance factor $\rho_{r\mid k}$ determines how broadly a replay sample affects other targets. 
Thus, high-energy replay is expected to produce a large but localized reduction in the high-energy tail, while low-energy replay is expected to produce a smaller reduction spread over a broader set of low-energy samples. 
This is what we observe: the high-energy buffer (Fig.~\ref{fig:rp-sd-hi}) yields a sharply peaked bin-mean reduction at high energy (Fig.~\ref{fig:rp-sd-mean}), but its bin-sum reduction is confined to a narrow energy range (Fig.~\ref{fig:rp-sd-sum}). 
In contrast, the low-energy buffer (Fig.~\ref{fig:rp-sd-lo}) gives a smaller bin-mean reduction at low energy, but the affected range is broader, so the bin-sum reduction extends across a wider energy interval. 
This complementary pattern is consistent with the replay ordering in Eq.~(\ref{eq:replay-order}).

\begin{figure}[t]
\centering
\begin{subfigure}[b]{0.25\linewidth}
\centering
\includegraphics[width=\linewidth]{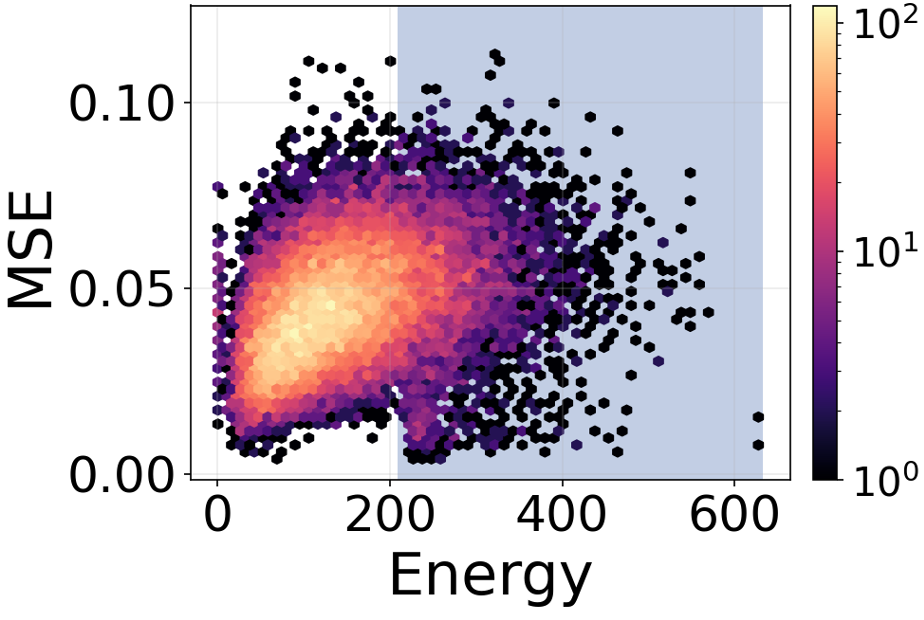}
\caption{High-energy replay.}
\label{fig:rp-sd-hi}
\end{subfigure}\hfill
\begin{subfigure}[b]{0.25\linewidth}
\centering
\includegraphics[width=\linewidth]{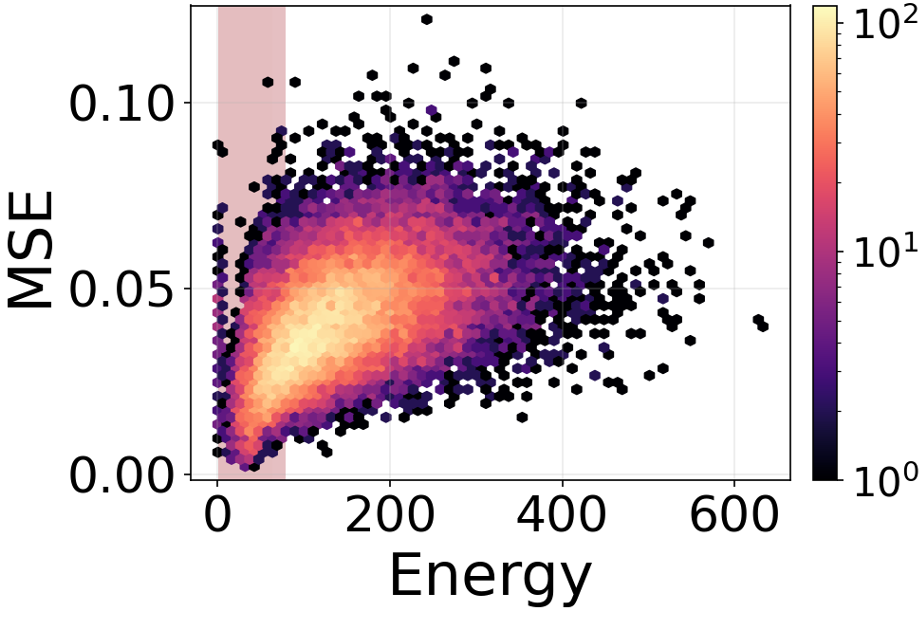}
\caption{Low-energy replay.}
\label{fig:rp-sd-lo}
\end{subfigure}\hfill
\begin{subfigure}[b]{0.25\linewidth}
\centering
\includegraphics[width=\linewidth]{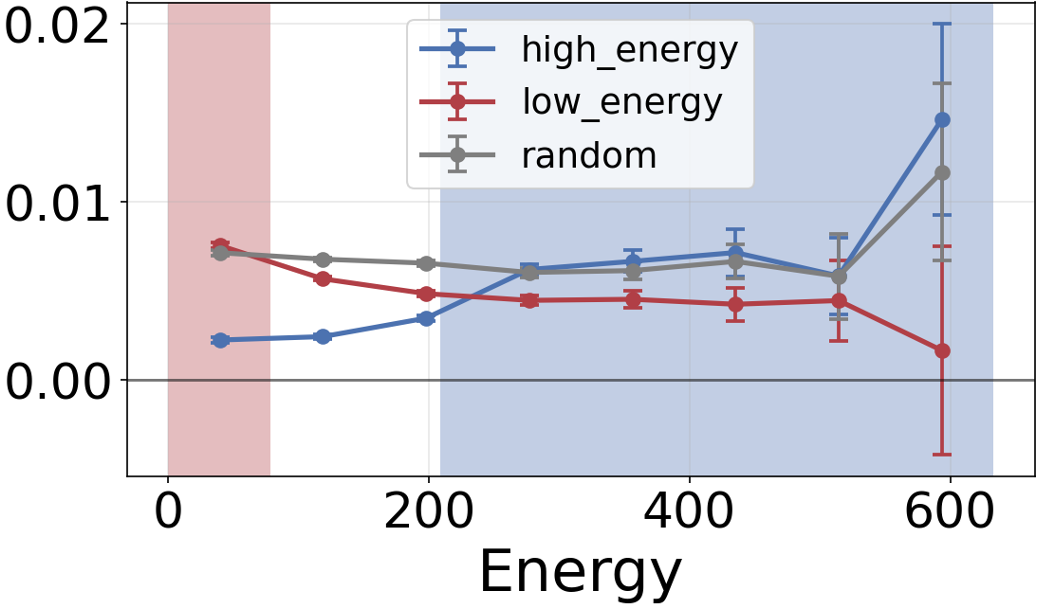}
\caption{Mean Reduction.}
\label{fig:rp-sd-mean}
\end{subfigure}\hfill
\begin{subfigure}[b]{0.25\linewidth}
\centering
\includegraphics[width=\linewidth]{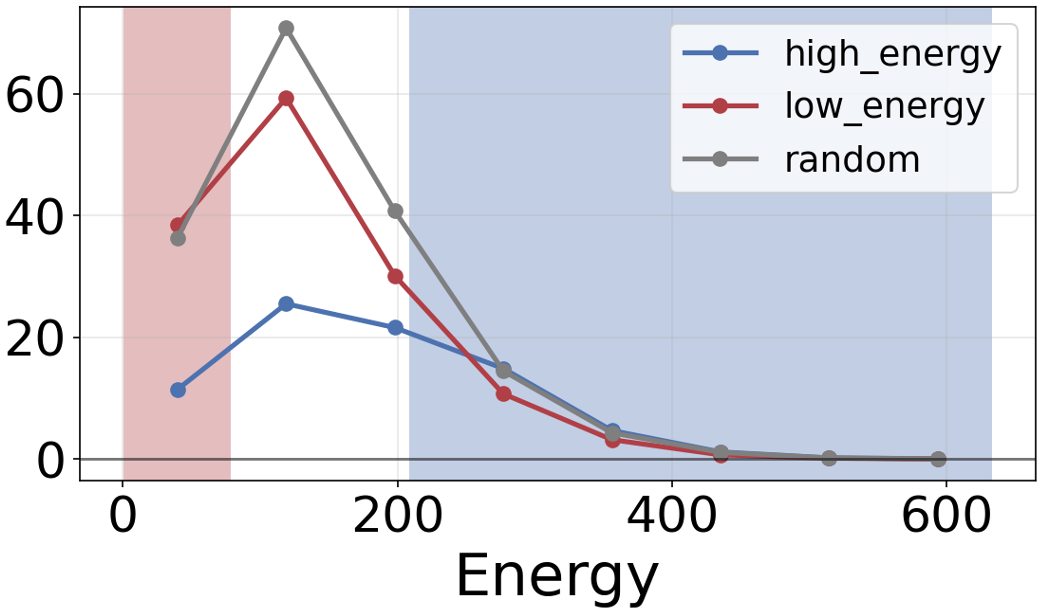}
\caption{Sum Reduction.}
\label{fig:rp-sd-sum}
\end{subfigure}
\caption{Energy-guided replay on Stable Diffusion / split CIFAR-10. (a) Per-sample reconstruction MSE after Task~2 with the top-$K$ \emph{high-energy} replay buffer: MSE drops sharply for high-energy images (strong but narrow rescue). (b) With the bottom-$K$ \emph{low-energy} buffer the per-sample drop is smaller but spread across a wider energy range (weaker but broader rescue). (c,d) Per-bin MSE reduction (baseline $-$ replay) versus energy, averaged (c) and summed (d) over the bin: the high-energy buffer concentrates its rescue at high energy, the low-energy buffer distributes a smaller per-sample rescue over many more samples, exactly as Theorem~\ref{thm:replay} predicts via $\Delta_{r\to k}\approx\rho_{r\mid k}\,\Delta E_k^{\mathrm{fp}}$.}
\label{fig:replay-sd}
\end{figure}
\begin{figure}[t]
\centering
\begin{subfigure}[b]{0.24\linewidth}
\centering
\includegraphics[width=\linewidth]{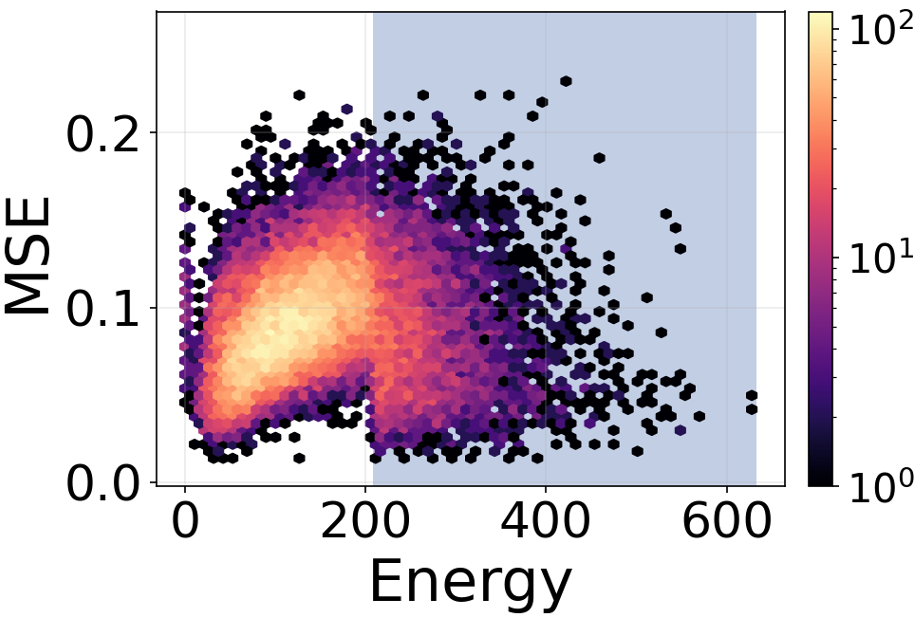}
\caption{High-energy replay.}
\label{fig:rp-pix-hi}
\end{subfigure}\hfill
\begin{subfigure}[b]{0.24\linewidth}
\centering
\includegraphics[width=\linewidth]{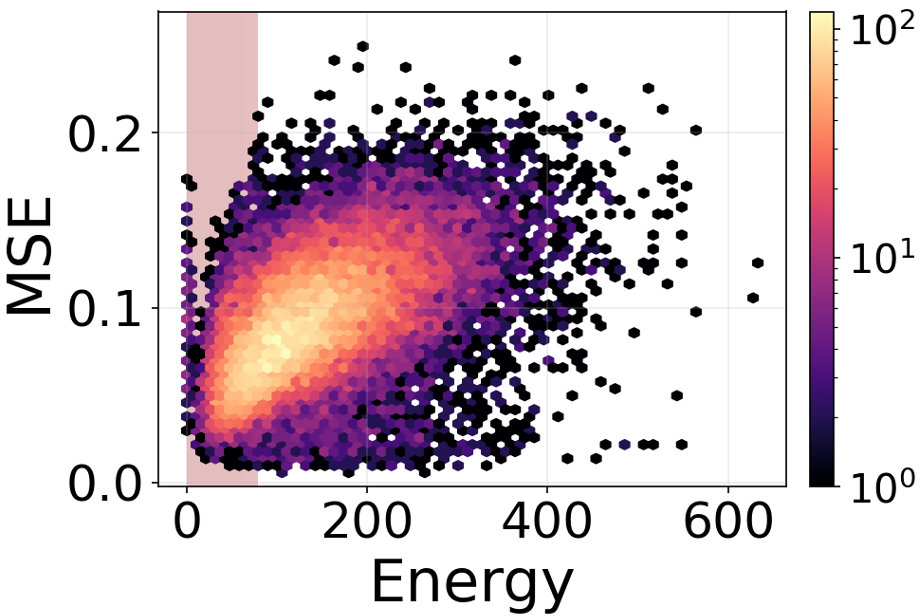}
\caption{Low-energy replay.}
\label{fig:rp-pix-lo}
\end{subfigure}\hfill
\begin{subfigure}[b]{0.24\linewidth}
\centering
\includegraphics[width=\linewidth]{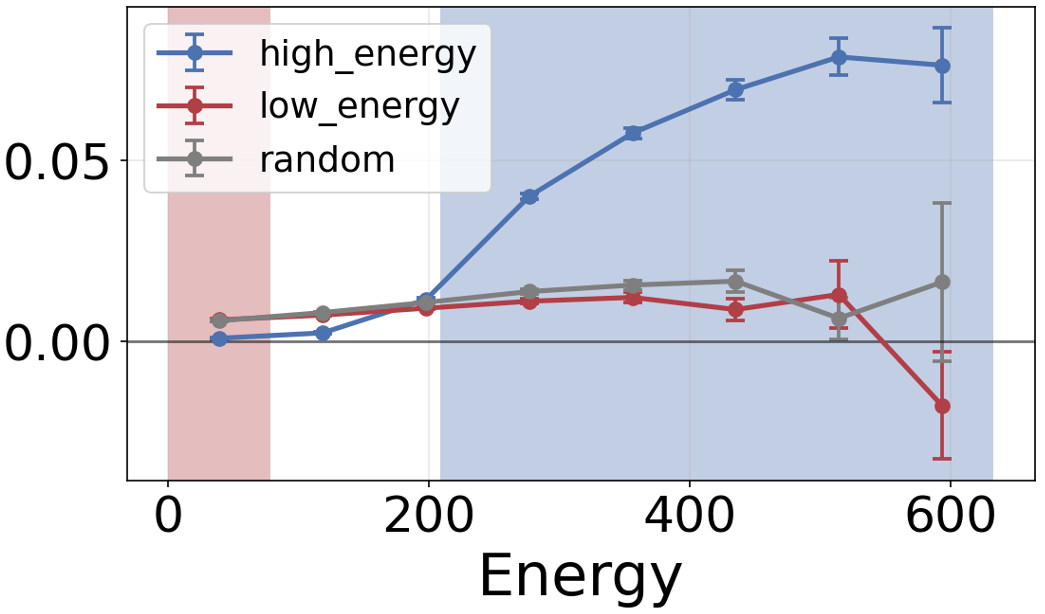}
\caption{Mean Reduction.}
\label{fig:rp-pix-mean}
\end{subfigure}\hfill
\begin{subfigure}[b]{0.24\linewidth}
\centering
\includegraphics[width=\linewidth]{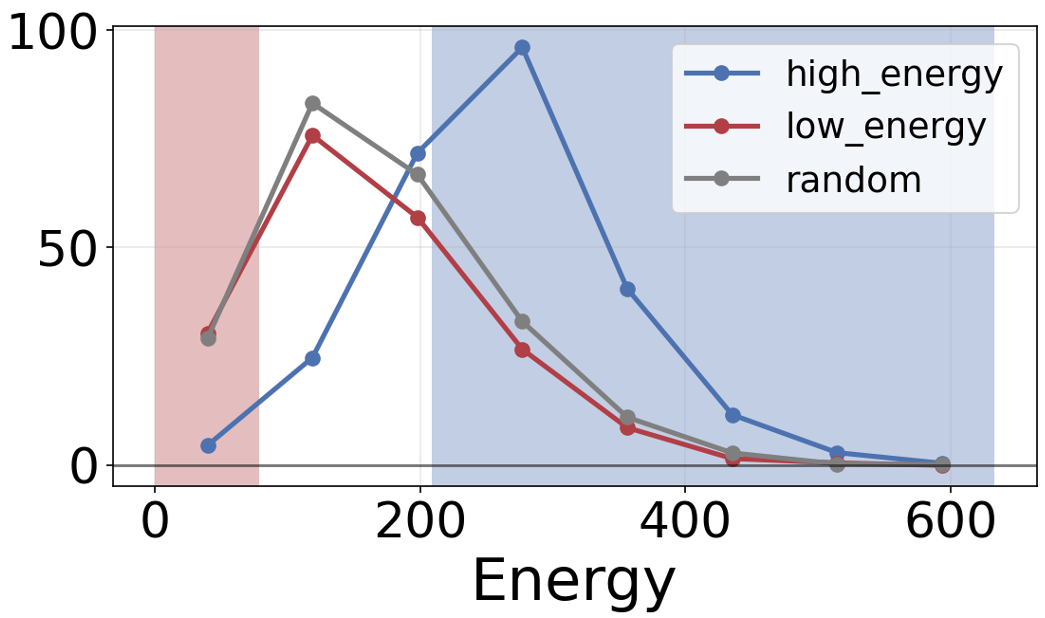}
\caption{Sum Reduction.}
\label{fig:rp-pix-sum}
\end{subfigure}
\caption{The same energy-guided replay analysis on a pixel-space DDPM (trained from scratch on split CIFAR-10) reproduces every qualitative conclusion of Fig.~\ref{fig:replay-sd}: high-energy replay rescues high-energy samples, low-energy replay rescues only low-energy samples, and the total reduction is dominated by the high-energy tail. The finding is not tied to the Stable Diffusion latent space.}
\label{fig:replay-pix}
\end{figure}

\paragraph{Pixel-space diffusion (Fig.~\ref{fig:replay-pix}).}
We repeat the three-policy comparison with the pixel-space DDPM of Section~\ref{sec:exp-forgetting}. Every qualitative conclusion from Stable Diffusion is reproduced: (i) high-energy replay protects high-energy images, (ii) low-energy replay only helps low-energy images, (iii) aggregated reductions place energy-guided top-$K$ above random above low-energy. The replay benefit is again concentrated in the high-energy tail, as predicted. Together, Fig.~\ref{fig:replay-sd} and Fig.~\ref{fig:replay-pix} constitute our main empirical case that Hopfield energy is a practical, diffusion-agnostic replay criterion.

\section{Conclusion and Limitations}
\label{sec:conclusion}

Through theoretical analyses of modern Hopfield networks and empirical confirmation on diffusion models, we find that Hopfield energy serves as an indicator of forgetting and also provides a criterion for selecting replay samples.
These results suggest that forgetting in unsupervised generative models can be evaluated not only through external evaluation metrics, but also through the geometry of the learned state landscape. 
We expect that this work will serve as a theoretical and practical foundation for future studies of knowledge forgetting, replay analysis, and energy-based diagnostics  of generative models.

\paragraph{Limitations.}
(i) Forgetting through the learning of score functions. 
We consider intrinsic forgetting, that is, an unavoidable factor that arises  when a generative model perfectly approximates task-dependent input distributions. In practical diffusion models, however, the score function is represented by a neural network, and additional forgetting may appear even for similar inputs. In supervised learning, theoretical analyses of such forgetting have been developed \cite{lee2022maslow}, and extending the analyses to the score function is an important direction for future work.
(ii) Finite sample sizes.
Although our theory can handle structured data, it is currently restricted to the case of finitely many memories. Quantifying forgetting in regimes where the number of memories grows with the input dimension, and where the memories encode richer data manifolds~\citep{Achilli2025-cm}, remains an open problem.  
(iii) Broader experimental validation. For future practical development, it would also be useful to examine scenarios not covered in this study, such as domains beyond images and online continual learning without explicit task boundaries.
A more elaborated indicator of internal states (e.g., probability-flow ODE marginals) could tighten the correlation of Fig.~\ref{fig:fg-sd},~\ref{fig:fg-pix}.






\bibliographystyle{unsrtnat} 
\bibliography{references/paperpile}

\newpage
\appendix
\renewcommand{\thetable}{S.\arabic{table}}
\renewcommand{\thefigure}{S.\arabic{figure}}
\setcounter{table}{0}
\setcounter{figure}{0}
\renewcommand{\theHtable}{S.\arabic{table}}
\renewcommand{\theHfigure}{S.\arabic{figure}}

\part*{Appendices}

\section{Diffusion--Hopfield Correspondence}
\label{app:diffusion-hopfield-correspondence}

We summarize the correspondence between diffusion energies and modern Hopfield energies. 
Consider a forward noising SDE
\[
dY_s=f(Y_s,s)\,ds+g(s)\,d\widehat W_s,
\]
and let \(p_s(x)\) be the marginal density of \(Y_s\). 
For additive noise and conservative drift, the reverse-time SDE can be written as
\[
dX_t
=
\left[
g^2(T-t)\nabla_x\log p_{T-t}(X_t)
-
f(X_t,T-t)
\right]dt
+
g(T-t)dW_t .
\]
Equivalently,
\[
dX_t=-\nabla_x E_{T-t}(X_t)\,dt+g(T-t)dW_t,
\]
where the implicit energy is
\[
E_s(x)
=
-g^2(s)\log p_s(x)
+
\int_0^x f(z,s)\cdot dz ,
\]
up to an additive constant~\citep{RayaAmbrogioni2023SSB}. 
To examine the attractor structure at a fixed diffusion time \(s\), we freeze the time-dependent coefficients \(g(s)\) and \(f(\cdot,s)\) and study the deterministic drift part of the reverse SDE. That is, we drop only the Brownian increment \(g(s)dW_t\), while keeping the \(g^2(s)\nabla_x\log p_s\) term in the drift. Dropping the Brownian increment is justified in the low-noise (late reverse-time) regime: as \(s\to 0\), the noise scale \(g(s)\to 0\) and the effective inverse temperature of the implicit energy \(\beta_s=\tau_s^{-2}\) (defined below) diverges, which suppresses the stochastic fluctuations and yields exact convergence onto the same patterns that minimize the modern Hopfield energy~\citep{Ambrogioni2024Associative}. Under the conservative-drift assumption above, \(f(\cdot,s)=\nabla\Phi_s(\cdot)\) for a scalar potential \(\Phi_s\), so the line integral in \(E_s\) is path independent. Along the deterministic flow
\[
\frac{d x_\lambda}{d\lambda}
=
-\nabla_x E_s(x_\lambda),
\]
the energy decreases as
\[
\frac{d}{d\lambda}E_s(x_\lambda)
=
-\|\nabla_x E_s(x_\lambda)\|^2
\le 0,
\]
so stable equilibria satisfying \(\nabla_x E_s(x)=0\) define the attractor structure of the reverse dynamics.

Now assume that the data distribution is the empirical distribution over a finite set of patterns,
\[
p_0(y)=\frac{1}{M}\sum_{i=1}^{M}\delta(y-y_i).
\]
In the variance-exploding case,
\[
dY_s=g(s)d\widehat W_s,
\qquad
f=0,
\qquad
\tau_s^2=\int_0^s g^2(r)\,dr .
\]
Thus
\[
Y_s\mid y_i\sim\mathcal N(y_i,\tau_s^2I),
\]
and the noisy marginal is the Gaussian mixture
\[
p_s(x)
\propto
\sum_i
\exp\!\left(
-\frac{\|x-y_i\|^2}{2\tau_s^2}
\right).
\]
Expanding the square gives
\[
p_s(x)
\propto
\exp\!\left(-\frac{\|x\|^2}{2\tau_s^2}\right)
\sum_i
\exp\!\left(
\tau_s^{-2}y_i^\top x
-
\frac{\|y_i\|^2}{2\tau_s^2}
\right).
\]
When the memories are normalized, \(\|y_i\|=r\), the last term is independent of \(i\) and can be absorbed into the normalization. 
Since \(f=0\), we have \(E_s(x)=-g^2(s)\log p_s(x)\). 
Multiplying by the positive factor \(\tau_s^2/g^2(s)\), which does not change the equilibria, yields
\[
-\tau_s^2\log p_s(x)
=
-\tau_s^2
\log\sum_i
\exp(\tau_s^{-2}y_i^\top x)
+
\frac12\|x\|^2
+
\mathrm{const}.
\]
With \(\beta_s=\tau_s^{-2}\), this becomes
\[
E_{\mathrm{MHN}}(x;\beta_s)
=
-\frac{1}{\beta_s}
\log\sum_{i=1}^{M}
\exp(\beta_s y_i^\top x)
+
\frac12\|x\|^2
+
\mathrm{const},
\]
which is the continuous modern Hopfield energy~\citep{Ambrogioni2024Associative}. 
Thus, for discrete normalized patterns, the Gaussian-mixture marginal induced by the forward diffusion defines the same log-sum-exp energy landscape as a modern Hopfield network.

For variance-preserving diffusions, the forward SDE is typically
\[
dY_s=-\frac12 b(s)Y_s\,ds+\sqrt{b(s)}\,d\widehat W_s .
\]
The transition kernel is still Gaussian,
\[
Y_s\mid y_i
\sim
\mathcal N(\theta_s y_i,\tau_s^2 I),
\qquad
\theta_s=
\exp\!\left(
-\frac12\int_0^s b(r)\,dr
\right),
\qquad
\tau_s^2=1-\theta_s^2 .
\]
Therefore \(p_s\) is again a Gaussian mixture, now centered at the shrunk memories \(\theta_s y_i\). 
Substituting this marginal into the implicit energy gives, for normalized memories,
\[
\frac{E_s(x)}{b(s)}
=
-\log\sum_i
\exp\!\left(
\frac{\theta_s}{\tau_s^2}y_i^\top x
\right)
+
\left(
\frac{1}{2\tau_s^2}
-
\frac14
\right)\|x\|^2
+
\mathrm{const}.
\]
Thus the variance-preserving case differs at finite noise by pattern shrinkage and time-dependent coefficients. 
In the low-noise regime, \(\theta_s\to1\) and \(\tau_s\to0\), this reduces to the same log-sum-exp plus quadratic structure, with effective inverse temperature proportional to \(\theta_s/\tau_s^2\). 
More generally, for additive SDEs with conservative drift, the low-noise transition kernel is locally Gaussian around each pattern, so the same Gaussian-mixture argument gives the Hopfield fixed-point structure asymptotically~\citep{Ambrogioni2024Associative}.

\section{Full derivation of Theorem~\ref{thm:forgetting}}
\label{app:forgetting-proof}


We organize the proof as follows. Subsection~\ref{app:setup-rotation} states the equal-angular cluster-plus-outlier model and the rotation identity. Subsection~\ref{app:sym-reduction} reduces the fixed-point equations by symmetry. Subsection~\ref{app:existence-localization} separates the existence of pattern-specific fixed points from the stronger localization condition used to control leakage corrections. Subsection~\ref{app:fp-asym} derives the large-$\beta$ fixed-point asymptotics. Subsection~\ref{app:Hessian} collects the Hessian calculation, the rotation average, and the first-order expansion in the off-memory weights that produce the formula for \(\Delta E_*^{\mathrm{fp}}\). Finally, Subsection~\ref{app:DE-comparison} specializes that formula to the outlier and cluster fixed points and proves Theorem~\ref{thm:forgetting}.

\subsection{Problem setup and rotation identity}
\label{app:setup-rotation}

Let $x_1,\dots,x_N,x_o\in\mathbb{S}^{d-1}$ satisfy~(\ref{eq:eqang-setup}); we assume $0<c<1$ and that the ambient dimension $d$ is large enough for the configuration to exist. A canonical large-$d$ realization is the latent-Gaussian construction $x_i=\sqrt{c}\,\mu+\sqrt{1-c}\,z_i$ with a fixed unit centroid $\mu\in\mathbb{S}^{d-1}$ and i.i.d.\ Gaussian residuals $z_i\perp\mu$ of unit variance, for which $\|x_i\|^2=1$ and $x_i^\top x_j=c+O(d^{-1/2})$ concentrate at the equal-angular geometry; the analysis below depends only on~(\ref{eq:eqang-setup}) and applies to any realization. The MHN energy and update rule are
\begin{equation}
E(\xi\mid X)=-\tfrac{1}{\beta}\log\sum_{i=1}^{N+1}e^{\beta x_i^\top\xi}+\tfrac12\|\xi\|^2,\qquad T(\xi)=\sum_i p_i(\xi)x_i,\qquad p_i(\xi)=\frac{e^{\beta x_i^\top\xi}}{\sum_k e^{\beta x_k^\top\xi}},
\label{eq:app-E-T}
\end{equation}
so that a fixed point $\xi_*=T(\xi_*)$ satisfies $\xi_*=Xp_*$ with $p_*=\mathrm{softmax}(\beta X^\top\xi_*)$. The rotation $X_2=VX$ induces
\begin{equation}
E(\xi\mid VX)=E(V^\top\xi\mid X),
\label{eq:app-rotation-identity}
\end{equation}
reducing Task-2 evaluation at $\xi$ to Task-1 evaluation at $V^\top\xi$.

\subsection{Symmetry reduction of the fixed-point equations}
\label{app:sym-reduction}

Define
\begin{equation}
s:=\sum_{i=1}^N x_i,\qquad A:=1+(N-1)c.
\label{eq:app-s-A}
\end{equation}
A direct calculation using~(\ref{eq:eqang-setup}) gives
\begin{equation}
x_i^\top s=A\quad(i\le N),\qquad x_o^\top s=0,\qquad \|s\|^2=NA.
\label{eq:app-s-properties}
\end{equation}

\paragraph{Outlier-like fixed point.}
The outlier-like fixed point is invariant under permutations of the cluster members, so its softmax weights are
\begin{equation}
p^{\mathrm{out}}=(u,\dots,u,v),\qquad Nu+v=1,
\label{eq:app-p-out}
\end{equation}
and $\xi_{\mathrm{out}}=u\,s+v\,x_o$. The logits read $x_i^\top\xi_{\mathrm{out}}=Au$ for $i\le N$ and $x_o^\top\xi_{\mathrm{out}}=v$. Substituting into the softmax,
\begin{equation}
u=\frac{e^{\beta Au}}{Ne^{\beta Au}+e^{\beta v}},\qquad v=\frac{e^{\beta v}}{Ne^{\beta Au}+e^{\beta v}},
\label{eq:app-out-softmax}
\end{equation}
and taking the log-ratio together with $u=(1-v)/N$ yields the scalar self-consistency
\begin{equation}
\log\frac{Nv}{1-v}=\beta\Big(v-\frac{A}{N}(1-v)\Big).
\label{eq:app-out-scalar}
\end{equation}

\paragraph{Cluster-like fixed point near $x_1$.}
Permuting $x_2,\dots,x_N$ leaves the cluster-like fixed point invariant, so
\begin{equation}
p^{\mathrm{cl}}=(a,b,\dots,b,q),\qquad a+(N-1)b+q=1,
\label{eq:app-p-cl}
\end{equation}
and $\xi_{\mathrm{cl}}=a\,x_1+b\sum_{j\ge 2}x_j+q\,x_o$. The three distinct logits are
\begin{equation}
z_1=a+(N-1)cb,\qquad z_-=ca+\{1+(N-2)c\}b,\qquad z_o=q.
\label{eq:app-z-cl}
\end{equation}
The softmax equations and the identity $z_1-z_-=(1-c)(a-b)$ give the two pattern-selection equations
\begin{equation}
\log\frac{a}{b}=\beta(1-c)(a-b),\qquad \log\frac{a}{q}=\beta(z_1-q).
\label{eq:app-cl-ratios}
\end{equation}
The first is a Curie--Weiss-type fixed-point equation with effective inverse temperature
\begin{equation}
\beta_{\mathrm{eff}}:=\beta(1-c),
\label{eq:app-beta-eff}
\end{equation}
matching the equal-angular capacity scale of~\citet{Achilli2025-cm}.

\subsection{Existence threshold and localization condition}
\label{app:existence-localization}

We separate two requirements: (i) \emph{existence} of pattern-specific cluster fixed points, and (ii) the stronger \emph{localization condition} that keeps the leakage corrections small in the Hessian expansion below.

\subsubsection{Existence threshold}

Drop the outlier coordinate temporarily and consider the symmetric cluster sector $p^{\mathrm{cl}}=(a,b,\dots,b)$ with $a+(N-1)b=1$. Introduce the order parameter
\begin{equation}
m:=a-b,\qquad a=\frac{1+(N-1)m}{N},\qquad b=\frac{1-m}{N}.
\label{eq:app-m-def}
\end{equation}
The pattern-selection equation~(\ref{eq:app-cl-ratios}) becomes
\begin{equation}
\log\frac{1+(N-1)m}{1-m}=\beta_{\mathrm{eff}}\,m,\qquad \beta_{\mathrm{eff}}=\beta(1-c).
\label{eq:app-m-equation}
\end{equation}
The function on the left is convex in $m\in(0,1)$, so~(\ref{eq:app-m-equation}) admits a non-trivial root iff the line of slope $\beta_{\mathrm{eff}}$ touches the convex graph. Tangency at some $m_c\in(0,1)$ defines the critical inverse temperature
\begin{equation}
\beta_c(N):=\min_{m\in(0,1)}\frac{1}{m}\log\frac{1+(N-1)m}{1-m},
\label{eq:app-beta-c-def}
\end{equation}
characterised by
\begin{equation}
\log\frac{1+(N-1)m_c}{1-m_c}=\frac{N\,m_c}{(1-m_c)\{1+(N-1)m_c\}},\qquad \beta_c(N)=\frac{N}{(1-m_c)\{1+(N-1)m_c\}}.
\label{eq:app-mc-tangency}
\end{equation}
Equivalently, with $t_c:=\{1+(N-1)m_c\}/(1-m_c)$,
\begin{equation}
N\,t_c\log t_c=(t_c-1)(t_c+N-1),\qquad \beta_c(N)=\frac{(t_c+N-1)^2}{N\,t_c}.
\label{eq:app-tc-equation}
\end{equation}
The closed form is implicit; the asymptotics is explicit. Elementary bounds on $\log t_c$ yield
\begin{equation}
\log N+1\le \beta_c(N)\le \log N+\log(1+\log N)+1+\frac{\log(1+\log N)}{\log N},
\label{eq:app-betac-bounds}
\end{equation}
so $\beta_c(N)=\Theta(\log N)$, and as $N\to\infty$,
\begin{equation}
\beta_c(N)=\log N+\log\log N+1+o(1).
\label{eq:app-betac-asym}
\end{equation}
Pattern-specific cluster fixed points therefore exist iff
\begin{equation}
\beta(1-c)>\beta_c(N).
\label{eq:app-existence-condition}
\end{equation}

\subsubsection{Leakage and the localization condition}

Existence does not by itself bound the size of corrections in the Hessian expansion below. We use a single auxiliary quantity, the \emph{leakage}: the total softmax mass assigned to memories other than the dominant one. For the outlier and cluster fixed points, define
\begin{equation}
\eta_{\mathrm{out}}:=1-v=Nu,\qquad \eta_{\mathrm{cl}}:=1-a=(N-1)b+q.
\label{eq:app-eta-defs}
\end{equation}
Using $s\perp x_o$ for the outlier and the explicit form of the cluster fixed point,
\begin{equation}
\xi_{\mathrm{out}}-x_o=u\,s-(1-v)\,x_o=\eta_{\mathrm{out}}\Big(\tfrac{s}{N}-x_o\Big),\qquad \xi_{\mathrm{cl}}-x_1=b\sum_{j=2}^N(x_j-x_1)+q(x_o-x_1),
\label{eq:app-distance-form}
\end{equation}
so $\|\xi_{\mathrm{out}}-x_o\|\le C_{\mathrm{out}}\eta_{\mathrm{out}}$ and $\|\xi_{\mathrm{cl}}-x_1\|\le C_{\mathrm{cl}}\eta_{\mathrm{cl}}$ with $c$-dependent constants. The condition must control $\beta\eta$, not only $\eta$, because the leakage enters the Hessian multiplied by $\beta$. Indeed, if $p_k^*=1-\sum_{j\neq k}\eta_j$ and $p_j^*=\eta_j$, then
\begin{equation}
\sum_i p_i^*x_i x_i^\top-\xi_*\xi_*^\top
=
\sum_{j\neq k}\eta_j(x_j-x_k)(x_j-x_k)^\top+O(\eta^2),
\label{eq:app-hessian-leakage-scale}
\end{equation}
and hence $H_*=I-\beta\sum_{j\neq k}\eta_j(x_j-x_k)(x_j-x_k)^\top+O(\beta\eta^2)$. Thus the working localization condition is
\begin{equation}
\beta\,\eta_{\mathrm{out}}\ll 1,\qquad \beta\,\eta_{\mathrm{cl}}\ll 1.
\label{eq:app-localization-cond}
\end{equation}
The next two paragraphs derive the corresponding outlier and cluster forms of this same condition.

\paragraph{Outlier-like fixed point.}
Set $\delta_{\mathrm{out}}:=1-v=\eta_{\mathrm{out}}$ so $u=\delta_{\mathrm{out}}/N$. Substituting into~(\ref{eq:app-out-scalar}),
\begin{equation}
\log\frac{N(1-\delta_{\mathrm{out}})}{\delta_{\mathrm{out}}}=\beta\Big[1-\big(1+\tfrac{A}{N}\big)\delta_{\mathrm{out}}\Big],
\label{eq:app-out-delta-eq}
\end{equation}
or in fixed-point form
\begin{equation}
\frac{\delta_{\mathrm{out}}}{1-\delta_{\mathrm{out}}}=N\,e^{-\beta}\exp\!\big[\beta(1+\tfrac{A}{N})\,\delta_{\mathrm{out}}\big].
\label{eq:app-out-delta-fixed}
\end{equation}
If $\beta N e^{-\beta}\ll 1$, the right-hand side is the contraction $N e^{-\beta}\,(1+O(\beta\delta_{\mathrm{out}}))$ and the leading balance reads $\delta_{\mathrm{out}}=N e^{-\beta}(1+o(1))$. Hence the outlier form of the localization condition is
\begin{equation}
\beta N\,e^{-\beta}\ll 1,
\label{eq:app-out-localization-cond}
\end{equation}
or, for $N\to\infty$, $\beta-\log N-\log\beta\to+\infty$. Under~(\ref{eq:app-out-localization-cond}),
\begin{equation}
v=1-N\,e^{-\beta}(1+o(1)),\qquad u=e^{-\beta}(1+o(1)).
\label{eq:app-out-localization-asym}
\end{equation}

\paragraph{Cluster-like fixed point.}
Equation~(\ref{eq:app-cl-ratios}) gives $b/a=\exp[-\beta(1-c)(a-b)]$. Writing $a=1-\eta_{\mathrm{cl}}$ and $a-b=1-\eta_{\mathrm{cl}}-b$,
\begin{equation}
b=a\,e^{-\beta(1-c)}\exp\!\big[\beta(1-c)(\eta_{\mathrm{cl}}+b)\big].
\label{eq:app-b-fixed}
\end{equation}
Similarly, from $\log(a/q)=\beta(z_1-q)$ with $z_1=a+(N-1)cb$,
\begin{equation}
q=a\,e^{-\beta}\exp\!\big[\beta\{(1-c)(N-1)b+2q\}\big].
\label{eq:app-q-fixed}
\end{equation}
The leading values of these off-memory weights are therefore $(N-1)\,e^{-\beta(1-c)}$ from the other cluster memories and $e^{-\beta}$ from the outlier. The exponential feedbacks in~(\ref{eq:app-b-fixed})--(\ref{eq:app-q-fixed}) are negligible iff the cluster form of the localization condition holds:
\begin{equation}
\beta(1-c)(N-1)\,e^{-\beta(1-c)}\ll 1,\qquad \beta\,e^{-\beta}\ll 1.
\label{eq:app-cl-localization-cond}
\end{equation}
Under~(\ref{eq:app-cl-localization-cond}),
\begin{equation}
\begin{split}
& b=e^{-\beta(1-c)}(1+o(1)),\qquad q=e^{-\beta}(1+o(1)),\\
& a=1-(N-1)\,e^{-\beta(1-c)}-e^{-\beta}+o\big((N-1)\,e^{-\beta(1-c)}+e^{-\beta}\big).
\label{eq:app-cl-localization-asym}
\end{split}
\end{equation}
Equations~(\ref{eq:app-out-localization-cond}) and~(\ref{eq:app-cl-localization-cond}) are simply the outlier and cluster forms of the same localization condition $\beta\eta\ll1$. Since $e^{-\beta(1-c)}\gg e^{-\beta}$ for $0<c<1$, the binding cluster constraint is $\beta(1-c)(N-1)\,e^{-\beta(1-c)}\ll 1$, equivalently $\beta(1-c)-\log N-\log\beta(1-c)\to+\infty$, i.e.
\begin{equation}
\beta(1-c)\ge \log N+\log\log N+\omega(1).
\label{eq:app-beta-localization-final}
\end{equation}

\subsubsection{Working assumption $\beta\gg\log N$}
\label{app:beta-strong}

The capacity threshold $\beta_c(N)=\log N+\log\log N+1+o(1)$ obtained in~(\ref{eq:app-betac-asym}) is the order at which the pattern-specific phase transition occurs. The boundary scale $\beta\sim\log N$ is mathematically interesting and has been the focus of single-task associative-memory analyses, which study how patterns emerge as $\beta$ crosses this threshold. The present paper, however, is about CL --- a more involved setting in which we want to study forgetting and replay at a Task-1 fixed point that is already cleanly localized. We therefore simplify by working in the strictly stronger regime
\begin{equation}
\beta\gg\log N,
\label{eq:app-beta-strong-regime}
\end{equation}
which directly implies the working condition
\begin{equation}
\beta\,N\,e^{-\beta}\ll 1.
\label{eq:app-working-cond}
\end{equation}
For fixed $0<c<1$, $\beta\gg\log N$ also implies $\beta(1-c)\gg\log N$, hence $\beta(1-c)(N-1)\,e^{-\beta(1-c)}\ll 1$, and we obtain the inclusion chain
\begin{equation}
\big\{\beta\gg\log N\big\}\ \subset\ \big\{\beta(1-c)\ge \log N+\log\log N+\omega(1)\big\}\ \subset\ \big\{\beta(1-c)>\beta_c(N)\big\}.
\label{eq:app-beta-inclusion}
\end{equation}
The strict inclusions are useful: the outermost set is the existence threshold; the middle set enforces the localization condition; the innermost set is what we adopt here. Under~(\ref{eq:app-beta-strong-regime}), every Task-1 pattern-specific fixed point is localized around its corresponding memory and the Hessian expansion in the leakage variables is valid.

\begin{assumption}[Working regime]
\label{cond:eqang-working}
We work throughout in the regime $0<c<1$ and $\beta\gg\log N$.
\end{assumption}

\subsection{Large-$\beta$ asymptotics of the Task-1 fixed points}
\label{app:fp-asym}

We collect the closed-form expansions used in the body.

\begin{proposition}[Outlier-like fixed point]
\label{prop:app-fp-out}
Under~(\ref{eq:app-out-localization-cond}),
\begin{equation}
v=1-N\,e^{-\beta}+O(\beta\,e^{-2\beta}),\qquad u=e^{-\beta}+O(\beta\,e^{-2\beta}),
\label{eq:app-vu-asym}
\end{equation}
hence
\begin{equation}
\xi_{\mathrm{out}}=x_o+e^{-\beta}(s-N\,x_o)+O(\beta\,e^{-2\beta}).
\label{eq:app-xi-out-asym}
\end{equation}
\end{proposition}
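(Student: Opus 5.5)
The proof works entirely with the scalar fixed-point equation~(\ref{eq:app-out-delta-fixed}) for $\delta:=\delta_{\mathrm{out}}=1-v$. We first establish existence and localization of a root $\delta\in(0,1)$ of size $O(Ne^{-\beta})$. We then bootstrap that bound into the stated error term and substitute into $\xi_{\mathrm{out}}=u\,s+v\,x_o$.

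\emph{Step 1 (existence and a priori bound).} Define
\[
F(\delta):=\frac{\delta}{1-\delta}-Ne^{-\beta}\exp\!\big[\beta\kappa\delta\big],\qquad \kappa:=1+\tfrac{A}{N}\le 2.
\]
At $\delta=0$ we have $F(0)=-Ne^{-\beta}<0$. At $\delta_1:=2Ne^{-\beta}$ the first term is at least $2Ne^{-\beta}$. The second term is at most $Ne^{-\beta}\exp(4\beta Ne^{-\beta})=Ne^{-\beta}(1+o(1))$, using the localization condition~(\ref{eq:app-out-localization-cond}), $\beta Ne^{-\beta}\ll1$. Hence $F(\delta_1)>0$, and the intermediate value theorem gives a root $\delta\in(0,2Ne^{-\beta})$.

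This root is the outlier branch. On $[0,\delta_1]$ the derivative $F'=(1-\delta)^{-2}-\beta\kappa Ne^{-\beta}e^{\beta\kappa\delta}$ is positive, since the subtracted term is $O(\beta Ne^{-\beta})\ll1$. So the root is unique in that window. The softmax equations~(\ref{eq:app-out-softmax}) then yield a genuine fixed point by the symmetry reduction of Subsection~\ref{app:sym-reduction}.

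\emph{Step 2 (bootstrap).} Rewrite the fixed-point form as
\[
\delta=(1-\delta)\,Ne^{-\beta}e^{\beta\kappa\delta}.
\]
Since $\beta\kappa\delta=O(\beta Ne^{-\beta})=o(1)$, we can expand $e^{\beta\kappa\delta}=1+O(\beta\delta)$. Together with $1-\delta=1+O(\delta)$ this gives
\[
\delta=Ne^{-\beta}\big(1+O(\beta\delta)\big)=Ne^{-\beta}+O\big(\beta N^2e^{-2\beta}\big).
\]
Dividing by $N$ gives $u=\delta/N=e^{-\beta}+O(\beta Ne^{-2\beta})$, and $v=1-\delta$ follows directly. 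Finite $N$ is treated as a constant, so the factors of $N$ are absorbed into the $O(\cdot)$ as in~(\ref{eq:app-vu-asym}). Tracking them explicitly would give $O(\beta N e^{-2\beta})$ for $u$ and $O(\beta N^2 e^{-2\beta})$ for $v$, which I would record in a remark.

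\emph{Step 3 (fixed point).} Substitute into $\xi_{\mathrm{out}}=u\,s+v\,x_o$:
\[
\xi_{\mathrm{out}}=x_o+e^{-\beta}(s-Nx_o)+O(\beta e^{-2\beta})\,(\|s\|+N).
\]
Here $\|s\|=\sqrt{NA}$ by~(\ref{eq:app-s-properties}), which is again an $N,c$-dependent constant. This yields~(\ref{eq:app-xi-out-asym}).

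The main obstacle is Step 1. The a priori bound must make the exponential feedback $e^{\beta\kappa\delta}$ negligible, and this is exactly where $\beta Ne^{-\beta}\ll1$ is used. Without it, equation~(\ref{eq:app-out-delta-fixed}) has additional roots corresponding to mixed or cluster-dominated states, and we must check that we have selected the localized branch. The remaining steps are routine Taylor expansion.
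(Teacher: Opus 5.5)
Your proposal is correct and follows essentially the same route as the paper: both expand the scalar self-consistency for $\delta=1-v$ around the leading balance $\delta\approx Ne^{-\beta}$, read off the $O(\beta\delta^2)$ correction from the feedback factor $e^{\beta\kappa\delta}=1+O(\beta\delta)$, and absorb powers of $N$ into the constants; your Step~1 (intermediate value theorem plus monotonicity of $F$ on $[0,2Ne^{-\beta}]$) adds a rigorous branch selection that the paper only asserts via ``$\delta\to 0$ as $\beta\to\infty$''. One small correction to your closing remark: the scalar equation has other roots even when $\beta Ne^{-\beta}\ll 1$ (since $\kappa>1$ there is a root with $1-\delta\approx e^{-\beta A/N}/N$, i.e.\ $\xi\approx s/N$, and an intermediate one), so the condition guarantees existence and isolation of the small root in your window rather than removing the other branches.
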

\begin{proof}
Set $v=1-\delta$ and substitute into~(\ref{eq:app-out-scalar}): $\log\{N(1-\delta)/\delta\}=\beta\{1-\delta-A\delta/N\}$. As $\beta\to\infty$, $\delta\to 0$, so the left-hand side equals $\log(N/\delta)+O(\delta)$ and the leading balance reads $\log(N/\delta)=\beta(1-O(\delta))$, giving $\delta=N\,e^{-\beta}(1+o(1))$. The next-order correction is $O(\beta\delta^2)=O(\beta\,e^{-2\beta})$. Substituting $u=\delta/N$ into $\xi_{\mathrm{out}}=u\,s+v\,x_o$ yields~(\ref{eq:app-xi-out-asym}).
\end{proof}

\begin{proposition}[Cluster-like fixed point]
\label{prop:app-fp-cl}
Under Condition~\ref{cond:eqang-working},
\begin{equation}
b=e^{-\beta(1-c)}(1+o(1)),\qquad q=e^{-\beta}(1+o(1)),\qquad a=1-(N-1)\,e^{-\beta(1-c)}-e^{-\beta}+o\big(e^{-\beta(1-c)}\big),
\label{eq:app-bq-cl}
\end{equation}
hence
\begin{equation}
\xi_{\mathrm{cl}}=x_1+e^{-\beta(1-c)}\Big(\sum_{j=2}^N x_j-(N-1)x_1\Big)+e^{-\beta}\,(x_o-x_1)+o\big(e^{-\beta(1-c)}\big).
\label{eq:app-xi-cl-asym}
\end{equation}
\end{proposition}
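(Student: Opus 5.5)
The plan is to work directly with the two pattern-selection equations~(\ref{eq:app-cl-ratios}) in their fixed-point forms~(\ref{eq:app-b-fixed}) and~(\ref{eq:app-q-fixed}). We treat $(b,q)$ as the unknowns and set $a=1-(N-1)b-q$. First we restrict to the small box
\[
\mathcal{B}=\{0\le b\le 2e^{-\beta(1-c)},\ 0\le q\le 2e^{-\beta}\}.
\]
On $\mathcal{B}$ the leakage is $\eta_{\mathrm{cl}}=(N-1)b+q\le 2(N-1)e^{-\beta(1-c)}+2e^{-\beta}$.

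By Condition~\ref{cond:eqang-working}, $\beta\gg\log N$ and $0<c<1$ fixed. Hence $\beta(1-c)(N-1)e^{-\beta(1-c)}\to0$ and $\beta e^{-\beta}\to0$, which gives $\beta(1-c)(\eta_{\mathrm{cl}}+b)=o(1)$ on $\mathcal{B}$. The same bound controls the outlier feedback exponent $\beta\{(1-c)(N-1)b+2q\}$. This is the cluster localization condition~(\ref{eq:app-cl-localization-cond}).

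Consider the map defined by the right-hand sides of~(\ref{eq:app-b-fixed})--(\ref{eq:app-q-fixed}):
\[
\Phi(b,q)=\Bigl(a\,e^{-\beta(1-c)}\exp[\beta(1-c)(\eta_{\mathrm{cl}}+b)],\ a\,e^{-\beta}\exp[\beta\{(1-c)(N-1)b+2q\}]\Bigr).
\]
The key steps, in order, are as follows.

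\begin{enumerate}
\item \textbf{Self-map.} Show that $\Phi$ maps $\mathcal{B}$ into itself. Both exponential factors are $1+o(1)$ and $a\le1$.
\item \textbf{Contraction.} Show that $\Phi$ is a contraction on $\mathcal{B}$. Its partial derivatives are of order $\beta(1-c)(N-1)e^{-\beta(1-c)}$ and $\beta e^{-\beta}$, up to the ratio $e^{-\beta(1-c)}/e^{-\beta}$ in the $\partial q/\partial b$ entry. That entry also carries the prefactor $e^{-\beta}$, so it is $O(\beta(N-1)e^{-\beta})$ and still small.
\item \textbf{Conclusion of existence.} Banach's theorem then gives a unique fixed point in $\mathcal{B}$. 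By the symmetry reduction of~\ref{app:sym-reduction}, it is the cluster-like fixed point near $x_1$. Its existence is consistent with~(\ref{eq:app-existence-condition}) via the inclusion chain~(\ref{eq:app-beta-inclusion}).
\end{enumerate}

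Reading off the fixed point gives $b=e^{-\beta(1-c)}(1+o(1))$ and $q=e^{-\beta}(1+o(1))$. Then $a=1-(N-1)b-q$ gives the stated expansion of $a$. The error $(N-1)\,o(e^{-\beta(1-c)})$ is absorbed into the stated remainder, with $N$ treated as finite in the body's regime. A quantitative version can be obtained by substituting the leading values back once: the multiplicative corrections are $O(\beta(N-1)e^{-\beta(1-c)}+\beta e^{-\beta})$.

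Finally, substitute into the representation $\xi_{\mathrm{cl}}-x_1=b\sum_{j\ge2}(x_j-x_1)+q(x_o-x_1)$ from~(\ref{eq:app-distance-form}). This yields~(\ref{eq:app-xi-cl-asym}), since each $\|x_j-x_1\|$ and $\|x_o-x_1\|$ is bounded by a $c$-dependent constant.

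The main obstacle is the bookkeeping of scales in the coupled system, specifically the cross term $(1-c)(N-1)b$ in the $q$-equation. It feeds the larger scale $e^{-\beta(1-c)}$ into the exponent governing the smaller weight $q\sim e^{-\beta}$. One must check that $\beta(N-1)e^{-\beta(1-c)}\to0$, so this feedback changes $q$ only by a factor $1+o(1)$ and not by an exponentially large amount. Once this is secured, the rest is routine.
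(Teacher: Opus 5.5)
Your proposal is correct. It shares the leading-order computation with the paper but has a different logical structure.

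\textbf{The paper's route.} It takes the localized cluster fixed point as given ($a\to1$, $b,q\to0$) and substitutes this into the log-ratio equations \eqref{eq:app-cl-ratios}. This gives $\log(a/b)=\beta(1-c)(1+o(1))$ and $\log(a/q)=\beta(1+o(1))$; it then exponentiates and closes with the sum rule. That is shorter, but it presupposes existence. The exponentiation step also tacitly needs the correction in the exponent to be $o(1)$ in absolute size, i.e.\ $\beta\eta_{\mathrm{cl}}\to0$. That is supplied only by the informal localization discussion around \eqref{eq:app-b-fixed}--\eqref{eq:app-q-fixed}.

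\textbf{Your route.} You run a Banach argument on $\mathcal{B}$ for the map built from exactly those fixed-point forms. This makes the $\beta\eta_{\mathrm{cl}}$ control explicit and proves existence instead of assuming it. It also gives uniqueness in $\mathcal{B}$ and quantitative multiplicative errors $O(\beta N e^{-\beta(1-c)}+\beta e^{-\beta})$. The price is the self-map and Jacobian bookkeeping.

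\textbf{Two small points.}
\begin{enumerate}
\item Your stated orders for the Jacobian entries are slightly off. The entry $\partial_q\Phi_1=e^{-\beta(1-c)}e^{(\cdot)}\{a\beta(1-c)-1\}$ is of order $\beta e^{-\beta(1-c)}$, not $\beta e^{-\beta}$. However, every entry is $O(\beta N e^{-\beta(1-c)})$, which tends to zero in the working regime. So $\Phi$ is already a contraction in the plain max-norm on the convex set $\mathcal{B}$, and no rescaling of $(b,q)$ is needed.
\item To identify your fixed point with the paper's object, use a short bootstrap. Any symmetric solution with $a\to1$ eventually has $a-b\ge\tfrac12$ and $z_1-q\ge\tfrac12$. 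Hence $b\le e^{-\beta(1-c)/2}$ and $q\le e^{-\beta/2}$, so $\beta\eta_{\mathrm{cl}}\to0$. Feeding this back into \eqref{eq:app-b-fixed}--\eqref{eq:app-q-fixed} places the solution in $\mathcal{B}$, where your uniqueness applies.
\end{enumerate}
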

\begin{proof}
Since $a\to 1$ and $b,q\to 0$ under the localization condition, the first equation of~(\ref{eq:app-cl-ratios}) gives $\log(a/b)=\beta(1-c)(a-b)=\beta(1-c)(1+o(1))$, so $b=a\,e^{-\beta(1-c)}(1+o(1))=e^{-\beta(1-c)}(1+o(1))$. The second gives $\log(a/q)=\beta(a+(N-1)cb-q)=\beta(1+o(1))$, so $q=e^{-\beta}(1+o(1))$. The sum rule $a+(N-1)b+q=1$ closes the argument. Substituting into $\xi_{\mathrm{cl}}=a\,x_1+b\sum_{j\ge 2}x_j+q\,x_o$ yields~(\ref{eq:app-xi-cl-asym}).
\end{proof}

\begin{remark}
The two leading off-memory weights differ exponentially: the outlier sees $u\sim e^{-\beta}$, while the cluster sees $b\sim e^{-\beta(1-c)}$, and $e^{-\beta(1-c)}/e^{-\beta}=e^{\beta c}\to\infty$. Geometrically, \emph{the cluster fixed point is exponentially broader than the outlier fixed point}; this is the geometric origin of Theorem~\ref{thm:forgetting}.
\end{remark}

\subsection{Hessian expansion for the rotation-averaged energy rise}
\label{app:Hessian}
This subsection collects the Hessian-related calculations used in the proof: the Taylor expansion at a fixed point, the average over small random rotations, and the first-order expansion in the off-memory weights.

\paragraph{Taylor expansion at a Task-1 fixed point.}
Because $\xi_*$ is stationary ($\nabla E(\xi_*\mid X)=0$), Taylor expansion gives
\begin{equation}
\Delta E_*^{\mathrm{fp}}(V)=\tfrac12(V^\top\xi_*-\xi_*)^\top H_*(V^\top\xi_*-\xi_*)+O(\varepsilon^3),\qquad H_*:=\nabla^2 E(\xi_*\mid X).
\label{eq:app-taylor}
\end{equation}
Direct differentiation of~(\ref{eq:app-E-T}) yields
\begin{equation}
H_*=I-\beta\Big(\sum_{i=1}^{N+1}p_i^*x_ix_i^\top-\xi_*\xi_*^\top\Big).
\label{eq:app-Hessian}
\end{equation}
Using $V^\top\xi_*-\xi_*=-\varepsilon\Omega\xi_*+O(\varepsilon^2)$,
\begin{equation}
\Delta E_*^{\mathrm{fp}}(V)=\tfrac{\varepsilon^2}{2}(\Omega\xi_*)^\top H_*(\Omega\xi_*)+O(\varepsilon^3).
\label{eq:app-delta-hessian}
\end{equation}
The covariance term in Eq.~(\ref{eq:app-Hessian}) also explains the $\beta\eta$ scale used in the localization condition. Near a pattern-specific fixed point with $p_k^*=1-\sum_{j\neq k}\eta_j$ and $p_j^*=\eta_j$, it has the expansion
\begin{equation}
\sum_i p_i^*x_i x_i^\top-\xi_*\xi_*^\top
=
\sum_{j\neq k}\eta_j(x_j-x_k)(x_j-x_k)^\top+O(\eta^2).
\label{eq:app-hessian-leakage-expansion}
\end{equation}
Thus leakage contributes to the Hessian at order $\beta\eta$, which is why Eq.~(\ref{eq:app-localization-cond}) controls $\beta\eta$ rather than merely $\eta$.

\paragraph{Rotation average.}
\begin{lemma}
For any $u\in\mathbb{R}^d$, $\langle (\Omega u)(\Omega u)^\top\rangle_V=\tfrac12(\|u\|^2 I-uu^\top)$.
\end{lemma}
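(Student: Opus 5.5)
The plan is to compute the second moment directly, entry by entry, using the Gaussian structure of $W$. Since $\Omega=(W-W^\top)/2$ with $W_{ij}$ i.i.d.\ $\mathcal N(0,1)$, the entries of $\Omega$ are centered Gaussians. For $i\neq j$ we have $\Omega_{ij}=-\Omega_{ji}$, and $\mathrm{Var}(\Omega_{ij})=\tfrac14(1+1)=\tfrac12$. The diagonal vanishes, and entries indexed by distinct unordered pairs are independent. Hence the only nonzero second moments are
\[
\mathbb E[\Omega_{ab}\Omega_{cd}]=\tfrac12(\delta_{ac}\delta_{bd}-\delta_{ad}\delta_{bc}).
\]
Checking this formula is the first step. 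Diagonal cases give $0$ automatically, the $(a,b)=(c,d)$ case gives $\tfrac12$, and the transposed case gives $-\tfrac12$.

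Next, I would write $(\Omega u)_a=\sum_b\Omega_{ab}u_b$ and expand
\[
\langle(\Omega u)_a(\Omega u)_c\rangle=\sum_{b,d}u_bu_d\,\mathbb E[\Omega_{ab}\Omega_{cd}]=\tfrac12\Big(\delta_{ac}\sum_b u_b^2-u_cu_a\Big).
\]
The first term comes from $\delta_{ac}\delta_{bd}$. The second comes from $\delta_{ad}\delta_{bc}$, which sets $d=a$ and $b=c$. In matrix form this is $\tfrac12(\|u\|^2I-uu^\top)$, which is exactly the statement.

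As a sanity check, I would note that the result is forced by symmetry, up to scale. The expectation must be $O(d)$-equivariant in $u$, since the law of $\Omega$ is invariant under conjugation by orthogonal matrices. It must also annihilate $u$, because $u^\top\Omega u=0$. Together these force the form $\kappa(\|u\|^2I-uu^\top)$. Taking the trace fixes the scale: $\mathbb E\|\Omega u\|^2=\sum_{a\ne b}\tfrac12u_b^2=\tfrac12(d-1)\|u\|^2$, which gives $\kappa=\tfrac12$.

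Finally, I would record the consequence used downstream. Plugging $u=\xi_*$ into Eq.~(\ref{eq:app-delta-hessian}) and taking the trace against $H_*$ gives $\tfrac{\varepsilon^2}{4}(\|\xi_*\|^2\mathrm{tr}H_*-\xi_*^\top H_*\xi_*)$, matching Eq.~(\ref{eq:delta-hessian-body}).

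The only step needing care is the covariance of the antisymmetric entries: the diagonal zero, the factor $\tfrac12$ from averaging two independent unit Gaussians, and the sign of the transposed pairing. Everything after that is a mechanical index contraction.
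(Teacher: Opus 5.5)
Your proposal is correct and follows essentially the same route as the paper: the paper also derives $\mathbb{E}[\Omega_{ab}\Omega_{cd}]=\tfrac12(\delta_{ac}\delta_{bd}-\delta_{ad}\delta_{bc})$ from $\Omega=\tfrac12(W-W^\top)$ and then contracts indices to obtain $\tfrac12(\delta_{ij}\|u\|^2-u_iu_j)$. Your additional symmetry-plus-trace check (orthogonal invariance of the law of $\Omega$, $u^\top\Omega u=0$, and $\mathbb{E}\|\Omega u\|^2=\tfrac12(d-1)\|u\|^2$) is a valid independent confirmation that the paper does not include.
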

\begin{proof}
From $\Omega=\tfrac12(W-W^\top)$ with $W_{ij}\stackrel{\mathrm{iid}}{\sim}\mathcal{N}(0,1)$, $\mathbb{E}[\Omega_{ab}\Omega_{cd}]=\tfrac12(\delta_{ac}\delta_{bd}-\delta_{ad}\delta_{bc})$. Hence $\langle(\Omega u)_i(\Omega u)_j\rangle_V=\tfrac12(\delta_{ij}\|u\|^2-u_iu_j)$.
\end{proof}
Substituting into~(\ref{eq:app-delta-hessian}) and using the main-text convention $\Delta E_*^{\mathrm{fp}}:=\langle\Delta E_*^{\mathrm{fp}}(V)\rangle_V$ from~(\ref{eq:DEfp-def}),
\begin{equation}
\Delta E_*^{\mathrm{fp}}=\tfrac{\varepsilon^2}{4}(\|\xi_*\|^2\,\mathrm{tr}\,H_*-\xi_*^\top H_*\xi_*)+O(\varepsilon^3).
\label{eq:app-delta-avg}
\end{equation}

\paragraph{First-order expansion near a pattern.}\label{app:leakage-exp}
Suppose $\xi_*$ is concentrated at a single stored pattern $x_k$, with weights
\begin{equation}
p_k^*=1-\sum_{j\neq k}\eta_j,\qquad p_j^*=\eta_j\ll 1,\qquad \alpha_j:=x_j^\top x_k.
\label{eq:app-eta-leak}
\end{equation}
The fixed-point identity $\xi_*=Xp_*$ then reads
\begin{equation}
\xi_*=x_k+\sum_{j\neq k}\eta_j(x_j-x_k).
\label{eq:app-xi-eta}
\end{equation}
We compute the three quantities entering~(\ref{eq:app-delta-avg}) step by step, retaining all $O(\eta)$ terms and dropping $O(\eta^2)$.

\smallskip\noindent\textbf{(i) $\|\xi_*\|^2$.} Using $\|x_k\|^2=1$, $x_k^\top(x_j-x_k)=\alpha_j-1$, and $(x_i-x_k)^\top(x_j-x_k)=O(1)$,
\begin{equation}
\|\xi_*\|^2=1+2\sum_{j\neq k}\eta_j(\alpha_j-1)+O(\eta^2)=1-2\sum_{j\neq k}\eta_j(1-\alpha_j)+O(\eta^2).
\label{eq:app-leak-norm}
\end{equation}

\smallskip\noindent\textbf{(ii) $\mathrm{tr}\,H_*$.} From~(\ref{eq:app-Hessian}), using $\|x_i\|^2=1$ and $\sum_i p_i^*=1$,
\begin{equation}
\mathrm{tr}\,H_*=d-\beta\Big(\sum_{i=1}^{N+1}p_i^*\,\|x_i\|^2-\|\xi_*\|^2\Big)=d-\beta\big(1-\|\xi_*\|^2\big),
\label{eq:app-leak-tr-step1}
\end{equation}
and substituting~(\ref{eq:app-leak-norm}),
\begin{equation}
\mathrm{tr}\,H_*=d-2\beta\sum_{j\neq k}\eta_j(1-\alpha_j)+O(\eta^2).
\label{eq:app-leak-tr}
\end{equation}

\smallskip\noindent\textbf{(iii) $\xi_*^\top H_*\xi_*$.} From~(\ref{eq:app-Hessian}),
\begin{equation}
\xi_*^\top H_*\xi_*=\|\xi_*\|^2-\beta\Big(\sum_{i}p_i^*\,(x_i^\top\xi_*)^2-\|\xi_*\|^4\Big).
\label{eq:app-leak-xHx-step1}
\end{equation}
Using~(\ref{eq:app-xi-eta}), $x_k^\top\xi_*=1-\sum_{j\neq k}\eta_j(1-\alpha_j)$, and $x_j^\top\xi_*=\alpha_j+O(\eta)$ for $j\neq k$. Therefore
\begin{equation}
\begin{split}
\sum_i p_i^*(x_i^\top\xi_*)^2
&=\Big(1-\sum_{j\neq k}\eta_j\Big)\Big(1-\sum_{j\neq k}\eta_j(1-\alpha_j)\Big)^{\!2}+\sum_{j\neq k}\eta_j\,\alpha_j^2+O(\eta^2)\\
&=1-2\sum_{j\neq k}\eta_j(1-\alpha_j)-\sum_{j\neq k}\eta_j(1-\alpha_j^2)+O(\eta^2),
\label{eq:app-leak-sum-step}
\end{split}
\end{equation}
where in the last step we used $-\sum_j\eta_j+\sum_j\eta_j\alpha_j^2=-\sum_j\eta_j(1-\alpha_j^2)$ and absorbed cross-products of $\eta$'s into $O(\eta^2)$. Combining~(\ref{eq:app-leak-sum-step}) with $\|\xi_*\|^4=1-4\sum_{j\neq k}\eta_j(1-\alpha_j)+O(\eta^2)$ from~(\ref{eq:app-leak-norm}) gives
\begin{equation}
\sum_i p_i^*(x_i^\top\xi_*)^2-\|\xi_*\|^4=\sum_{j\neq k}\eta_j\big\{2(1-\alpha_j)-(1-\alpha_j^2)\big\}+O(\eta^2)=\sum_{j\neq k}\eta_j(1-\alpha_j)^2+O(\eta^2),
\label{eq:app-leak-bracket}
\end{equation}
using the algebraic identity $2(1-\alpha)-(1-\alpha^2)=(1-\alpha)^2$. Substituting~(\ref{eq:app-leak-bracket}) into~(\ref{eq:app-leak-xHx-step1}),
\begin{equation}
\xi_*^\top H_*\xi_*=1-2\sum_{j\neq k}\eta_j(1-\alpha_j)-\beta\sum_{j\neq k}\eta_j(1-\alpha_j)^2+O(\eta^2).
\label{eq:app-leak-xHx}
\end{equation}

\smallskip Substituting~(\ref{eq:app-leak-norm}),~(\ref{eq:app-leak-tr}), and~(\ref{eq:app-leak-xHx}) into~(\ref{eq:app-delta-avg}), at order $\eta$,
\begin{align}
\|\xi_*\|^2\,\mathrm{tr}\,H_*-\xi_*^\top H_*\xi_*
&=(d-1)-2(\beta+d-1)\sum_{j\neq k}\eta_j(1-\alpha_j)+\beta\sum_{j\neq k}\eta_j(1-\alpha_j)^2+O(\eta^2)\nonumber\\
&=(d-1)-\sum_{j\neq k}\eta_j\Big\{2(d-1)(1-\alpha_j)+\beta(1-\alpha_j)\big[2-(1-\alpha_j)\big]\Big\}+O(\eta^2)\nonumber\\
&=(d-1)-\sum_{j\neq k}\eta_j\big\{2(d-1)(1-\alpha_j)+\beta(1-\alpha_j^2)\big\}+O(\eta^2),
\label{eq:app-leak-trick}
\end{align}
where the last line uses the identity $(1-\alpha)\,\big[2-(1-\alpha)\big]=(1-\alpha)(1+\alpha)=1-\alpha^2$. Therefore
\begin{equation}
\Delta E_*^{\mathrm{fp}}=\tfrac{\varepsilon^2}{4}\Big[(d-1)-\sum_{j\neq k}\eta_j\,\Lambda(\alpha_j)\Big]+O(\varepsilon^2\eta^2+\varepsilon^3),\quad \Lambda(\alpha):=2(d-1)(1-\alpha)+\beta(1-\alpha^2).
\label{eq:app-delta-eta}
\end{equation}
Each competitor at cosine $\alpha_j$ shaves $\eta_j\Lambda(\alpha_j)$ off the bare $(d-1)$. The geometric weight $\Lambda(\alpha)$ vanishes at $\alpha=1$ (perfect alignment, no help) and is maximal at $\alpha=0$ (orthogonal competitor, maximum reduction).

\subsection{Specialization and proof of Theorem~\ref{thm:forgetting}}
\label{app:DE-comparison}
This subsection applies the Hessian formula to the two Task-1 fixed points and then compares the resulting energy rises.

\paragraph{Outlier-like fixed point.}\label{app:DE-out}
For the outlier-like fixed point all $N$ competitors are orthogonal: $\alpha_j=0$ and $\eta_j=u=e^{-\beta}(1+o(1))$ by Proposition~\ref{prop:app-fp-out}. Substituting into~(\ref{eq:app-delta-eta}),
\begin{equation}
\Delta E_o^{\mathrm{fp}}=\tfrac{\varepsilon^2}{4}\big[(d-1)-N\,e^{-\beta}\{2(d-1)+\beta\}\big]+O(\varepsilon^2 e^{-2\beta}+\varepsilon^3).
\label{eq:app-DE-out}
\end{equation}

\paragraph{Cluster-like fixed point.}\label{app:DE-cl}
For the cluster-like fixed point near $x_1$ there are two competitor types (Proposition~\ref{prop:app-fp-cl}): the other $N-1$ cluster members with $\alpha=c$, $\eta=b=e^{-\beta(1-c)}(1+o(1))$, and the outlier with $\alpha=0$, $\eta=q=e^{-\beta}(1+o(1))$. Substituting into~(\ref{eq:app-delta-eta}),
\begin{equation}
\begin{aligned}
\Delta E_{\mathrm{cl}}^{\mathrm{fp}}
&=\tfrac{\varepsilon^2}{4}\Big[(d-1)-(N-1)\,e^{-\beta(1-c)}\{2(d-1)(1-c)+\beta(1-c^2)\}\\
&\hspace{3.0cm}-e^{-\beta}\{2(d-1)+\beta\}\Big]+O(\varepsilon^2\,e^{-2\beta(1-c)}+\varepsilon^3).
\end{aligned}
\label{eq:app-DE-cl}
\end{equation}

\paragraph{Comparison.}
Subtracting~(\ref{eq:app-DE-cl}) from~(\ref{eq:app-DE-out}) with $\Lambda(\alpha)=2(d-1)(1-\alpha)+\beta(1-\alpha^2)$,
\begin{equation}
\Delta E_o^{\mathrm{fp}}-\Delta E_{\mathrm{cl}}^{\mathrm{fp}}=\tfrac{\varepsilon^2}{4}(N-1)\big[e^{-\beta(1-c)}\,\Lambda(c)-e^{-\beta}\,\Lambda(0)\big]+o\big(\varepsilon^2\,e^{-\beta(1-c)}\big).
\label{eq:app-DE-diff}
\end{equation}
For $0<c<1$ and $\beta\gg\log N$ we have $e^{-\beta(1-c)}/e^{-\beta}=e^{\beta c}\to\infty$, while $\Lambda(c)=2(d-1)(1-c)+\beta(1-c^2)>0$. Hence the right-hand side is strictly positive at leading order, proving Theorem~\ref{thm:forgetting}.\hfill$\square$

\begin{remark}[Edge cases]
If $c=0$, the cluster degenerates into $N$ mutually orthogonal points and both fixed points collapse: $e^{-\beta(1-c)}=e^{-\beta}$, the bracket in~(\ref{eq:app-DE-diff}) vanishes, and the inequality becomes an equality at leading order. If $c<0$, the cluster members are anti-correlated, $\Lambda(c)$ may become negative, and the sign of~(\ref{eq:app-DE-diff}) can reverse. Hence the inequality is genuine to the cluster regime $0<c<1$.
\end{remark}

\section{Energy--Sharpness Relation}
\label{app:energy-sharpness}

In the main text, we describe isolated memories as forming high-energy and sharp basins, and cluster-supported memories as forming low-energy and broad basins. 
This appendix makes that statement precise in the equal-angular cluster-plus-outlier model. 
The key point is that both the local energy level and the local sharpness of a fixed point are controlled by the same off-memory softmax weights.

\subsection{Definition of local sharpness}

Let $\xi_*$ be a Task-1 fixed point of the Hopfield energy
\[
E(\xi\mid X)
=
-\frac{1}{\beta}
\log\sum_i \exp(\beta x_i^\top \xi)
+
\frac12\|\xi\|^2 .
\]
The Hessian at $\xi_*$ is
\[
H_*
=
\nabla^2_\xi E(\xi_*\mid X)
=
I
-
\beta
\left(
\sum_i p_i^* x_i x_i^\top
-
\xi_*\xi_*^\top
\right),
\]
where $p_i^*=p_i(\xi_*\mid X)$.
We define the local sharpness of the basin by the average Hessian trace in random tangential directions induced by an infinitesimal rotation,
\begin{equation}
\mathcal{S}_*
:=
\|\xi_*\|^2 \operatorname{tr} H_*
-
\xi_*^\top H_* \xi_* .
\label{eq:app-sharpness-def}
\end{equation}
Indeed, for $V=\exp(\varepsilon\Omega)$ with $\Omega=(W-W^\top)/2$ and $W_{ij}\sim\mathcal{N}(0,1)$, one has
\[
\mathbb{E}_{\Omega}
\left[
(\Omega\xi_*)^\top H_* (\Omega\xi_*)
\right]
=
\frac12
\left(
\|\xi_*\|^2 \operatorname{tr}H_*
-
\xi_*^\top H_*\xi_*
\right)
=
\frac12\mathcal{S}_* .
\]
Thus $\mathcal{S}_*$ is the curvature factor that appears in the rotation-induced energy increase:
\[
\mathbb{E}_{\Omega}
\left[
E(V^\top\xi_*\mid X)-E(\xi_*\mid X)
\right]
=
\frac{\varepsilon^2}{4}\mathcal{S}_*
+
O(\varepsilon^3).
\]
Larger $\mathcal{S}_*$ means that the basin rises more rapidly under small displacements, and we therefore use it as the local sharpness of the Hopfield basin.

\subsection{First-order expansion of energy and sharpness}

Consider a fixed point concentrated near a stored memory $x_k$. 
Write the softmax weights as
\[
p_k^*
=
1-\sum_{j\neq k}\eta_j,
\qquad
p_j^*
=
\eta_j,
\qquad
\eta_j\ll 1,
\]
and denote the cosine similarity between the competitor $x_j$ and the dominant memory $x_k$ by
\[
\alpha_j:=x_j^\top x_k .
\]
The fixed-point condition gives
\[
\xi_*
=
x_k+\sum_{j\neq k}\eta_j(x_j-x_k)
+
O(\eta^2).
\]

First, the energy at the fixed point has the expansion
\begin{equation}
E(\xi_*\mid X)
=
-\frac12
-
\frac{1}{\beta}
\sum_{j\neq k}\eta_j
+
O(\eta^2).
\label{eq:app-energy-leakage}
\end{equation}
Thus a fixed point supported by more nearby memories has lower energy. 
In this sense, the total leakage $\sum_{j\neq k}\eta_j$ measures how much additional support the dominant memory receives from neighboring memories.

Second, the sharpness defined in Eq.~\eqref{eq:app-sharpness-def} satisfies
\begin{equation}
\mathcal{S}_*
=
(d-1)
-
\sum_{j\neq k}\eta_j \Lambda(\alpha_j)
+
O(\eta^2),
\label{eq:app-sharpness-leakage}
\end{equation}
where
\begin{equation}
\Lambda(\alpha)
:=
2(d-1)(1-\alpha)
+
\beta(1-\alpha^2).
\label{eq:app-lambda-sharpness}
\end{equation}
Since $\Lambda(\alpha)>0$ for $\alpha<1$, leakage from neighboring memories lowers the local curvature. 
Therefore the same off-memory softmax mass that lowers the energy also flattens the basin. 
This is the precise local sense in which low-energy basins are broad, whereas high-energy basins are sharp.

\subsection{Cluster-plus-outlier specialization}

We now apply Eqs.~\eqref{eq:app-energy-leakage} and~\eqref{eq:app-sharpness-leakage} to the equal-angular configuration of Section~\ref{sec:theory1}. 
For the outlier fixed point, all $N$ competitors are orthogonal, so $\alpha=0$ and $\eta=e^{-\beta}(1+o(1))$. 
Therefore,
\begin{equation}
E_o^*
=
-\frac12
-
\frac{N}{\beta}e^{-\beta}
+
o(e^{-\beta}/\beta),
\label{eq:app-energy-outlier}
\end{equation}
and
\begin{equation}
\mathcal{S}_o
=
(d-1)
-
N e^{-\beta}\Lambda(0)
+
o(e^{-\beta}).
\label{eq:app-sharpness-outlier}
\end{equation}

For the cluster fixed point near $x_1$, the $N-1$ other cluster memories have cosine $c$ and leakage $e^{-\beta(1-c)}$, while the outlier has cosine $0$ and leakage $e^{-\beta}$. 
Thus,
\begin{equation}
E_{\mathrm{cl}}^*
=
-\frac12
-
\frac{1}{\beta}
\left\{
(N-1)e^{-\beta(1-c)}
+
e^{-\beta}
\right\}
+
o(e^{-\beta(1-c)}/\beta),
\label{eq:app-energy-cluster}
\end{equation}
and
\begin{equation}
\mathcal{S}_{\mathrm{cl}}
=
(d-1)
-
(N-1)e^{-\beta(1-c)}\Lambda(c)
-
e^{-\beta}\Lambda(0)
+
o(e^{-\beta(1-c)}).
\label{eq:app-sharpness-cluster}
\end{equation}

Subtracting Eq.~\eqref{eq:app-energy-cluster} from Eq.~\eqref{eq:app-energy-outlier} gives
\begin{equation}
E_o^*
-
E_{\mathrm{cl}}^*
=
\frac{N-1}{\beta}
\left(
e^{-\beta(1-c)}
-
e^{-\beta}
\right)
+
o(e^{-\beta(1-c)}/\beta)
>
0,
\label{eq:app-energy-order}
\end{equation}
because $0<c<1$ implies $e^{-\beta(1-c)}\gg e^{-\beta}$ at large $\beta$. 
Hence the outlier fixed point has higher Hopfield energy than the cluster fixed point.

Likewise, subtracting Eq.~\eqref{eq:app-sharpness-cluster} from Eq.~\eqref{eq:app-sharpness-outlier} gives
\begin{equation}
\mathcal{S}_o
-
\mathcal{S}_{\mathrm{cl}}
=
(N-1)
\left[
e^{-\beta(1-c)}\Lambda(c)
-
e^{-\beta}\Lambda(0)
\right]
+
o(e^{-\beta(1-c)})
>
0 .
\label{eq:app-sharpness-order}
\end{equation}
Thus the same outlier fixed point is also sharper. 
Equations~\eqref{eq:app-energy-order} and~\eqref{eq:app-sharpness-order} justify the shorthand used in the main text: in this model, high-energy memories correspond to sharp isolated basins, whereas low-energy memories correspond to broad cluster-supported basins.


\subsection{Sharpness in diffusion-model experiments}
\label{app:energy-sharpness-experiment}

We further examine the relation between Hopfield energy, local sharpness, and reconstruction-based forgetting in the diffusion-model experiments. 
Here we focus on CIFAR-10 and compute all quantities sample-wise for Task-1 images.

For each Task-1 sample $x$, we first compute its Hopfield energy with respect to the Task-1 reference set $\mathcal{D}_1$,
\[
E_t(x\mid \mathcal{D}_1),
\]
using the same energy computation as in the main diffusion-model experiments. 
To quantify local sharpness, we do not explicitly form the Hessian. 
Instead, we use a perturbation-based finite-difference estimate: for each sample $x$, we draw $M$ small random perturbations $\{\delta_m\}_{m=1}^M$ and measure how much the Hopfield energy increases around $x$,
\begin{equation}
\widehat{\mathcal{S}}_t(x)
=
\frac{1}{M}
\sum_{m=1}^{M}
\left[
E_t(x+\delta_m\mid\mathcal{D}_1)
-
E_t(x\mid\mathcal{D}_1)
\right],
\label{eq:perturbation-sharpness}
\end{equation}
up to the fixed perturbation-scale normalization used in the implementation.
In the experiments, we use $M=100$ random perturbations for each sample. 
Thus, $\widehat{\mathcal{S}}_t(x)$ measures how steeply the Hopfield energy changes under local random perturbations: larger values indicate that the sample lies in a sharper local region of the energy landscape.

Figure~\ref{fig:diffusion-energy-sharpness} summarizes the result. 
First, we compare the perturbation-based sharpness $\widehat{\mathcal{S}}_t(x)$ with the Hopfield energy $E_t(x\mid\mathcal{D}_1)$. 
Higher-energy samples tend to have larger perturbation sharpness, consistent with the theoretical picture that weakly supported samples lie in sharper regions of the landscape. 
Second, we compare $\widehat{\mathcal{S}}_t(x)$ with the reconstruction-based forgetting metric $\mathcal{F}_{t^\star}(x)$ defined in Eq.~\eqref{eq:reconstruction-forgetting}. 
Samples with larger perturbation sharpness also tend to show larger reconstruction error after Task-2 training. 
These results suggest that the energy-based ordering observed in the main text is accompanied by a corresponding difference in local sharpness, and that this difference is reflected in reconstruction behavior after continual learning.

\begin{figure}[t]
\centering
\begin{subfigure}[b]{0.45\linewidth}
\centering
\includegraphics[width=\linewidth]{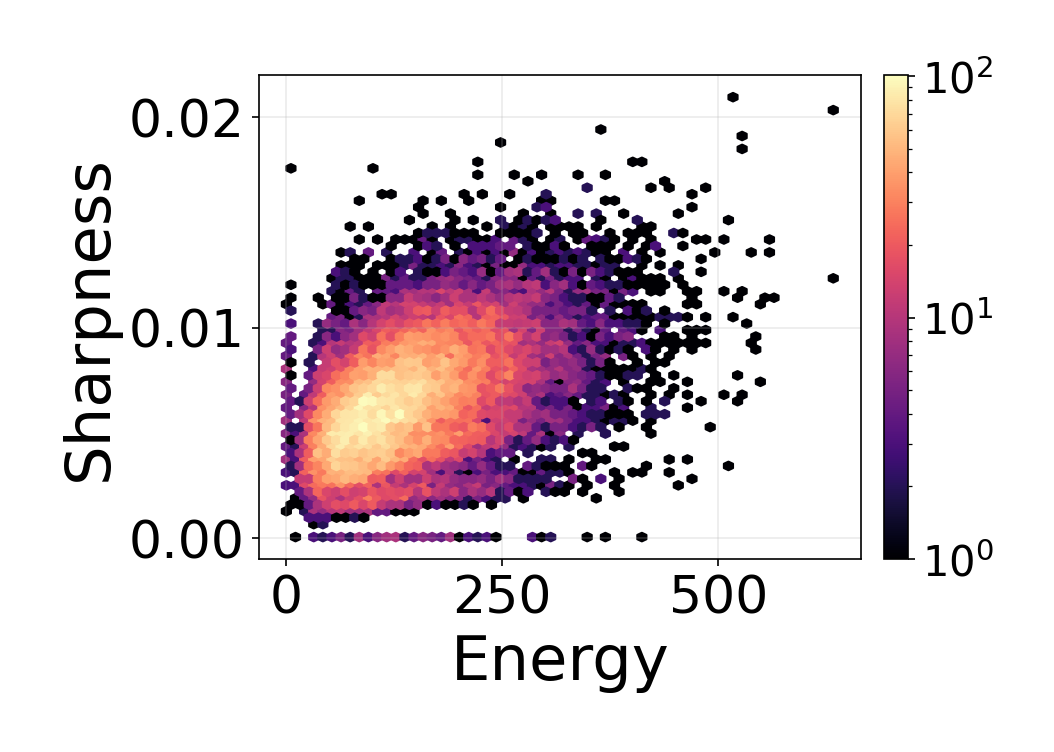}
\caption{Energy vs. perturbation sharpness.}
\label{fig:cifar-energy-sharpness}
\end{subfigure}
\hfill
\begin{subfigure}[b]{0.45\linewidth}
\centering
\includegraphics[width=\linewidth]{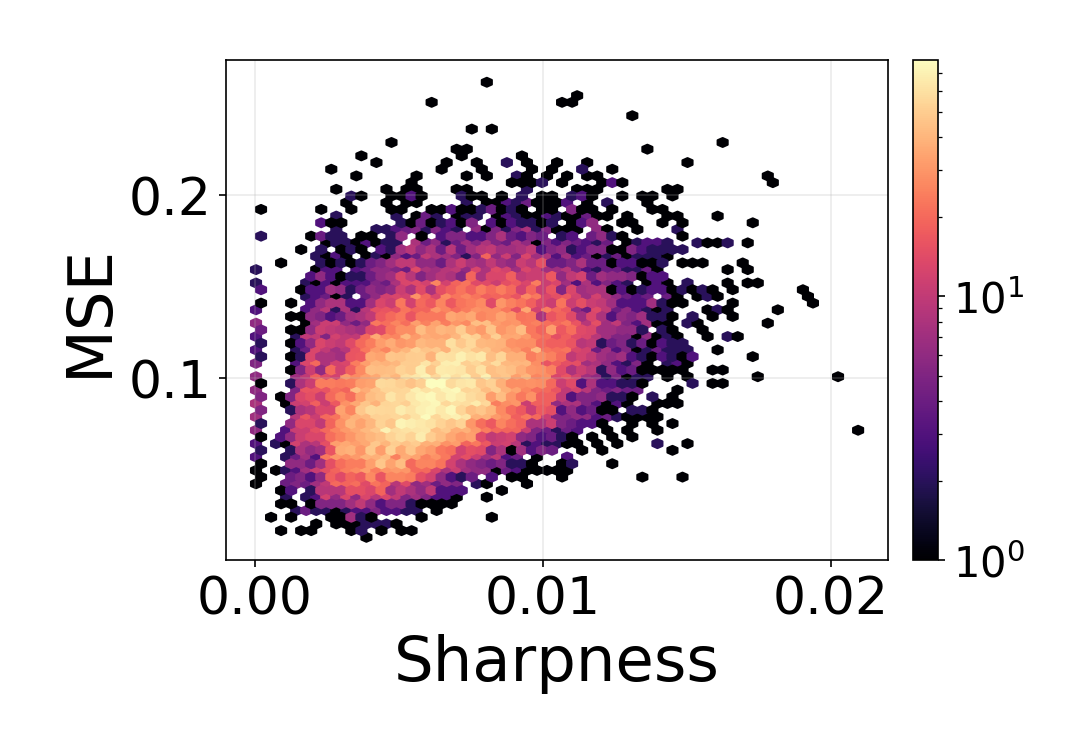}
\caption{Perturbation sharpness vs. reconstruction forgetting.}
\label{fig:cifar-sharpness-reconstruction}
\end{subfigure}
\caption{
Perturbation-based sharpness in the diffusion-model experiment on CIFAR-10.
(a) Higher-Hopfield-energy samples tend to have larger perturbation sharpness $\widehat{\mathcal{S}}_t(x)$.
(b) Samples with larger perturbation sharpness tend to have larger reconstruction-based forgetting $\mathcal{F}_{t^\star}(x)$ after Task-2 training.
}
\label{fig:diffusion-energy-sharpness}
\end{figure}

\section{Full derivation of Theorem~\ref{thm:replay}}
\label{app:replay-proof}

We derive the exact replay identity, isolate the duplicate-memory baseline, expand the baseline-subtracted gain to leading order in the small rotation, evaluate the five canonical gains, and prove the ordering of Theorem~\ref{thm:replay}.

\subsection{Exact replay identity}
\label{app:replay-identity}

Let $\mathcal{I}=\{1,\dots,N,o\}$, $X=\{x_\mu:\mu\in\mathcal{I}\}$, and $X_2(V)=VX$. When a replay point $x_r\in X$ is added, write $X_2^{+r}(V):=X_2(V)\cup\{x_r\}$, with the replayed sample counted as an additional term in the log-sum-exp. For a Task-1 fixed point $\xi_k^*$, the  energy rise without replay was already defined in Section~\ref{app:Hessian} as
\begin{equation}
\Delta E_k^{\mathrm{fp}}(V)=E(\xi_k^*\mid VX)-E(\xi_k^*\mid X),\qquad \Delta E_k^{\mathrm{fp}}(I)=0.
\label{eq:app-DEfp-recall}
\end{equation}
We additionally need the Task-1 softmax weight of the replay point at $\xi_k^*$,
\begin{equation}
p_{r\mid k}:=\frac{e^{\beta x_r^\top\xi_k^*}}{\sum_{\mu\in\mathcal{I}}e^{\beta x_\mu^\top\xi_k^*}}.
\label{eq:app-p-rk-def}
\end{equation}
The raw replay-induced energy decrease is
\begin{equation}
\mathcal{R}_{r\to k}(V):=E(\xi_k^*\mid X_2(V))-E(\xi_k^*\mid X_2^{+r}(V)).
\label{eq:app-raw-gain-def}
\end{equation}
Using the log-sum-exp form~(\ref{eq:app-E-T}) of $E$, the difference of the two log-sum-exp's is
\begin{equation}
\mathcal{R}_{r\to k}(V)=\tfrac{1}{\beta}\log\!\left(1+\frac{e^{\beta x_r^\top\xi_k^*}}{\sum_{\mu\in\mathcal{I}}e^{\beta(Vx_\mu)^\top\xi_k^*}}\right).
\label{eq:app-raw-gain-log}
\end{equation}
Now observe that, by the definition~(\ref{eq:app-DEfp-recall}) and the log-sum-exp form,
\begin{equation}
e^{\beta\Delta E_k^{\mathrm{fp}}(V)}=\frac{\sum_\mu e^{\beta x_\mu^\top\xi_k^*}}{\sum_\mu e^{\beta(Vx_\mu)^\top\xi_k^*}}.
\label{eq:app-DE-ratio-id}
\end{equation}
Multiplying numerator and denominator inside~(\ref{eq:app-raw-gain-log}) by $e^{\beta\Delta E_k^{\mathrm{fp}}(V)}$ converts the rotated denominator back into the Task-1 partition function:
\begin{equation}
\frac{e^{\beta x_r^\top\xi_k^*}}{\sum_\mu e^{\beta(Vx_\mu)^\top\xi_k^*}}=\frac{e^{\beta x_r^\top\xi_k^*}\,e^{\beta\Delta E_k^{\mathrm{fp}}(V)}}{\sum_\mu e^{\beta x_\mu^\top\xi_k^*}}=p_{r\mid k}\,e^{\beta\Delta E_k^{\mathrm{fp}}(V)}.
\label{eq:app-multiply-trick}
\end{equation}
Hence the exact identity
\begin{equation}
\mathcal{R}_{r\to k}(V)=\tfrac{1}{\beta}\log\!\big(1+p_{r\mid k}\,e^{\beta\Delta E_k^{\mathrm{fp}}(V)}\big).
\label{eq:app-raw-exact}
\end{equation}
At $V=I$, $\Delta E_k^{\mathrm{fp}}(I)=0$ and~(\ref{eq:app-raw-exact}) reduces to the trivial duplicate-memory baseline
\begin{equation}
\mathcal{R}_{r\to k}(I)=\tfrac{1}{\beta}\log(1+p_{r\mid k}),
\label{eq:app-baseline}
\end{equation}
which is the energy decrease one obtains merely from replicating an already-stored Task-1 memory. Subtracting this baseline and using $\log(a)-\log(b)=\log(a/b)$,
\begin{equation}
\widehat{\Delta}_{r\to k}(V):=\mathcal{R}_{r\to k}(V)-\mathcal{R}_{r\to k}(I)=\tfrac{1}{\beta}\log\!\frac{1+p_{r\mid k}\,e^{\beta\Delta E_k^{\mathrm{fp}}(V)}}{1+p_{r\mid k}}.
\label{eq:app-delta-exact}
\end{equation}
This separates the two effects: $p_{r\mid k}$ measures the relevance of the replay point (geometry), and $\Delta E_k^{\mathrm{fp}}(V)$ measures how much the rotated Task-2 dataset raises the energy at $\xi_k^*$ (drift) — exactly the same object that drives Theorem~\ref{thm:forgetting}.

\subsection{Small-rotation expansion and master formula}

Define
\begin{equation}
\rho(p):=\frac{p}{1+p}.
\label{eq:app-rho-def}
\end{equation}
This factor determines the response of forgetting mitigation to the energy rise, because
\begin{equation}
\left.
\frac{\partial}{\partial \Delta E}
\left\{
\frac{1}{\beta}\log\frac{1+p\,e^{\beta\Delta E}}{1+p}
\right\}
\right|_{\Delta E=0}
=
\rho(p).
\label{eq:app-rho-response}
\end{equation}
We therefore call $\rho(p)$ the \emph{replay susceptibility}. This name refers to the response to an energy change in Eq.~(\ref{eq:app-rho-response}). Taylor-expanding~(\ref{eq:app-delta-exact}) in $\Delta E_k^{\mathrm{fp}}(V)=O(\varepsilon^2)$,
\begin{equation}
\widehat{\Delta}_{r\to k}(V)=\rho(p_{r\mid k})\,\Delta E_k^{\mathrm{fp}}(V)+\tfrac{\beta}{2}\rho(p_{r\mid k})(1-\rho(p_{r\mid k}))\,\Delta E_k^{\mathrm{fp}}(V)^2+O\big(\beta^2\rho(p_{r\mid k})\,|\Delta E_k^{\mathrm{fp}}(V)|^3\big).
\label{eq:app-delta-expand}
\end{equation}
Averaging over $V$ and setting $\Delta_{r\to k}:=\langle\widehat{\Delta}_{r\to k}(V)\rangle_V$, and recalling $\Delta E_k^{\mathrm{fp}}:=\langle\Delta E_k^{\mathrm{fp}}(V)\rangle_V$ from~(\ref{eq:DEfp-def}),
\begin{equation}
\Delta_{r\to k}=\rho(p_{r\mid k})\,\Delta E_k^{\mathrm{fp}}+O\big(\beta\rho(p_{r\mid k})\varepsilon^4\big).
\label{eq:app-master}
\end{equation}
The factor $\Delta E_k^{\mathrm{fp}}$ is exactly the rotation-averaged energy rise computed in Section~\ref{app:DE-comparison} (Eqs.~(\ref{eq:app-DE-out}),(\ref{eq:app-DE-cl})).

\subsection{Susceptibility asymptotics}
From Appendix~\ref{app:forgetting-proof},
\[
v=p_{o\mid o}
=
1-N e^{-\beta}+O(\beta e^{-2\beta}),
\]
\[
a=p_{1\mid1}
=
1-(N-1)e^{-\beta(1-c)}-e^{-\beta}
+o(e^{-\beta(1-c)}),
\]
\[
b=p_{1\mid2}
=
e^{-\beta(1-c)}(1+o(1)),
\qquad
u=p_{1\mid o}
=
e^{-\beta}(1+o(1)),
\qquad
q=p_{o\mid1}
=
e^{-\beta}(1+o(1)).
\]
Applying \(\rho(p)=p/(1+p)\),
\begin{equation}
\begin{split}
&\rho(v)=\tfrac12-\tfrac{N\,e^{-\beta}}{4}+O(\beta\,e^{-2\beta}),\\
&\rho(a)=\tfrac12-\tfrac{(N-1)\,e^{-\beta(1-c)}+e^{-\beta}}{4}
+o(e^{-\beta(1-c)}),\\
&\rho(b)=e^{-\beta(1-c)}(1+o(1)),\qquad
\rho(u)=e^{-\beta}(1+o(1)),\qquad
\rho(q)=e^{-\beta}(1+o(1)).
\label{eq:app-rho-asym}
\end{split}
\end{equation}

Hence
\[
\rho(v)>\rho(a)\gg\rho(b)\gg\rho(u)\asymp\rho(q).
\]


\subsection{Five replay gains}
\label{app:four-gains}
Recall from~(\ref{eq:app-DE-out})--(\ref{eq:app-DE-cl}), with $\Lambda(\alpha)=2(d-1)(1-\alpha)+\beta(1-\alpha^2)$,
\begin{align}
\Delta E_o^{\mathrm{fp}}&=\tfrac{\varepsilon^2}{4}\big[(d-1)-N\,e^{-\beta}\,\Lambda(0)\big]+O(\varepsilon^2\,e^{-2\beta}+\varepsilon^3),\\
\Delta E_{\mathrm{cl}}^{\mathrm{fp}}&=\tfrac{\varepsilon^2}{4}\big[(d-1)-(N-1)\,e^{-\beta(1-c)}\,\Lambda(c)-e^{-\beta}\,\Lambda(0)\big]+O(\varepsilon^2\,e^{-2\beta(1-c)}+\varepsilon^3),\\
\Delta E_o^{\mathrm{fp}}-\Delta E_{\mathrm{cl}}^{\mathrm{fp}}&=\tfrac{\varepsilon^2}{4}(N-1)\big[e^{-\beta(1-c)}\,\Lambda(c)-e^{-\beta}\,\Lambda(0)\big]+o\big(\varepsilon^2\,e^{-\beta(1-c)}\big)>0.
\end{align}
Multiplying~(\ref{eq:app-rho-asym}) by $\Delta E_o^{\mathrm{fp}}$ or $\Delta E_{\mathrm{cl}}^{\mathrm{fp}}$ via the master formula~(\ref{eq:app-master}) yields
\begin{align}
\Delta_{o \to o}
&=
\tfrac12\,\Delta E_o^{\mathrm{fp}}\,(1+o(1))
=
\tfrac{\varepsilon^2}{8}(d-1)
+
O\big(\varepsilon^2 e^{-\beta}+\varepsilon^3+\beta\varepsilon^4\big),
\\
\Delta_{\mathrm{cl}_1 \to \mathrm{cl}_1}
&=
\tfrac12\,\Delta E_{\mathrm{cl}}^{\mathrm{fp}}\,(1+o(1))
=
\tfrac{\varepsilon^2}{8}(d-1)
+
O\big(\varepsilon^2 e^{-\beta(1-c)}+\varepsilon^3+\beta\varepsilon^4\big),
\\
\Delta_{\mathrm{cl}_1\to \mathrm{cl}_2}
&=
e^{-\beta(1-c)}\,\Delta E_{\mathrm{cl}}^{\mathrm{fp}}\,(1+o(1))
=
\tfrac{\varepsilon^2}{4}(d-1)e^{-\beta(1-c)}(1+o(1)),
\\
\Delta_{\mathrm{cl}_1\to o}
&=
e^{-\beta}\,\Delta E_o^{\mathrm{fp}}\,(1+o(1))
=
\tfrac{\varepsilon^2}{4}(d-1)e^{-\beta}(1+o(1)),
\\
\Delta_{o\to \mathrm{cl}_1}
&=
e^{-\beta}\,\Delta E_{\mathrm{cl}}^{\mathrm{fp}}\,(1+o(1))
=
\tfrac{\varepsilon^2}{4}(d-1)e^{-\beta}(1+o(1)).
\label{eq:app-five-gains}
\end{align}

\subsection{Ordering}
\begin{enumerate}
\item \emph{\(\Delta_{o\to o}>\Delta_{\mathrm{cl}_1\to \mathrm{cl}_1}\).}
\[
\Delta_{o\to o}-\Delta_{\mathrm{cl}_1\to \mathrm{cl}_1}
=
\{\rho(v)-\rho(a)\}\Delta E_{\mathrm{cl}}^{\mathrm{fp}}
+
\rho(v)\{\Delta E_o^{\mathrm{fp}}-\Delta E_{\mathrm{cl}}^{\mathrm{fp}}\}
+
o\big(\varepsilon^2e^{-\beta(1-c)}\big).
\]
From~(\ref{eq:app-rho-asym}),
\(\rho(v)-\rho(a)
=
\tfrac{(N-1)(e^{-\beta(1-c)}-e^{-\beta})}{4}
+
o(e^{-\beta(1-c)})>0\),
and Theorem~\ref{thm:forgetting} gives
\(\Delta E_o^{\mathrm{fp}}>\Delta E_{\mathrm{cl}}^{\mathrm{fp}}>0\).
Thus \(\Delta_{o\to o}>\Delta_{\mathrm{cl}_1\to \mathrm{cl}_1}\).

\item \emph{\(\Delta_{\mathrm{cl}_1\to \mathrm{cl}_1}\gg\Delta_{\mathrm{cl}_1\to \mathrm{cl}_2}\).}
\[
\frac{\Delta_{\mathrm{cl}_1\to \mathrm{cl}_2}}{\Delta_{\mathrm{cl}_1\to \mathrm{cl}_1}}
=
\frac{\rho(b)}{\rho(a)}(1+o(1))
=
2e^{-\beta(1-c)}(1+o(1))
\to0.
\]

\item \emph{\(\Delta_{\mathrm{cl}_1\to \mathrm{cl}_2}\gg\Delta_{\mathrm{cl}_1\to o}\).}
\[
\frac{\Delta_{\mathrm{cl}_1\to o}}{\Delta_{\mathrm{cl}_1\to \mathrm{cl}_2}}
=
\frac{\rho(u)}{\rho(b)}
\frac{\Delta E_o^{\mathrm{fp}}}{\Delta E_{\mathrm{cl}}^{\mathrm{fp}}}
=
e^{-\beta c}(1+o(1))
\to0.
\]

\item \emph{\(\Delta_{\mathrm{cl}_1\to \mathrm{cl}_2}\gg\Delta_{o\to \mathrm{cl}_1}\).}
\[
\frac{\Delta_{o\to \mathrm{cl}_1}}{\Delta_{\mathrm{cl}_1\to \mathrm{cl}_2}}
=
\frac{\rho(q)}{\rho(b)}
=
e^{-\beta c}(1+o(1))
\to0.
\]
Moreover,
\[
\frac{\Delta_{o\to \mathrm{cl}_1}}{\Delta_{\mathrm{cl}_1\to o}}
=
\frac{\rho(q)}{\rho(u)}
\frac{\Delta E_{\mathrm{cl}}^{\mathrm{fp}}}{\Delta E_o^{\mathrm{fp}}}
=
1+o(1),
\]

\end{enumerate}

so \(\Delta_{o\to \mathrm{cl}_1}\) and \(\Delta_{\mathrm{cl}_1\to o}\) have the same exponential order.
Combining the four comparisons gives
\[
\Delta_{o\to o}>\Delta_{\mathrm{cl}_1\to \mathrm{cl}_1}
\gg
\Delta_{\mathrm{cl}_1\to \mathrm{cl}_2}
\gg
\{\Delta_{\mathrm{cl}_1\to o},\Delta_{o\to \mathrm{cl}_1}\},
\qquad
\Delta_{\mathrm{cl}_1\to o}\asymp\Delta_{o\to \mathrm{cl}_1},
\]
which proves Theorem~\ref{thm:replay}.
\hfill\(\square\)

\begin{remark}
The subtraction of $\mathcal{R}_{r\to k}(I)=\tfrac{1}{\beta}\log(1+p_{r\mid k})$ removes the trivial gain from duplicating an already-stored memory. What is left is the fraction $\rho(p_{r\mid k})=p_{r\mid k}/(1+p_{r\mid k})$ of the rotation-induced rise $\Delta E_k^{\mathrm{fp}}(V)$ that the replay point can recover. Self-replay ($p\to 1$) recovers $\tfrac12$; within-cluster cross-replay recovers $e^{-\beta(1-c)}$; cross-type replay between cluster and outlier recovers $e^{-\beta}$. The ordering then follows from $\tfrac12\gg e^{-\beta(1-c)}\gg e^{-\beta}$ and $\Delta E_o^{\mathrm{fp}}>\Delta E_{\mathrm{cl}}^{\mathrm{fp}}$.
\end{remark}

\section{Synthetic experiments on MHNs}
\label{app:toy-experiments}

\subsection{Intrinsic forgetting in synthetic MHNs}
\label{app:toy-forgetting}

To complement the theory in Section~\ref{sec:theory1}, we instantiate the equal-angular MHN at moderate finite values ($N=12$, $d=50$, $c=0.35$, $\beta=8$, $\varepsilon=0.02$, 2000 random rotations) -- well outside the asymptotic regime $\beta\gg\log N$, $\varepsilon\ll 1$ used in the proof -- and compute $\Delta E^{\mathrm{fp}}$ in closed form. Even in this finite-parameter regime the outlier fixed point has a strictly larger rotation-induced energy rise than the cluster fixed points, matching the qualitative ordering predicted by Theorem~\ref{thm:forgetting}. The corresponding learned MHN-BM result is shown in the main text in Fig.~\ref{fig:fg-attnbm}.

\begin{figure}[t]
\centering
\includegraphics[width=0.75\linewidth]{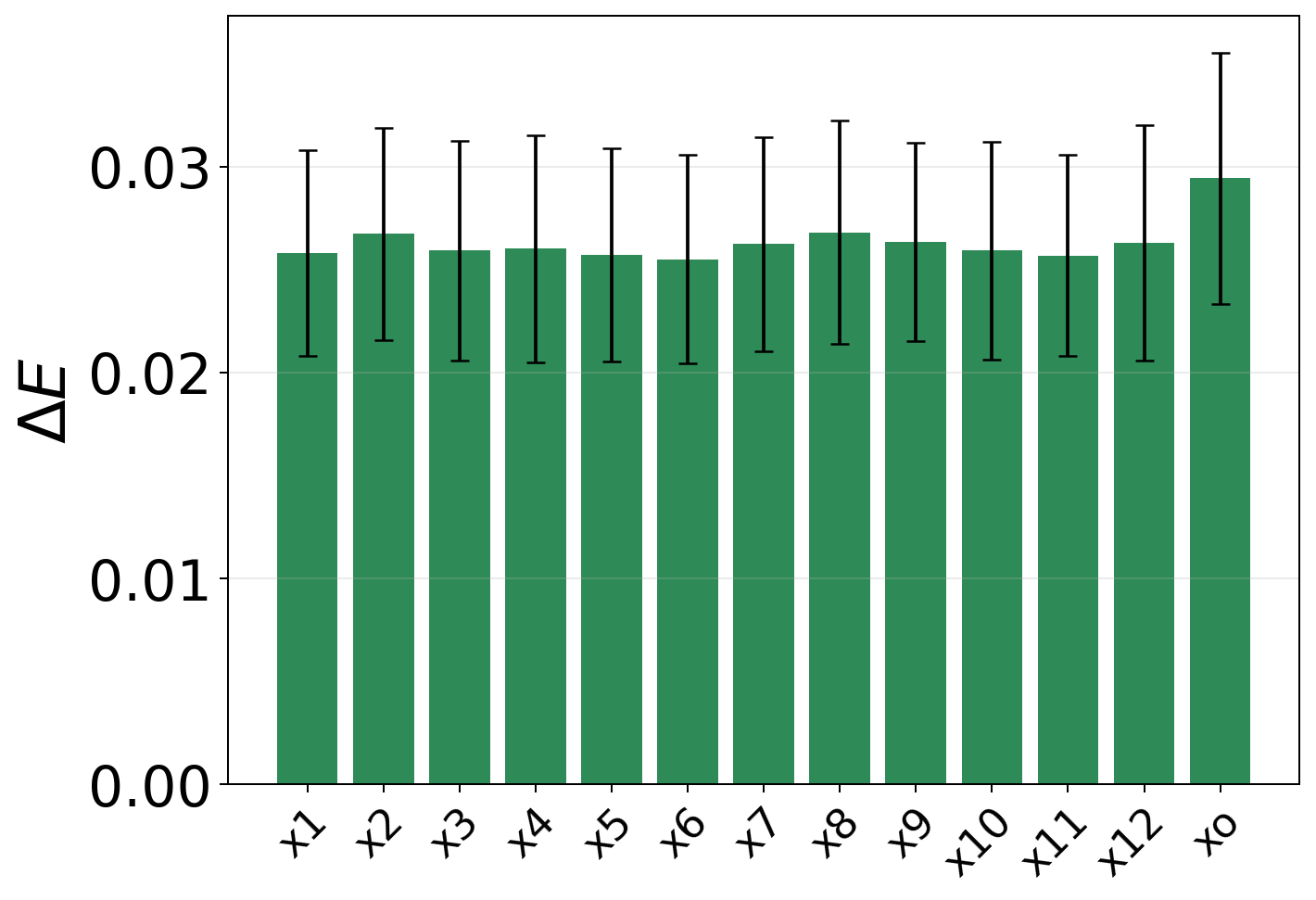}
\label{fig:fg-mhn}
\caption{Closed-form MHN corroboration of Theorem~\ref{thm:forgetting}. The outlier fixed point exhibits a larger rotation-induced energy rise than the cluster fixed points.}
\label{fig:toy-forgetting}
\end{figure}

\subsection{Replay ordering in synthetic MHNs}
\label{app:toy-replay}

For completeness, we also verify Theorem~\ref{thm:replay} in the exact equal-angular regime it was proved in. In the closed-form MHN, the full replay-gain $\{\Delta_{r\to k}\}_{r,k}$ shows the predicted ordering: the outlier self-entry $\Delta_{o\to o}$ is largest, cluster self-gains $\Delta_{\mathrm{cl}_1\to \mathrm{cl}_1}$ are next, within-cluster cross entries $\Delta_{\mathrm{cl}_1\to \mathrm{cl}_2}$ are substantially smaller, and cross-type entries $\Delta_{\mathrm{cl}_1\to o},\Delta_{o\to \mathrm{cl}_1}$ are smallest. The corresponding learned MHN-BM corroboration is shown in the main text in Fig.~\ref{fig:rp-attnbm}.

\begin{figure}[t]
\centering
\includegraphics[width=0.5\linewidth]{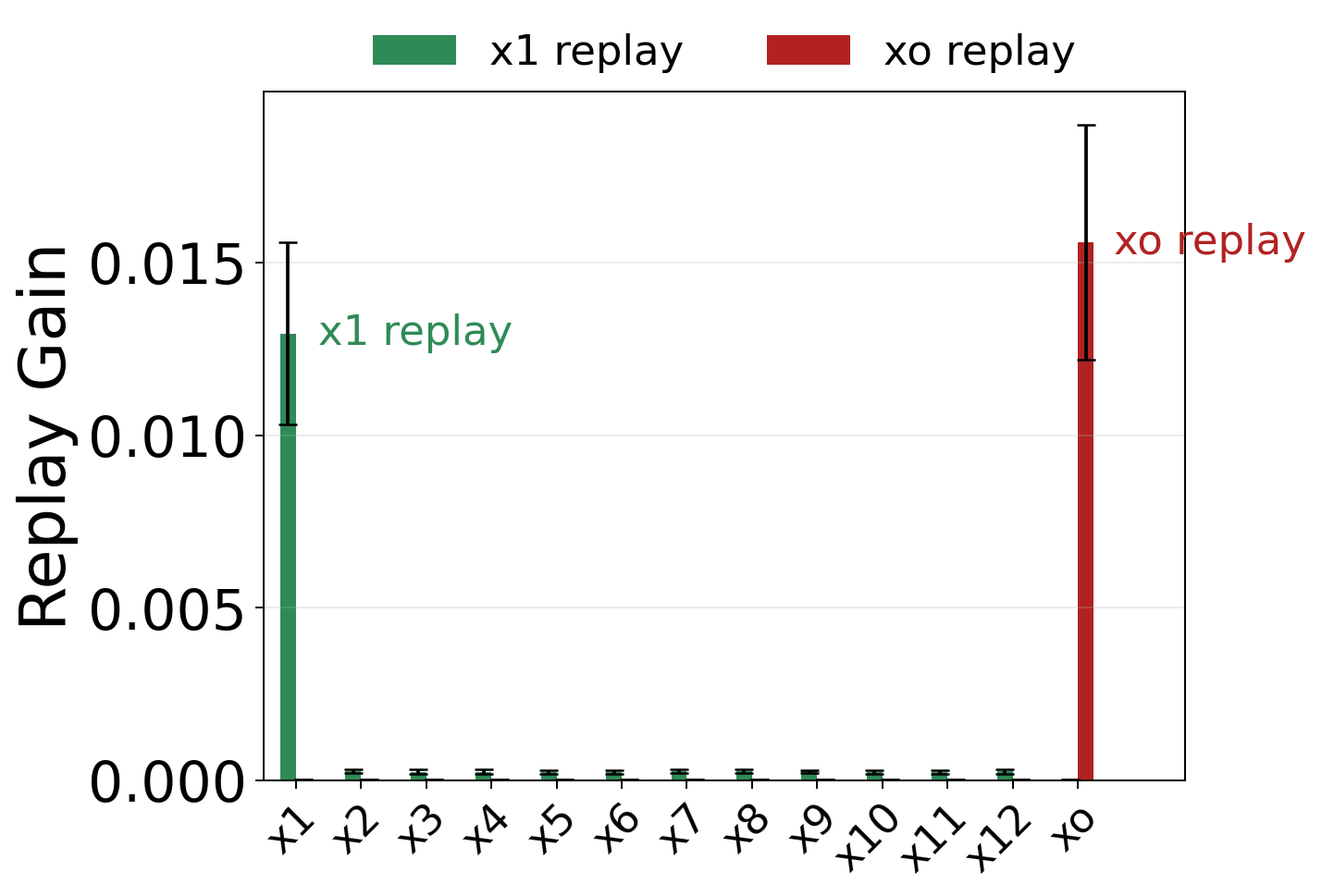}
\label{fig:rp-mhn}
\caption{Closed-form MHN corroboration of the replay ordering in Theorem~\ref{thm:replay}. The replay-gain reproduces the ranking $\Delta_{o\to o}>\Delta_{\mathrm{cl}_1\to \mathrm{cl}_1}\gg\Delta_{\mathrm{cl}_1\to \mathrm{cl}_2}\gg\{\Delta_{\mathrm{cl}_1\to o},\Delta_{o\to \mathrm{cl}_1}\}$.}
\label{fig:replay-toy}
\end{figure}

\section{Domain-incremental learning experiment: Rotated MNIST}
\label{app:rotated-mnist}

The theoretical analysis of Sections~\ref{sec:theory1} and~\ref{sec:theory2} models continual learning as a small rotation $V$ of the dataset, which corresponds most directly to a \emph{domain-incremental} setting in which the same content is presented under a perturbation of the input distribution. The main-text experiments focus on the more widely used \emph{incremental-class-learning} benchmark with split CIFAR-10. To complement them, we report here the corresponding domain-incremental experiment on rotated MNIST -- the setup that most literally matches the rotated-dataset assumption underlying our theorems.

\paragraph{Setup.}
Task~1 is standard MNIST. Task~2 is the same MNIST images rotated by $30^\circ$, instantiating the rotated-dataset construction of Section~\ref{sec:theory1} on real images. We train a pixel-space DDPM from scratch on Task~1 and continue training on Task~2. After Task~1 we assign every Task-1 image a scalar Hopfield energy $E_\theta(x)$ via the log-sum-exp expression~(\ref{eq:mhn-energy}), using all Task-1 training images as stored memories. As in the main-text diffusion-model experiments, forgetting is measured by the per-sample reconstruction MSE $\mathcal{F}_{t^\star}(x)$ defined in Eq.~(\ref{eq:reconstruction-forgetting}). For the replay analysis we use the same three policies as in Section~\ref{sec:exp-replay} -- \emph{no replay} (baseline), energy-guided \emph{top-$K$} (high-energy buffer), and energy-guided \emph{bottom-$K$} (low-energy buffer) -- with an identical buffer size $K=5000$. Full hyperparameters are listed in Appendix~\ref{app:hyperparams}.

\paragraph{Forgetting indicator (Fig.~\ref{fig:rmnist-forgetting}).}
After Task~2 fine-tuning, per-sample reconstruction MSE is a monotone-increasing function of the Task-1 Hopfield energy, reproducing the same qualitative relationship observed for split CIFAR-10 in Fig.~\ref{fig:fg-sd} and Fig.~\ref{fig:fg-pix}. Higher-energy Task-1 images suffer larger reconstruction forgetting, exactly as predicted by Theorem~\ref{thm:forgetting}.

\begin{figure}[t]
\centering
\includegraphics[width=0.5\linewidth]{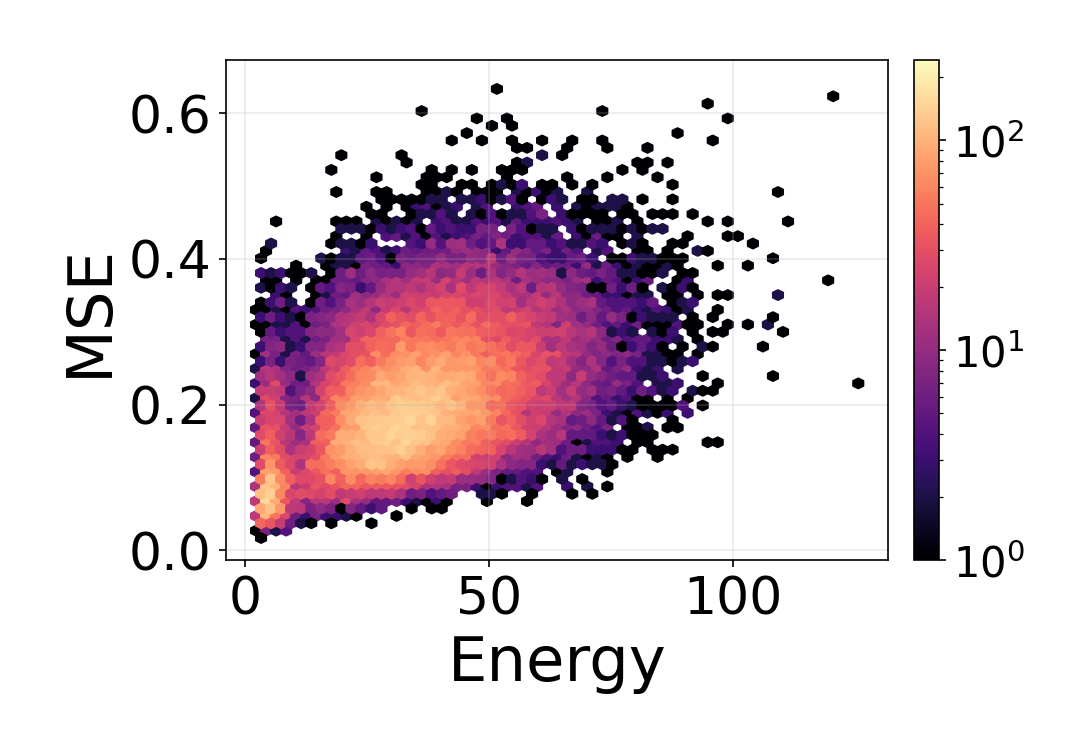}
\caption{Rotated MNIST (domain-incremental learning): per-sample reconstruction MSE $\mathcal{F}_{t^\star}(x)$ after Task~2 training versus internal Hopfield energy measured on Task~1, without replay. The same monotone energy--forgetting relation observed for split CIFAR-10 in Fig.~\ref{fig:forgetting} is recovered.}
\label{fig:rmnist-forgetting}
\end{figure}

\paragraph{Energy-guided replay (Fig.~\ref{fig:rmnist-replay}).}
The mean--range tradeoff predicted by Eq.~(\ref{eq:replay-master}) -- $\Delta_{r\to k}\sim\rho_{r\mid k}\,\Delta E_k^{\mathrm{fp}}$ -- is reproduced on rotated MNIST. The high-energy buffer (Fig.~\ref{fig:rmnist-replay}\hyperref[fig:rmnist-replay]{a}) yields a sharply peaked bin-mean reduction concentrated in the high-energy tail (Fig.~\ref{fig:rmnist-replay}\hyperref[fig:rmnist-replay]{c}), whereas the low-energy buffer (Fig.~\ref{fig:rmnist-replay}\hyperref[fig:rmnist-replay]{b}) gives a smaller per-sample reduction spread over a wider low-energy range. The bin-summed reduction (Fig.~\ref{fig:rmnist-replay}\hyperref[fig:rmnist-replay]{d}) is dominated by the high-energy tail under the high-energy buffer, exactly as in the CIFAR-10 results (Fig.~\ref{fig:replay-sd},~\ref{fig:replay-pix}). Together with the main-text experiments, this confirms that the energy-guided replay rule predicted by Theorem~\ref{thm:replay} is robust across both incremental-class and domain-incremental continual learning of diffusion models.

\begin{figure}[t]
\centering
\begin{subfigure}[b]{0.24\linewidth}
\centering
\includegraphics[width=\linewidth]{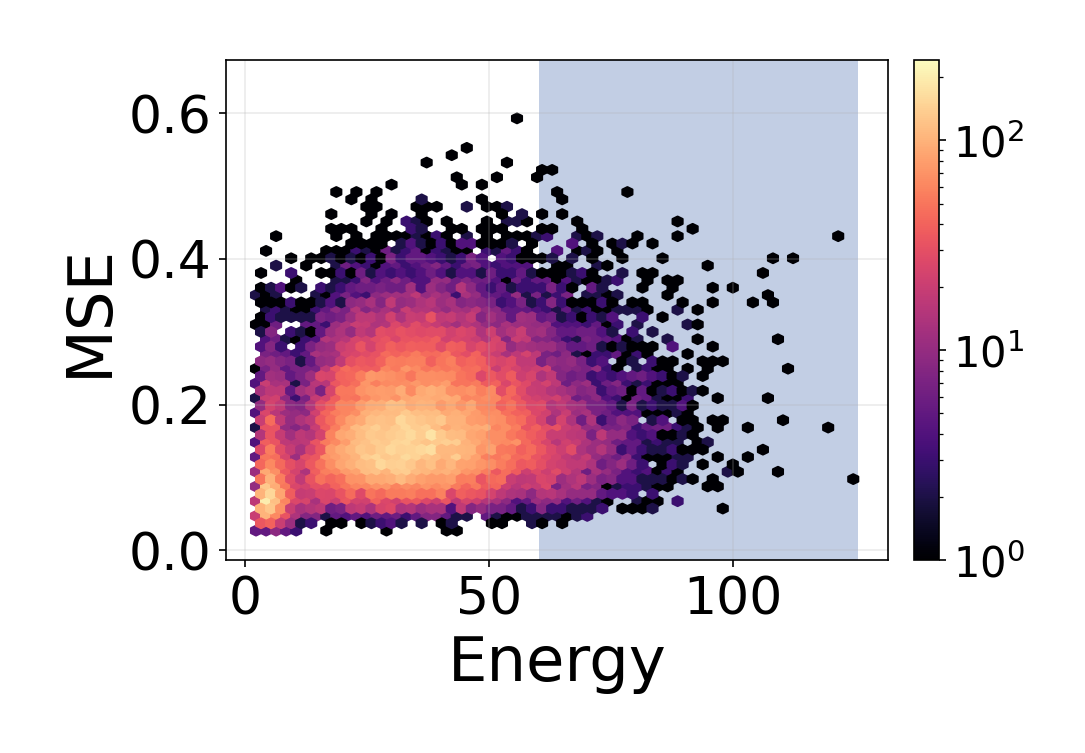}
\caption{High-energy replay.}
\label{fig:rmnist-rp-hi}
\end{subfigure}\hfill
\begin{subfigure}[b]{0.24\linewidth}
\centering
\includegraphics[width=\linewidth]{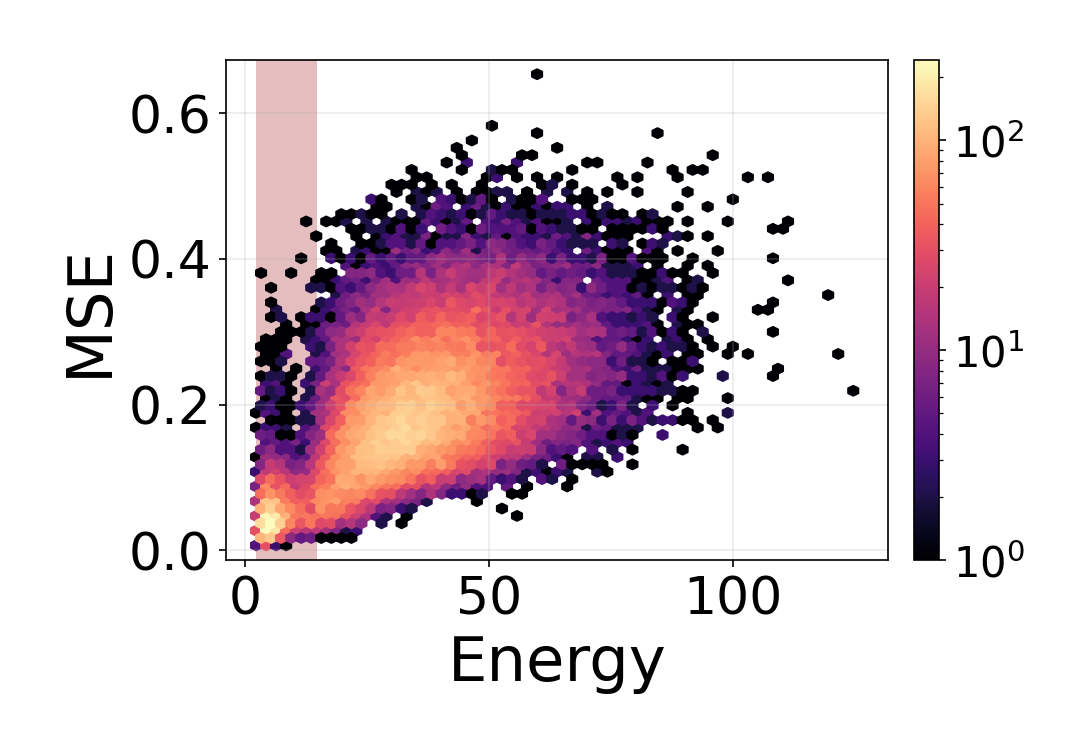}
\caption{Low-energy replay.}
\label{fig:rmnist-rp-lo}
\end{subfigure}\hfill
\begin{subfigure}[b]{0.24\linewidth}
\centering
\includegraphics[width=\linewidth]{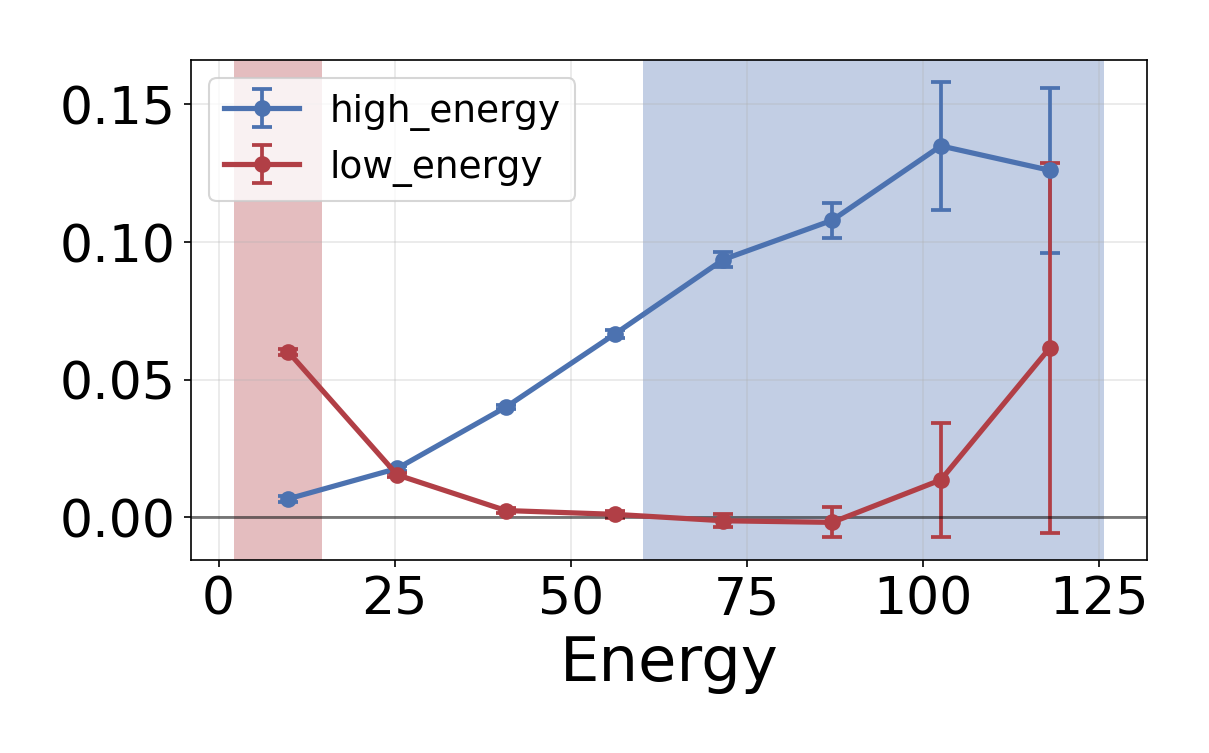}
\caption{Mean Reduction.}
\label{fig:rmnist-rp-mean}
\end{subfigure}\hfill
\begin{subfigure}[b]{0.24\linewidth}
\centering
\includegraphics[width=\linewidth]{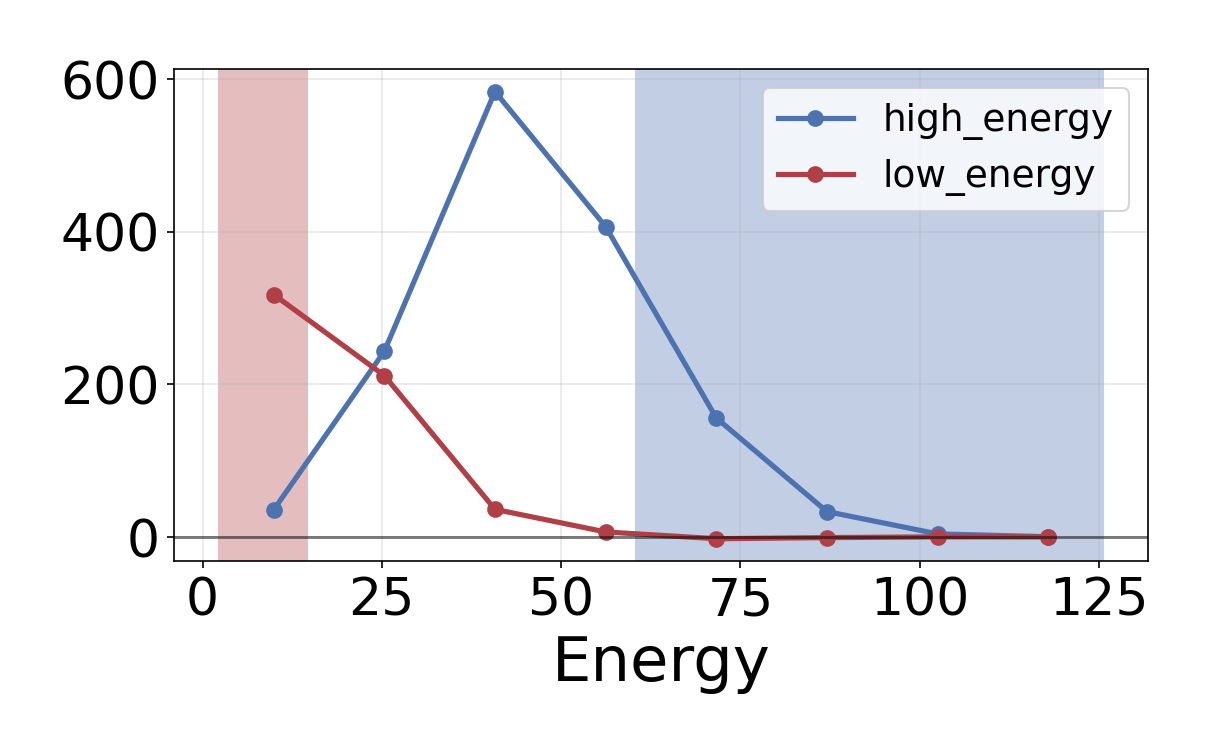}
\caption{Sum Reduction.}
\label{fig:rmnist-rp-sum}
\end{subfigure}
\caption{Energy-guided replay on rotated MNIST (domain-incremental learning). The same pattern as Fig.~\ref{fig:replay-sd} and Fig.~\ref{fig:replay-pix} is reproduced: (a) high-energy replay produces a sharply peaked, narrow rescue at high energy, (b) low-energy replay produces a smaller but broader rescue at low energy, and (c,d) the per-bin reductions follow the theoretical replay ordering in Eq.~(\ref{eq:replay-order}). The replay benefit is again dominated by the high-energy tail.}
\label{fig:rmnist-replay}
\end{figure}

\section{Reconstruction metric for diffusion-model forgetting}
\label{app:diffusion-reconstruction-metric}

The theory in Section~\ref{sec:energy-indicator} defines forgetting as an intrinsic change in the Hopfield energy,
\[
\Delta E(x)
=
E(x\mid\mathcal{X}_{\mathrm{new}})
-
E(x\mid\mathcal{X}_{\mathrm{old}}).
\]
For an explicit modern Hopfield network this quantity can be evaluated directly.
For a trained diffusion model, however, the corresponding implicit energy is not available as a normalized scalar function for each sample and each training stage.
Although the score determines the local reverse dynamics, measuring the exact sample-wise energy increase before and after Task-2 training would require reconstructing or integrating the implicit energy landscape itself.
We therefore use a reconstruction-based quantity as an operational proxy for intrinsic forgetting.

For each Task-1 image $x$, we first corrupt it to a fixed diffusion timestep $t^\star$ and then run the reverse process of the model obtained after Task-2 training.
Let $\hat{x}_{0,t^\star}$ denote the resulting reconstruction.
We define the reconstruction forgetting score as
\[
\mathcal{F}_{t^\star}(x)
=
\|x-\hat{x}_{0,t^\star}\|_2^2 .
\]
In Stable Diffusion experiments, the image is first encoded into the VAE latent space, noise is added to the latent, the UNet denoises the latent trajectory, and the final latent is decoded back to pixel space.
We report the pixel-space reconstruction MSE in the main figures.
In the pixel-space diffusion experiments, the same procedure is performed directly in pixel space.
When multiple stochastic reconstructions are used, we average the MSE over generations.

This metric should be interpreted as a proxy rather than an exact estimator of $\Delta E(x)$.
If Task-2 training raises or deforms the implicit basin supporting a Task-1 sample, then a noisy version of that sample is less likely to be mapped back to the original point by the post-Task-2 reverse dynamics.
This leads to a larger reconstruction error.
Conversely, samples that remain well supported by the post-Task-2 landscape should remain easier to reconstruct.
Thus, while $\mathcal{F}_{t^\star}(x)$ is not itself an energy difference, it probes the same local loss of support that intrinsic forgetting is intended to capture.


\section{Hyperparameters}
\label{app:hyperparams}

\paragraph{Synthetic MHN experiment.}
$N=12$ cluster memories, $d=50$, cosine $c=0.35$, inverse temperature $\beta=8$, rotation size $\varepsilon=0.02$, $n_{\text{rot}}=2000$ random antisymmetric $\Omega$, fixed-point tolerance $10^{-13}$ (\texttt{scripts/mhn\_equal\_angler\_replay.py}).

\paragraph{Synthetic MHN-BM experiment.}
$N=12$ cluster memories, $d=50$, $c=0.35$, $\beta=16$ (consistent in loss, update, and energy), data scale $2.0$, rotation $\varepsilon_{\mathrm{rot}}=0.02$, $n_{\mathrm{hidden}}=N+1$, Adam with learning rate $0.02$, batch size $64$.

\paragraph{Stable Diffusion split CIFAR-10.}
Backbone: \texttt{runwayml/stable-diffusion-v1-5}; only the UNet is fine-tuned, VAE frozen. Task~1 classes: bird, cat, deer, dog, horse. Task~2 classes: airplane, automobile, frog, ship, truck. 25{,}000 samples per task, 32$\times$32 images. Replay buffer $K=5000$; replay selection $\in\{\text{no replay, random, high\_energy}\}$. Forgetting metric: reconstruction MSE at $t=0.8$.

\paragraph{Pixel-space diffusion split CIFAR-10.}
PixelCNN-UNet trained from scratch in pixel space ($32\times 32$ RGB), ``standard'' channels $(128,256,512)$ ($\sim$45M params). Same living/non-living split as the Stable Diffusion setup, $25{,}000$ samples per task. AdamW (lr $2\times 10^{-4}$, weight decay $10^{-4}$, $500$ warmup steps), fp16, batch $512$, $2000$ epochs/task, EMA decay $0.9999$. Linear noise schedule ($T=1000$, $\beta\in[10^{-4},2\!\times\!10^{-2}]$), $50$ inference steps. Replay buffer $K=5000$; replay selection $\in\{\text{no replay, random, high\_energy, low\_energy}\}$. Forgetting metric: reconstruction MSE at $t=400$.

\paragraph{Pixel-space diffusion rotated MNIST.}
Same PixelCNN-UNet adapted to $28\times 28$ grayscale. Two tasks defined by \texttt{rotation\_angles}~$=[0^\circ,30^\circ]$. Optimizer, EMA, and noise schedule identical to the pixel-space CIFAR-10 setup; batch $256$ for $200$ epochs/task, no augmentation. Replay buffer $K=5000$; replay selection $\in\{\text{no replay, random, high\_energy, low\_energy}\}$. Forgetting metric: reconstruction MSE at $t=400$.

\end{document}